\documentclass{article}
\usepackage{amssymb, amsfonts, amsthm, amsmath, amsbsy, latexsym, color,bbm,url}
\usepackage{graphics,graphicx}
\usepackage{hyperref} 
\usepackage{mathtools}
\usepackage{caption}
\usepackage{subcaption}
\usepackage[dvipsnames]{xcolor}
\usepackage{fullpage}
\usepackage{pifont}

\newcommand{\bR}{\mathbb{R}}

\newcommand{\bZ}{\mathbb{Z}}

\newcommand{\bL}{\mathbf{L}}
\newcommand{\bC}{\mathbf{C}}
\newcommand{\bW}{\mathbf{W}}

\newcommand{\rar}{\rightarrow}

\newcommand{\ip}[2]{\left<#1,#2\right>}
\newcommand{\norm}[1]{\left\Vert#1\right\Vert}
\newcommand{\abs}[1]{\left\vert#1\right\vert}

\newcommand{\beq}{\begin{equation}}
\newcommand{\eeq}{\end{equation}}
\newcommand{\GL}{\operatorname{GL}}

\newcommand{\SE}{\operatorname{SE}}
\newcommand{\AGL}{\operatorname{AGL}}
\newcommand{\SO}{\operatorname{SO}}
\newcommand{\trace}{\operatorname{tr}}
\newcommand{\dist}{\operatorname{dist}}
\newcommand{\disim}{\operatorname{disim}}

\newcommand{\supp}{\operatorname{supp}}

\newcommand{\ournet}{TraPNet}

\theoremstyle{plain}
\newtheorem{thm}{Theorem}[section]
\newtheorem{cor}[thm]{Corollary}
\newtheorem{prop}[thm]{Proposition}

\newtheorem*{thm*}{Theorem}
\newtheorem*{cor*}{Corollary}
\newtheorem*{prop*}{Proposition}

\theoremstyle{definition}
\newtheorem{defn}[thm]{Definition}
\newtheorem{lem}[thm]{Lemma}
\newtheorem{ex}[thm]{Example}
\newtheorem*{defn*}{Definition}
\newtheorem*{lem*}{Lemma}
\newtheorem*{ex*}{Example}
\newtheorem*{aside*}{Aside}
\newtheorem{remark}[thm]{Remark}

\newtheorem{example}[thm]{Example}
\newtheorem*{nota*}{Notation}

\definecolor{MichiganBlue}{HTML}{00274C}
\definecolor{MichiganMaize}{HTML}{FFCB05}

\title{Transversal Pooling Neural Networks}
\author{
Emily J.\ King\thanks{
Department of Mathematics, Colorado State University,
Fort Collins, CO, USA;
Corresponding author: emily.king@colostate.edu
}
\and
Dustin G.\ Mixon\thanks{
Department of Mathematics and Translational Data Analytics Institute,
The Ohio State University, Columbus, OH, USA
}
\and
Michael Perlmutter\thanks{
Department of Mathematics, Boise State University,
Boise, ID, USA
}
\and
Lander Ver Hoef\thanks{
Cooperative Institute for Research in the Atmosphere,
Colorado State University, Fort Collins, CO, USA
}
}
\date{}

\begin{document}

\maketitle

\begin{abstract}
Many learning tasks require stability to small transformations while retaining sensitivity to larger ones.
We introduce \emph{transversal pooling neural networks}, which generalize spatial max pooling to affine group actions.
We establish equivariance to a chosen subgroup and derive explicit stability bounds for individual pooled wavelet coefficients under affine perturbations of the input. 
Experimentally, we demonstrate the utility of our networks in low-data environments and for predicting tropical cyclone intensification. 
\end{abstract}

\section{Introduction}

A feedforward neural network is a composition of affine linear and nonlinear maps that converts raw input data $f$ into a hidden representation $\Theta(f)$ that facilitates downstream tasks like classification or regression. 
Ideally, this hidden representation captures relevant information while being stable to uninformative perturbations. 
For example, consider the computer vision task of recognizing an object in an image.
Since translating an image does not change the object, it should not change the hidden representation. 
That is, we want the desired hidden representation to be \emph{invariant} to the action of the translation group:
\[
 \Theta( g \cdot f) = \Theta (f),   
\]
where $g \cdot f$ denotes the result of transforming the image $f$ by the translation $g$.

Here, we consider the extension of this property to other groups of affine operators, e.g., groups that contain dilations, shears, or rotations and, when appropriate, we seek for invariance to only hold locally. 
Accordingly, our work is motivated by the following considerations: 
\begin{enumerate}
    \item[(a)] in practice, full group invariance can discard relevant information;
    \item[(b)] the standard max pooling nonlinearity in modern neural networks requires careful adaptation to accommodate general affine transformations; and
    \item[(c)] 
    we need quantitative stability guarantees under affine perturbations.
\end{enumerate}
Here, (a) is a statement about the learning task, while (b) and (c) concern the architecture we wish to use.

As a simple example of (a), consider the task of identifying a handwritten digit~\cite{lecun1998mnist}.
Given the leftmost `6' in  Figure~\ref{fig:affnist}, we take inspiration from affNIST~\cite{tieleman2013affnist} by applying affine transformations close to the identity to generate the middle three `6's; any successful machine learning algorithm should recognize each of these as a `6.'
However, we probably want to classify the rightmost digit, which results from a $180$-degree rotation, as a `9.' 
This suggests that the hidden representation should be stable to \emph{small} rotations, which you might call \emph{approximate invariance}; interestingly, certain biological neurons are known to exhibit such approximate invariance~\cite{logothetis1995shape}.
For an application in atmospheric science, consider the task of predicting whether a tropical cyclone will undergo rapid intensification.
In this setting, storm motion introduces asymmetries that make the orientation informative.
This motivates approximate invariance to small rotations while allowing sensitivity to larger ones.

\begin{figure}[t]
    \centering
    \includegraphics[width=0.75\linewidth]{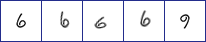}
    \caption{Four affine transformations of the leftmost digit `6.' The middle three images result from transformations that are close to the identity.  The rightmost image results from a $180$-degree rotation. This classification task (recognizing digits) is well suited for the transversal pooling architecture we introduce.}
    \label{fig:affnist}
\end{figure}

In this paper, we address (a) and (b) by introducing \emph{transversal pooling neural networks} (\ournet{}s), a family of architectures that provide approximate invariance to small group transformations while allowing sensitivity to larger ones. 
To address~(c), we derive explicit bounds on changes in individual pooled wavelet coefficients under affine perturbations of the input. 
Finally, we evaluate \ournet{}s on the tropical cyclone task described above, as well as on a synthetic model that was inspired by it. 
The synthetic experiments demonstrate improved performance with limited training data compared with convolutional neural network baselines trained using data augmentation. 
We provide a more detailed overview of our results in Section~\ref{sec: overview and road map}, but first, we discuss the relevant literature.

\subsection{Related Work}
\subsubsection{Convolutional Neural Networks and Pooling}

Many classical signal and image processing techniques---such as Canny edge detection~\cite{canny1986computational}---are powered by the local pattern matching provided by convolution.
In machine learning, these ideas appear in the form of \emph{convolutional neural networks (CNNs)}, which were introduced in~\cite{fukushima1980neocognitron,waibel1989phoneme,lecun1989backpropagation} to exploit translation symmetry.
(See~\cite{li2021survey} for a survey.)
A \emph{convolutional layer} computes the cross-correlation\footnote{In machine learning, the term \emph{convolution} typically refers to the notion of \emph{cross-correlation} in signal processing. We use the latter term to distinguish from the standard mathematical notion of convolution that involves time reversal.} between its input and learned templates to produce feature maps, followed by a pointwise nonlinear \emph{activation function}. 
A common activation function is the rectified linear unit (ReLU), which replaces each negative entry with zero.
In many CNN architectures, one or more convolutional layers are followed by a \emph{(local) pooling} layer.
In pooling layers, a neighborhood of values is replaced with a single value, such as their maximum (known as \emph{max pooling}) or their average.
See the left-hand side of Figure~\ref{fig:max pooling comparison} for an illustration.
There, the input $6\times4$ grid of numbers represents \emph{feature activations} (i.e., the output of a discrete convolution and activation function) and the output $3\times 2$ grid of numbers comes from pooling over $2 \times 2$ grids.
An early use of max pooling appears in a speech-recognition network that selects maximum responses over temporal search windows as part of a time-alignment procedure \cite{yamaguchi1990neural}.

\begin{figure}[t]
    \centering
\includegraphics[width=\linewidth]{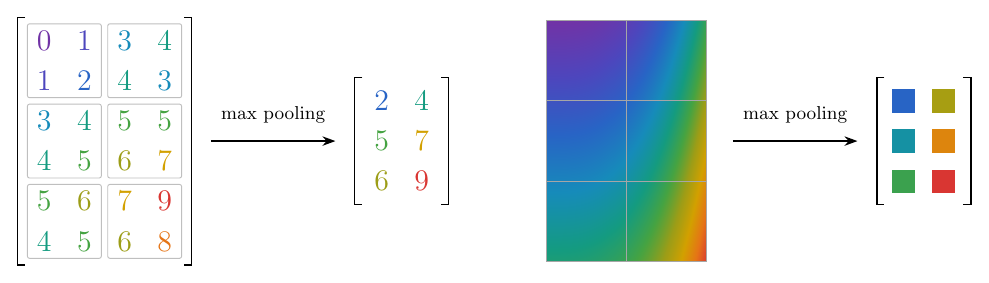}

    \caption{Left: A $6\times 4$ grid of feature activations passes through a max pooling layer that converts it into a $3\times 2$ grid of representative activations. Right: We apply max pooling to a continuous field of activations (specifically, $f(x,y) = 6x^3-14y$ for $x\in [0,2]$ and $y\in [0,3]$) to reduce it to a $3\times 2$ grid of representative activations. This variant of max pooling, which we denote by~$\Xi$, is developed in the more general context of affine groups and transversals in Section~\ref{sec:GEfilterbanks}.}
    \label{fig:max pooling comparison}
\end{figure}

In a conventional CNN for classification, the feature maps produced by successive convolutional and pooling layers are flattened into a vector and passed through affine linear maps (called \emph{fully connected layers}) and activation functions.
One may loosely think of the convolutional layers as extracting local features and the fully connected layers as processing the features to perform the intended task.
Pooling operators shrink the size of the input signal and thereby decrease computational cost by reducing the number of parameters needed in the fully connected layers. 
In addition, they retain coarse spatial information about feature responses while discarding their precise locations within each pooling region.
In some cases, max pooling introduces stability to translation~\cite{leterme2022shift}, but in other cases, it introduces sensitivity to small translations~\cite{zhang2019making}.
Additionally, while max pooling operators were critical in early CNNs trained to classify images, they are only proven to be helpful for certain problems and datasets~\cite{matoba2023benefits}.

\subsubsection{Wavelets and the Scattering Transform}\label{sec:wavelets}

In this paper, we model each input as a real-valued function $f\in\bL^2(\bR^d)$, and we extract features by taking inner products with transformed copies of a real-valued template $\psi\in\bL^2(\bR^d)$.
We transform the template by members of a closed subgroup $G$ of the affine group $\AGL(\bR^d) = \GL(\bR^d) \ltimes \bR^d$ according to the usual quasiregular representation.
Namely, for $g=(A,b) \in \AGL(\bR^d)$,
our transformed templates take the form
\begin{equation}\label{eqn:quasireg}
(g \cdot \psi)(x) 
= \frac{1}{\abs{\det(A)}^{1/2}} \psi(A^{-1} (x-b)).
\end{equation}
Once we fix a template $\psi$ and a group $G$ of transformations, the first layer of our network converts the raw data $f$ into the $G$-indexed collection $\Psi (f): G \rar \bR$ of features defined by
\begin{equation}\label{eqn:Psi}
\Psi(f)(g) = \ip{f}{g \cdot \psi}_{\bL^2(\bR^d)}.
\end{equation}
This map $\Psi$ goes by various names in the wavelet theory literature, including the \emph{continuous wavelet transform} and the \emph{voice transform} (see, e.g.,~\cite{grossmann1985transforms,feichtinger1989banach,heil1989continuous,ali2000coherent,fuehr2005abstract}).
Continuous wavelet transforms (or the discrete subsampling thereof) with preassigned templates (called \emph{wavelets}\footnote{Many authors require a ``wavelet'' to satisfy an admissibility condition; see the brief discussion in Section~\ref{sec:GEfilterbanks}. Here, we simply mean a template acted upon by the quasiregular representation of (some closed subgroup of) the affine group.})  
have proven themselves over the last half century to be a powerful tool in image and signal processing and related applications~\cite{daubechies1992ten,benedetto1993wavelets,unser1996review}.

To emphasize the relationship with convolutional neural networks, we observe that the output of $\Psi$ may indeed be expressed in terms of convolutions.
Fix $\psi \in \bL^2(\bR^d)$ and denote $\tilde{\psi}(x) = \psi(-x)$.
Then for any $f \in \bL^2(\bR^d)$ and $(A,b) \in G$, we have
\begin{align*}
    \Psi(f)(A,b) &= \int_{\bR^d} f(x)  \frac{1}{\abs{\det(A)}^{1/2}} \psi(A^{-1}(x -b)) \, dx\\
    &= \int_{\bR^d}\frac{1}{\abs{\det(A)}^{1/2}}  \tilde{\psi}(A^{-1}y) f(b-y) \, dy \\
     &= f \ast ((A,0) \cdot \tilde{\psi}) (b),
\end{align*}
where $\ast$ denotes the standard Euclidean convolution.
Thus, the output of $\Psi$ is obtained by evaluating a family of ordinary convolutions at the affine shifts that arise in $G$.

Due to their widespread use in image and signal processing, wavelet transforms are natural objects to use in convolution neural network architectures. 
\emph{Scattering transforms} extract features by cascading fixed wavelet convolutions and pointwise modulus operations~\cite{mallat:scattering2012} and then applying an averaging-type operation (typically low-pass filtering). 
They approach full translation invariance as the scale of the averaging increases and also satisfy stability bounds for small smooth deformations.
Furthermore, \cite{mallat:scattering2012} shows that this construction can be extended to have desirable symmetries with respect to groups of rotations, in addition to the translation group.
The construction and analysis of the scattering transform have subsequently been extended to more general classes of filters \cite{oyallon:scalingScattering2017,CZAJA2019149,wiatowski:frameScat2015,wiatowski:mathTheoryCNN2018,eickenberg:scatMoleculesJCP2018,czaja2024translation} as well as to graphs \cite{zou2020graph,gama:diffScatGraphs2018,gao2019geometric}, manifolds \cite{perlmutter2020geometric} and other more general domains \cite{chew2024geometric} as part of the growing field of geometric deep learning \cite{bronstein2017geometric,bronstein2021geometric}. 

One may also integrate wavelet transformations into trained neural networks.
Some approaches include creating learned neural network components which emulate multiresolution analysis~\cite{rodriguez2020deep}, using trained wavelets to pool~\cite{wolter2021adaptive}, mixing (untrained) scattering layers and trained CNN layers \cite{cotter2019learnable}, and parameterizing scattering-like neural network architectures to allow modification of the wavelet transforms based on the data~\cite{gauthier2022parametric}.

We note that our theoretical analysis takes place in  $\bL^2(\bR^d)$, whereas in practice, neural networks are applied to finite-dimensional data (e.g., images with finitely many pixels).
Accordingly, we conduct numerical experiments in Section~\ref{sec: continuity of pooling} that demonstrate how certain ideas in our theory transfer to the finite-dimensional setting. 
This follows a well-established tradition in wavelet theory in which results for functions in $\bL^2(\bR^d)$ help explain the behavior of digital implementations.
For example, the structure of \emph{multiresolution analyses} over $\bL^2(\bR)$ is intimately tied to filter banks for finite data~\cite{mallat1989multiresolution}.
Additionally, suitable shearlet systems achieve near-optimal approximation rates for \emph{cartoon-like functions}~\cite{guo2007optimally,kutyniok2011compactly}, which helps explain why shearlets perform well on digital images with similar structure.

\subsubsection{Invariant and Equivariant Learning}

Wavelet-inspired neural network architectures are successful in part because they exploit symmetries.
Before explaining this, we first distinguish invariance, equivariance, and group-Lipschitz continuity.

\begin{defn}
Let $G$ be a group acting on a set $X$. 
We say that a function $f:X\to Y$ is
\begin{itemize}
    \item[(a)] \emph{invariant (to $G$)} if $f(g \cdot x) = f(x)$ for all $x \in X$, $g \in G$,
    \item[(b)] \emph{equivariant\footnote{
    Some authors use ``covariance'' for what we call ``equivariance''; see, for example,~\cite{sifre2013rotation}.} (with respect to $G$)} if $G$ also acts on $Y$ and $f(g \cdot x) = g \cdot f(x)$ for all $x \in X$, $g \in G$, and
    \item[(c)] \emph{group-Lipschitz (with respect to $G$)} if $G$ is a metric group with metric $d$ (or more generally a semimetric), $Y$ is a normed vector space, and for all $x \in X$ there exists $C_x > 0$ such that  $\norm{f(g \cdot x)-f(h\cdot x)} \leq C_x d(g,h)$ for all $g, h \in G$.
\end{itemize}
\end{defn}
In other words, if $f$ is invariant, then perturbing the input by any $g \in G$ does not change the output; if $f$ is equivariant, then modifying the input by $g$ affects the output in a predictable manner; and if $f$ is group-Lipschitz, then it is stable to transformations in $G$ that are close to the identity.
As a simple example of equivariance, consider the task of computing centroids of the digits in Figure~\ref{fig:affnist}. 
Once we know the centroid of the leftmost `6' and the transformation to the second `6,' then we may similarly transform the centroid to obtain the centroid of the transformed `6.'

There are many approaches in machine learning to incorporate invariance and equivariance, including enforcing invariance in a neural network architecture~(e.g., \cite{fasel2006rotation,dieleman2015classifying,cohen2016group,marcos2016learning,dieleman2016exploiting,laptev2016ti}), data augmentation~\cite{simard2003best}, steerable filters~\cite{weiler2019general,weiler20183d}, averaging outputs of networks to pick up features at each given $g$~\cite{ciregan2012multi,wu2015deep}, tangent propagation~\cite{simard1991tangent}, mapping inputs to ``canonical'' orientations~\cite{jaderberg2015spatial}, incorporating priors into restricted Boltzmann machines~\cite{schmidt2012learning,kivinen2011transformation,sohn2012learning}, and max filter banks~\cite{cahill2025group,mixon2026injectivity,mixon2023max}. Additionally, equivariant graph neural networks \cite{satorras2021n,batzner20223,han2022geometrically,duval2023hitchhiker,johnson2026vdwgnns} leverage natural symmetries with respect to  rigid motions and vertex permutations to improve performance on, e.g., molecule-related tasks (where each molecule is modeled as a geometric graph with vertices in  $\mathbb{R}^3$).
See also~\cite{della2019deep} and the references therein for further techniques to obtain equivariance to rigid body motions in machine learning models.

\subsubsection{The Geometric Deep Learning Blueprint}

Our transversal pooling architecture builds on networks that combine equivariant feature extraction with invariant pooling~ (e.g.,~\cite{fasel2006rotation,dieleman2015rotation,dieleman2015classifying,cohen2016group,marcos2016learning,dieleman2016exploiting,laptev2016ti,worrall2018cubenet,bekkers2018roto,romero2020wavelet,hong2022multiscale}).
This construction follows the \emph{geometric deep learning blueprint}~\cite{bronstein2021geometric}.
To obtain a network that is invariant to a closed subgroup $L \leq \GL(\bR^d)$, one first performs the wavelet transformation~$\Psi$ as described in~\eqref{eqn:Psi} with learned templates $\psi$ and with $G=L \ltimes \bR^d$.
This operation, often called a \emph{lifting convolution}~\cite{romero2020wavelet,bekkers2018roto}, maps functions on $\mathbb R^d$ to functions on $G$.
See the leftmost portion of Figure~\ref{fig:\ournet{} example} for an illustration.
Here, $f$ is the indicator function of a shape and $L \cong C_{12}$ is the group of rotations by multiples of 30~degrees.
Note that the output of $\Psi$ in this case is a data cube parameterized
by $L \ltimes \bR^d$, and furthermore, if we rotate the input by a multiple of 30~degrees, then the slices of the data cube are cyclically permuted, i.e., $\Psi$ is equivariant to $L \cong C_{12}$. 
After passing through an equivariant layer of the form $\Psi$, 
an architecture might perform \emph{group convolution} (as in~\eqref{eqn: upsilon}) in subsequent layers to preserve the equivariance before ``going invariant.''
Group convolution is closely tied to the requirement of equivariance: for feature spaces indexed by homogeneous spaces of a compact group, \cite{kondor2018generalization}~characterizes equivariant linear layers as generalized group convolutions. 
Equivariance can also be maintained by applying ReLU activations; \cite{gibson2024equivariant}~develops a piecewise linear representation theory to analyze how these nonlinearities interact with the decomposition of layers into irreducible representations.
To finally achieve invariance, one might perform $L$-invariant pooling by taking the pixelwise maximum across all slices in the data cube output of~$\Psi$.
Indeed, the maximum of a list is invariant to permutation, and more generally, any permutation-invariant pooling method would yield an $L$-invariant network; testing in \cite{hong2022multiscale} showed a marginal improvement from taking the maximum instead of the average.
Interestingly, $L$-invariant pooling also appears earlier in the literature~\cite{riesenhuber1999hierarchical} in the context of translation-scale-invariant complex cells in the cortex.

Why apply multiple equivariant layers before going invariant?
Because \emph{premature invariance} frequently degrades performance \cite{cohen2016group,coors2018learning,ver2023using}.
We note that two early versions of $L$-invariant networks used different strategies to combine features from different orientations: In~\cite{fasel2006rotation} the results are ``blurred'' over smaller subsets of rotations after each layer until the final pooling collapses all rotations into one element, and in~\cite{dieleman2015rotation} the feature maps under different rotations and reflections are concatenated before passing through fully connected layers.
We also note that~\cite{cohen2016group} introduces a general pooling framework that takes the maximum of a feature map over each left translate $gU$ of a fixed neighborhood $U$ of the identity, then subsamples by retaining the pooled values only at elements $g$ of a chosen subgroup of $G$.
This very general setup includes the transversal pooling of this paper as an example; however, they then specialize this general pooling structure to the setting where $U$ is a \emph{subgroup} of $G$ and subsample over a set of coset representatives.
This approach is in some sense orthogonal both to our setup and to the translation-based max pooling used in conventional CNNs.

\subsubsection{Symmetric Architectures versus Symmetric Training Sets}

The previous section follows the geometric deep learning blueprint~\cite{bronstein2021geometric} to construct a symmetric architecture.
As an alternative, one might use ideas from \emph{data augmentation} to compile a symmetric training set.
Here, we enlarge the training set to include examples that were transformed in ways that preserve their labels. 
(See, e.g.,~\cite{simard2003best} and references therein.)
For example, instead of only using the leftmost `6' in Figure~\ref{fig:affnist} to train a network, we might add the middle three `6's to the training set.
By planting this information in the training set, we can expect a vanilla neural network to learn the desired symmetries instead of carefully designing a symmetric architecture to accomplish the same task.
Not only is data augmentation easy to implement, it also allows one to achieve approximate invariance to modifications that are not readily modeled by affine group actions, such as nonrigid transformations like elastic deformations~\cite{simard2003best} or noninvertible transformations like cropping.
Still, learning symmetry through data augmentation can present some challenges.
Indeed, in studying generalization from partially observed group orbits, \cite{perin2025ability}~finds that performance suffers when architectures do not explicitly encode the relevant symmetry.

Furthermore, a symmetric architecture comes with a number of advantages:
\begin{enumerate}
  \item A symmetric architecture restricts the optimization search space, which can accelerate convergence and reduce the amount of training data required.
  \item By virtue of weight sharing, a symmetric architecture can carry more information through its hidden layers without increasing the number of trainable parameters.
  \item A symmetric architecture is inherently robust to any transformation included in its symmetry group.
\end{enumerate}
In texture-classification experiments with limited training data, \cite{marcos2016learning} found that their rotation-invariant architecture outperformed CNNs trained using rotational data augmentation; this is related to advantage~(1) above.
For advantage~(2), observe that a rotationally augmented CNN tasked with detecting edges would need to learn edge-detecting filters at many orientations, while a rotation-invariant network would only need to learn a single edge-detecting filter~\cite{laptev2016ti,della2019deep}.
For scattering architectures, the robustness in advantage~(3) can extend well beyond the enforced symmetry group. 
In~\cite{sifre2013rotation}, invariance to translations, rotations, and isotropic scaling is combined with deformation stability and learned linear projections to obtain approximate invariance to small shears and other deformations.

Restricting the architecture to enforce symmetry raises an important question about advantage (1): 
Does the smaller search space still contain good approximations to the functions needed for the task? 
Universal approximation results~\cite{ravanbakhsh2020universal} and quantitative approximation bounds~\cite{siegel2026quantitative} provide positive answers for several classes of invariant and equivariant networks. 
But these guarantees depend on the architecture: \cite{pacini2026universality}~identifies certain limitations of shallow equivariant networks, even when they exhibit the distinguishing power of more expressive models. 
See~\cite{blum2025if} for a broader discussion of the relationship between distinguishing power and expressiveness.

\subsection{Overview and Roadmap}\label{sec: overview and road map}

\begin{figure}[t]
    \centering
    \includegraphics[width=\textwidth]{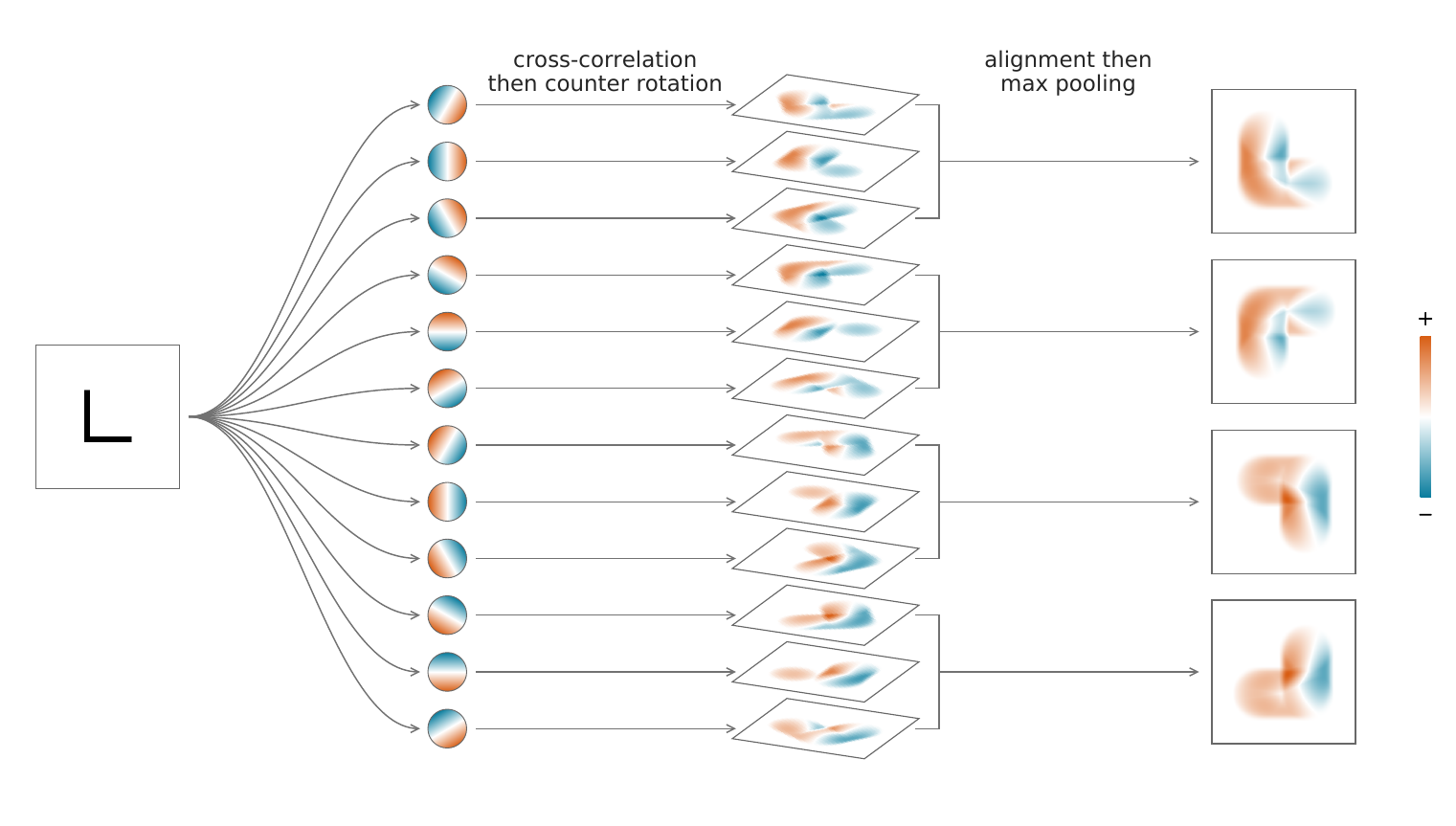}
    \caption{Illustration of a transversal pooling neural network. We process an input image $f$ by first taking its cross-correlation with 12~different rotated versions of a single template before counter-rotating to produce 12~slices of a data cube. This implements the map $\Psi$ with group $G=C_{12}\ltimes\mathbb{R}^2$ (though due to the counter rotation, it is parametrized as $\Psi(f)(A,Ab)$ rather than $\Psi(f)(A,b)$). Next, we batch the slices into neighboring triples, re-rotate the slices in each batch to align with the middle slice, and then compute their pointwise maximum. This max pooling operation corresponds to the map $\Xi$ with subgroup $\Lambda=C_4\ltimes\mathbb{R}^2$. Thanks to the counter-rotated parameterization of the intermediate data cube, rotating the input image by 30~degrees (resp.\ 90~degrees) has the effect of cyclically permuting the 12~intermediate slices (resp.\ four output slices).}
    \label{fig:\ournet{} example}
\end{figure}

\paragraph{Considerations and \ournet{}s.}
Let us return to our motivating list from the beginning of the paper:
\begin{enumerate}
    \item[(a)] in practice, full group invariance can discard relevant information;
    \item[(b)] the standard max pooling nonlinearity in modern neural networks requires careful adaptation to accommodate general affine transformations; and
    \item[(c)] 
    we need quantitative stability guarantees under affine perturbations.
\end{enumerate}
Considerations~(a) and~(b) motivate us to define \emph{transversal pooling neural networks (\ournet{}s)}; see Figure~\ref{fig:\ournet{} example} for an illustration.
Throughout, we let $G$ denote a closed subgroup of the affine general linear group $\AGL(\mathbb{R}^d)$. 
The first layer of a \ournet{} is the wavelet transform $\Psi$ with respect to $G$, defined in~\eqref{eqn:Psi}.
Next, we implement a pooling layer $\Xi$ which pools with respect to a set $H$ of right coset representatives of a closed subgroup $\Lambda \leq G$, or more generally a transversal of $G$:
\[
     \Xi(F)(\lambda) = \sup_{h\in H} F(\lambda h),
     \qquad \lambda\in\Lambda.
\]
 Accordingly, we call $\Xi$ a \emph{transversal pooling} layer.
Given a partition (such as the set of right cosets of a subgroup), a \emph{transversal} is a set containing one element from each member of the partition (such as a set of right coset representatives).
For example, a transversal of the cosets of the integer lattice $\bZ^2 \leq \bR^2$ is the unit square $[0,1)^2$; under this view, the translational max pooling depicted in Figure~\ref{fig:max pooling comparison} is transversal pooling.
For consideration~(c), we prove explicit group-Lipschitz bounds for the composition of the wavelet transform $\Psi$ and transversal pooling $\Xi$ (Theorem~\ref{thm: continuity of xi} and Proposition~\ref{prop: continuity of xiw}) using seemingly new bounds (Theorem~\ref{prop:contest}) on the continuity of the quasiregular representation of $\AGL(\bR^d)$.
After $\Psi$ and $\Xi$, we may compose further group convolutions and activation functions, resulting in a \ournet{} which is equivariant to $\Lambda$ (Corollary~\ref{cor: composition}). 

\paragraph{Contributions.}
The primary contributions of this paper are:
\begin{enumerate}
    \item 
    We introduce a family of neural network architectures, i.e., transversal pooling neural networks (\ournet{}s), that enforce a notion of approximate invariance that emerges in applications.
    \item 
    We prove convergence rates in $\bL^2(\bR^d)$ for the action of the quasiregular (wavelet) representation of subgroups of $\AGL(\bR^d)$.
    \item 
    Focusing on affine subgroups, we prove group-Lipschitz bounds for a variety of neural networks, including CNNs, group-equivariant networks, and \ournet{}s.
    \item 
    We numerically test \ournet{}s on tropical cyclone and synthetic data, demonstrating the effectiveness of this new architecture in data-limited environments.
\end{enumerate}

\paragraph{Roadmap.}
The remainder of the paper is organized as follows.  
In Section~\ref{sec:groupequiv}, we introduce the basic definitions and mathematical background to define \ournet{}s and transversal pooling, proving that they are equivariant to $\Lambda$ in Corollary~\ref{cor: composition}.
The main goal of Section~\ref{sec:transtheory} is to prove that in many cases of interest, \ournet{}s are group-Lipschitz with respect to $G$.
Specifically, in Theorem~\ref{thm: continuity of xi}, we give explicit bounds on 
\[
|\Xi(\Psi(g\cdot f))(\lambda)-\Xi(\Psi(f))(\lambda)|.
\]
The proof of Theorem~\ref{thm: continuity of xi} relies on Theorem~\ref{prop:contest}, which gives a bound on $\norm{T_b D_A \psi - \psi}_{\bL^2(\bR^d)}$ that, to the best of our knowledge, has not appeared elsewhere in the wavelet literature.
Then, in Section~\ref{sec: Group lipschitz pooling}, we apply Theorem~\ref{thm: continuity of xi} to a number of different examples of $G$ and $\Lambda$.
Notable examples include
\begin{itemize}
\item
Example~\ref{ex: cont trans}, which shows that the model of standard CNN max pooling is group-Lipschitz with bounds that depend on the size of the pooling domain, and 
\item
Example~\ref{ex:rot4to2}, which gives a Lipschitz bound for the full rotation-invariant pooling that appears in~\cite{cohen2016group} and related papers.
\end{itemize}
In Section~\ref{sec:affine4realz}, we relax the  mathematical assumptions of Section~\ref{sec:transtheory} to cover the case where $\Lambda$ is the Cartesian product of a dilation subgroup and a translation subgroup (Proposition~\ref{prop: continuity of xiw}).
This approach more closely aligns with the numerical implementation of both discrete wavelets and invariant neural networks; it is also the reason we refer to $\Xi$ as ``transversal pooling'' rather than ``coset-representative pooling.'' 

In Section~\ref{sec: experiments}, we validate our methods on real and synthetic datasets. 
We consider a synthetic dataset of ellipses, which we consider to be a toy model of tropical cyclones (TCs), as they tend to exhibit an approximately elliptical shape.
We use this synthetic dataset to numerically demonstrate our theory (Section~\ref{sec: continuity of pooling}) and to illustrate the effectiveness of \ournet{}s in data-limited environments. 
Finally, Section~\ref{sec: experiments real world} uses \ournet{}s to predict whether a TC will undergo rapid intensification, a machine-learning task described in greater detail in \cite{lagerquist2025identifying}.

\section{Transversal Pooling Neural Networks}\label{sec:groupequiv}
In this section, we present the architecture of transversal pooling neural networks.

\subsection{Setup and Notation}
\label{sec: setup and notation}

Let $\GL(\mathbb{R}^d)$ denote the general linear group over $\mathbb{R}^d$, and consider the affine general linear group
defined by 
$$
\AGL(\mathbb{R}^d)=\GL(\mathbb{R}^d)\ltimes\mathbb{R}^d=\{(x \mapsto Ax+b): A\in \GL(\mathbb{R}^d), b\in \mathbb{R}^d\}.
$$
 Depending on the calculation, we may use $(A,b)$ to denote an element of $\AGL(\mathbb{R}^d)$, emphasizing that the underlying set is $\GL(\mathbb{R}^d)\times\mathbb{R}^d$.
 Alternatively, when knowledge of the components is not necessary, we may use $g$ for an element of $\AGL(\mathbb{R}^d)$.
  We note that $\AGL(\mathbb{R}^d)$ is a group under composition with $(A,b) \cdot (\tilde{A},\tilde{b}) = (A\tilde{A},b + A\tilde{b})$ and $(A,b)^{-1} = (A^{-1},-A^{-1}b)$, and $\AGL(\bR^d)$ acts on $\bR^d$ via $x \mapsto Ax+b$ for $(A,b) \in \AGL(\bR^d)$ and $x \in \bR^d$.
  The action of $\AGL(\bR^d)$ on $\bR^d$ induces an action on $\bL^2(\bR^d)$ known as the \emph{quasiregular} or \emph{wavelet representation}, which was defined above  in~\eqref{eqn:quasireg}, 
  namely
\[
(g\cdot \psi)(x)= \frac{1}{|\det(A)|^{1/2}}\psi(A^{-1}(x-b)).
\]
A simple calculation shows that this action is unitary on $\bL^2(\bR^d)$; in fact, this defines a strongly continuous unitary representation (see, e.g.,~\cite{fuehr2005abstract,ali2000coherent}). 
When we wish to emphasize connections to wavelet theory, we let $D_A$ denote the dilation operator, $D_Af(x)=(A,0)\cdot f(x) = \abs{\det A}^{-1/2}f(A^{-1}x)$ and $T_b$ denote the translation operator, $T_bf(x)=(I,b)\cdot f(x) = f(x-b)$.
Then, $T_b D_A f= (A,b) \cdot f$.
Below in Theorem~\ref{prop:contest}, we will prove a convergence rate of $T_bD_A \psi \rar \psi$ in $\bL^2$ norm as $(A,b) \rar (I,0)$.

Let $G \leq \AGL(\bR^d)$ be closed.
We let $\mu$ denote a left Haar measure on $G$, that is, a nonzero regular Borel measure that is finite on compact sets and satisfies $\mu(g\cdot S)=\mu(S)$ for every $g\in G$ and every Borel $S\subseteq G$, where $g\cdot S=\{g\cdot s:s\in S\}$. Such a measure is unique up to multiplication by a positive constant (see, e.g., \cite{Folland2016,diestel2014joys} for further background).
For $1\leq p<\infty$, we let $\bL^p(G)$ denote the vector space of $\mu$-measurable functions\footnote{As usual, we identify functions that only differ on sets of measure zero.} $F:G\rar\mathbb{R}$ with norm 
$$
\|F\|_{\bL^p(G)}:=\left(\int_G |F(g)|^pd\mu\right)^{1/p}<\infty,
$$
with the usual adjustment for $p = \infty$.
Further, we define $\bC_b(G) = \bL^\infty(G) \cap \bC(G)$, i.e., the bounded continuous functions on $G$.

Define a group action of $G$ on $\bL^p(G)$ using the \emph{regular representation}, i.e., 
\begin{equation}\label{eqn: outer action}
(g\cdot F)(\widetilde{g}) = F(g^{-1}\widetilde{g})
\end{equation} 
for $g,\widetilde{g}\in G$. Further, we note that $\|g\cdot F\|_{\bL^p(G)}=\|F\|_{\bL^p(G)}$ for all $g\in G,$ $F\in \bL^p(G)$, $1\leq p\leq \infty$, which follows from the definition of the Haar measure.
Throughout, we will assume that functions are real-valued, although much of our framework can be extended to complex-valued functions as discussed in Remark~\ref{rem: extend to complex}.

For $R>0$, we let $B(0,R)= \{x\in\mathbb{R}^d: \|x\|_2\leq R\}$ denote the Euclidean ball with radius $R$ centered at the origin and let $\omega_d$ denote the volume of the unit ball $B(0,1)$. For a function $f$ defined on $\mathbb{R}^d$, we let  $\supp(f)$ denote its essential support, i.e.,  the smallest closed set $\Omega$ such that $f=0$ almost everywhere outside of $\Omega$.

A function $f: \bR^d \rar \bR$ is \emph{Lipschitz continuous} if there exists a $C >0$ such that
\[
\abs{f(x)-f(y)} \leq C \norm{x-y}_2
\]
for all $x, y \in \bR^d$.
When $f$ is Lipschitz continuous, we let $\|f\|_{L}$ denote its optimal Lipschitz constant, i.e.,
\[
\norm{f}_L = \sup_{x \neq y} \frac{\abs{f(x)-f(y)}}{\norm{x-y}_2}.
\]
Note that $\|\cdot\|_L$ is a seminorm (which explains the notation). 
For a matrix $A \in \GL(\bR^d)$, we will let $\det(A)$ denote its determinant and let $\|A\|_{2\rar 2}$ denote its operator norm on $\ell^2$, i.e., $\|A\|_{2\rar 2}=\sup_{x\neq 0}\frac{\|Ax\|_2}{\|x\|_2}$. Further, we let $e^A$ denote the matrix exponential. 
Finally, we let $\bW^{1,2}(\mathbb{R}^d)$ denote the Sobolev space consisting of weakly differentiable functions such that 
$$
\int_{\mathbb{R}^d} \big(|f(x)|^2 + \|\nabla f(x)\|_2^2\big) dx <\infty.
$$

\subsection{Group Equivariant Filter Banks and Transversal Pooling}\label{sec:GEfilterbanks}
In this section, we will let $G$ be a closed subgroup of $\AGL(\mathbb{R}^d)$, let  $\Lambda$ be a closed subgroup of $G$, and assume that $G/\Lambda$ is compact.
Inspired by wavelets and scattering transforms, we  consider a cascade of three key operators, beginning with the continuous wavelet transform over $G$:
\[
\bL^2(\mathbb{R}^d) ~\xrightarrow{\quad\Psi\quad}~ \mathbf{C}_b(G) ~\xrightarrow{\quad\Xi\quad}~ \bC_b(\Lambda) ~\xrightarrow{\quad\Upsilon\quad}~ \bC_b(\Lambda).
\]
Here, $\Psi$ is the continuous wavelet transform (voice transform) defined in~\eqref{eqn:Psi}, $\Xi$ is a max pooling--type operator, and $\Upsilon$ is a group convolution operator.
In Section~\ref{sec: equivariant neural network architecturs}, we will introduce neural networks based on these operators, which may also use nonlinear activation functions $\sigma$ between the operators $\Psi$, $\Xi$, and $\Upsilon$. 
We first recall the definition of the voice transform (continuous wavelet transform) $\Psi$ from~\eqref{eqn:Psi}: for $f, \psi\in\bL^2(\mathbb{R}^d)$,
\[
\Psi(f)(g)= \langle f, g\cdot \psi\rangle_{\bL^2(\mathbb{R}^d)}=\langle f, \psi_g\rangle_{\bL^2(\mathbb{R}^d)},
\]
where we  denote $g \cdot \psi=\psi_g$.

In this work, we take the codomain of $\Psi$ to be $\mathbf{C}_b(G)$, the set of bounded, continuous functions defined on $G$, a choice motivated by the fact that the output of $\Psi$ is passed to the max pooling operator~$\Xi$. 
The fact that $\Psi(f)$ is bounded follows from an application of the Cauchy--Schwarz inequality:
\[
|\Psi(f)(g)|
=|\langle f, \psi_g\rangle_{\bL^2(\mathbb{R}^d)}|
\leq \|f\|_{\bL^2(\mathbb{R}^d)}\|\psi_g\|_{\bL^2(\mathbb{R}^d)}
=\|f\|_{\bL^2(\mathbb{R}^d)}\|\psi\|_{\bL^2(\mathbb{R}^d)},
\]
where the final equality follows from a change of variables.
Meanwhile, the continuity of $\Psi(f)$ follows from the fact that the quasiregular representation is strongly continuous.
Later, in Theorem~\ref{prop:contest}, we prove explicit rates of the strong continuity for the quasiregular representation. 

In many cases of interest, $\Psi$ maps $\mathbf{L}^2(\mathbb{R}^d)$ to $\mathbf{L}^2(G) \cap \bC_b(G)$ as either an isometric embedding (strong admissibility) or a bounded injective map (weak admissibility).
In fact, admissibility is often considered a requirement for $\psi$ to be a \emph{wavelet};
see~\cite{calderon1964intermediate,duflo1976regular,murenzi1989wavelet,LWWW02,fuehr2002continuous} for more information.

The following result shows that $\Psi$ is $G$-equivariant. 
As with Proposition~\ref{prop: upsilon equivariance} below (which establishes the equivariance of $\Upsilon$), it is well-known in abstract harmonic analysis (see e.g.,~\cite{duflo1976regular,grossmann1985transforms,feichtinger1989banach,Folland2016}), 
but we provide proofs in Appendix~\ref{app: proofs of standard props} in order to keep our work self-contained. 

\begin{prop}[Equivariance of $\Psi$]\label{prop: Psi Equivariant}
$\Psi$ is $G$-equivariant from $\bL^2(\mathbb{R}^d)$ to $\mathbf{C}_b(G)$,
i.e., for  all $f\in\mathbf{L}^2(\mathbb{R}^d)$  and $g\in G$, we have 
\begin{equation*}
g\cdot\Psi(f)=\Psi(g\cdot f),
\end{equation*}
where the actions of $G$ on $\bL^2(\mathbb{R}^d)$ and $\mathbf{C}_b(G)$  are defined as in~\eqref{eqn:quasireg} and~\eqref{eqn: outer action}.
\end{prop}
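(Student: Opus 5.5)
The plan is to unwind both sides pointwise at an arbitrary $\widetilde g\in G$ and use two facts: the quasiregular representation~\eqref{eqn:quasireg} is a unitary group action, and the regular representation~\eqref{eqn: outer action} has the form $(g\cdot F)(\widetilde g)=F(g^{-1}\widetilde g)$. No analytic estimates are needed, because boundedness and continuity of $\Psi(f)$ were already established above. So $\Psi(g\cdot f)$ and $g\cdot\Psi(f)$ both lie in $\bC_b(G)$, and it suffices to show that they agree pointwise.

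Fix $f\in\bL^2(\bR^d)$ and $g,\widetilde g\in G$. By~\eqref{eqn:Psi},
\[
\Psi(g\cdot f)(\widetilde g)=\ip{g\cdot f}{\widetilde g\cdot\psi}_{\bL^2(\bR^d)}.
\]
Unitarity of the action of $g$ lets us move it to the other slot as $g^{-1}$:
\[
\ip{g\cdot f}{\widetilde g\cdot\psi}=\ip{f}{g^{-1}\cdot(\widetilde g\cdot\psi)}.
\]
Next, the homomorphism property $g^{-1}\cdot(\widetilde g\cdot\psi)=(g^{-1}\widetilde g)\cdot\psi$ turns this into
\[
\Psi(f)(g^{-1}\widetilde g)=(g\cdot\Psi(f))(\widetilde g),
\]
which is the claim.

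The only steps needing genuine verification are the two structural properties of~\eqref{eqn:quasireg}, and I would check them directly.

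\begin{itemize}
\item \emph{Unitarity.} For $g=(A,b)$, substitute $y=A^{-1}(x-b)$, so that $dx=\abs{\det A}\,dy$. Then
\[
\ip{g\cdot f}{h}=\int \abs{\det A}^{-1/2}f(A^{-1}(x-b))\,h(x)\,dx=\int f(y)\,\abs{\det A}^{1/2}h(Ay+b)\,dy .
\]
The right-hand side is $\ip{f}{g^{-1}\cdot h}$, since $g^{-1}=(A^{-1},-A^{-1}b)$ gives $(g^{-1}\cdot h)(y)=\abs{\det A}^{1/2}h(Ay+b)$.
\item \emph{Composition law.} Verify that $(A,b)\cdot\bigl((\tilde A,\tilde b)\cdot\psi\bigr)=(A\tilde A,\,b+A\tilde b)\cdot\psi$, using the group law from Section~\ref{sec: setup and notation}. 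This comes down to the identity
\[
\tilde A^{-1}\bigl(A^{-1}(x-b)-\tilde b\bigr)=(A\tilde A)^{-1}\bigl(x-(b+A\tilde b)\bigr),
\]
together with multiplicativity of the determinant factors.
\end{itemize}

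The main (mild) obstacle is bookkeeping, namely making sure the convention is consistent. The action on $\bC_b(G)$ uses $g^{-1}$ on the left, and the action on $\bL^2(\bR^d)$ is a left action, that is, a homomorphism rather than an anti-homomorphism. These two choices must match, or the identity would come out with $\widetilde g g^{-1}$ instead of $g^{-1}\widetilde g$. The composition check above settles this. Real-valuedness means no complex conjugation issues arise in moving $g$ across the inner product.
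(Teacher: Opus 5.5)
Your proof is correct and is essentially the paper's argument. The paper also evaluates both sides at an arbitrary $\widetilde g$, substitutes $y=g^{-1}(x)$, and then uses $\widetilde g^{-1}\circ g=(g^{-1}\circ\widetilde g)^{-1}$ together with the multiplicativity of the Jacobian factors; this is your unitarity check and composition law done in a single computation, and your split mainly makes the left-action convention check more transparent.
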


Now, we consider the closed subgroup $\Lambda$ and  recall that $G/\Lambda$ is assumed to be compact. We let $H$ be a fixed precompact set of right coset representatives (which exists since $G$ is locally compact and $G/\Lambda$ is compact). For $1\leq p\leq \infty$, we let $\Lambda$ act on $\bL^p(\Lambda)$ in the manner inherited from~\eqref{eqn: outer action}, i.e.,  for $\lambda, \widetilde{\lambda}\in \Lambda$, we let
 \begin{equation}\label{eqn: outer action Lambda}
(\lambda \cdot F)(\widetilde{\lambda})= F({\lambda}^{-1}\widetilde{\lambda}).
\end{equation} 
We define
$\Xi:\mathbf{C}_b(G)\rar\bC_b(\Lambda)$ by 
 \begin{equation}
     \label{eqn: pooling}
     \Xi(F)(\lambda) = \sup_{h\in H} F(\lambda h),
 \end{equation}
 and will refer to $\Xi$ as a  \emph{transversal pooling} operator. 
 To better understand the definition~\eqref{eqn: pooling}, consider the case where $G=\bR^d$ is the translation group.
 We fix $c>0$, let $\Lambda$ be a lattice of the form
 \[
 c\mathbb{Z}^d=\{(cm_1,\ldots,c m_d):m_i\in\mathbb{Z}\}, 
 \]
 and choose $H=\left[-\frac{c}{2},\frac{c}{2}\right)^d$.
In this case, we have 
$$
\Xi(F)(\lambda)=\sup_{h\in[-c/2,c/2)^d}F(\lambda+h),
$$
i.e., the transversal pooling $\Xi$ can be viewed as a continuum analog of max pooling operators commonly used in convolutional neural networks. 
This connection is illustrated in Figure~\ref{fig:max pooling comparison}.
The left-hand side of the figure represents the $6 \times 4$ grid of numbers output from a discrete convolutional layer along with the $3 \times 2$ grid of numbers obtained from pooling over disjoint $2 \times 2$ neighborhoods.
The right-hand side shows a rectangular subdomain of the output $\mathbb{R}^2\to\mathbb{R}$ of $\Psi$ and of the corresponding output $\mathbb{Z}^2\to\mathbb{R}$ of $\Xi$.

\begin{thm}[Equivariance of $\Xi$]\label{prop: xi equivariance}
    $\Xi$ is $\Lambda$-equivariant from $\mathbf{C}_b(G)$ to $\bC_b(\Lambda)$, i.e., for all $F\in\mathbf{C}_b(G)$ and $\lambda\in\Lambda$, we have  
$$ \Xi(\lambda\cdot F)={\lambda}\cdot \Xi(F),
$$
 where the actions of $\Lambda$ on $\mathbf{C}_b(G)$ and $\bC_b(\Lambda)$  are defined as in~\eqref{eqn: outer action} and~\eqref{eqn: outer action Lambda}.
\end{thm}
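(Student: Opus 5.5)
The equivariance identity itself is a one-line computation from the definitions. The part that needs a little care is checking that $\Xi$ really maps $\bC_b(G)$ into $\bC_b(\Lambda)$.

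\textbf{Equivariance.} Fix $F\in\bC_b(G)$ and $\lambda,\widetilde\lambda\in\Lambda$. By the definition of the regular action~\eqref{eqn: outer action}, $(\lambda\cdot F)(\widetilde\lambda h)=F(\lambda^{-1}\widetilde\lambda h)$. Hence
\[
\Xi(\lambda\cdot F)(\widetilde\lambda)=\sup_{h\in H}F(\lambda^{-1}\widetilde\lambda\, h).
\]
Since $\lambda^{-1}\widetilde\lambda\in\Lambda$, the right-hand side is exactly $\Xi(F)(\lambda^{-1}\widetilde\lambda)$. By~\eqref{eqn: outer action Lambda}, this equals $(\lambda\cdot\Xi(F))(\widetilde\lambda)$. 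The only structural facts used are that $\Lambda$ is a subgroup, so $\lambda^{-1}\widetilde\lambda$ is again a valid argument of $\Xi(F)$, and that multiplication is associative. The transversal $H$ sits on the right, so it is untouched by the left action.

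\textbf{Boundedness.} We must also confirm that $\Xi(F)\in\bC_b(\Lambda)$, so that both sides live in the claimed space. Boundedness is immediate from $|\Xi(F)(\lambda)|\le\|F\|_\infty$.

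\textbf{Continuity (main obstacle).} I would use the precompactness of $H$. Let $K=\overline H$, which is compact. Then $\sup_{h\in H}F(\lambda h)=\sup_{h\in K}F(\lambda h)$, because $F$ is continuous and $H$ is dense in $K$.

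Fix $\lambda_0\in\Lambda$ and $\varepsilon>0$, and choose a compact neighborhood $V$ of $\lambda_0$ in $G$. The map $(g,h)\mapsto F(gh)$ is continuous on the compact set $V\times K$, hence uniformly continuous there. A standard tube-lemma argument then gives a neighborhood $U\subseteq V$ of $\lambda_0$ such that $|F(\lambda h)-F(\lambda_0h)|<\varepsilon$ for all $\lambda\in U$ and $h\in K$. Taking suprema over $h$ yields $|\Xi(F)(\lambda)-\Xi(F)(\lambda_0)|\le\varepsilon$ for $\lambda\in U\cap\Lambda$. So $\Xi(F)$ is continuous on $\Lambda$ with the subspace topology.

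If $H$ were not precompact, this continuity could fail, which is why the standing assumption that $G/\Lambda$ is compact (giving a precompact transversal) is needed here. For the equivariance identity itself, no assumption on $H$ is required at all.
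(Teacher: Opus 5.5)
Your equivariance computation is exactly the paper's proof: unwind the regular action to get $\sup_{h\in H}F(\lambda^{-1}\widetilde\lambda h)$, use $\lambda^{-1}\widetilde\lambda\in\Lambda$ to recognize this as $\Xi(F)(\lambda^{-1}\widetilde\lambda)$, and conclude it equals $(\lambda\cdot\Xi(F))(\widetilde\lambda)$. Your extra check that $\Xi(F)\in\bC_b(\Lambda)$, via boundedness, precompactness of $H$ and a tube-lemma argument, is correct and supplies a well-definedness point that the paper takes for granted.
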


\begin{proof}
Let $F\in \bC_b(G)$ and $\lambda,\widetilde{\lambda}\in\Lambda$. Then
\[
\Xi({\lambda}\cdot F)(\widetilde\lambda)
=\sup_{h\in H}({\lambda}\cdot F)(\widetilde\lambda h)=\sup_{h\in H}F({\lambda}^{-1}\widetilde\lambda h)=\Xi(F)({\lambda}^{-1}\widetilde\lambda)
=\big({\lambda}\cdot \Xi(F)\big)(\widetilde\lambda).
\qedhere
\]
\end{proof}

After applying $\Psi$ and $\Xi$, we may wish to apply another convolution defined on ${\Lambda}$. 
Towards this end, we let $\upsilon\in\mathbf{L}^1(\Lambda)$
and define $\Upsilon:\bC_b(\Lambda)\rar \bC_b(\Lambda)$ by
\begin{equation}\label{eqn: upsilon}
\Upsilon(F)(\lambda)=\int_\Lambda F(\kappa)\upsilon(\lambda^{-1}\kappa)d\mu_{\Lambda}(\kappa),
\end{equation}
where $\mu_\Lambda$ denotes the left Haar measure on $\Lambda$. 
The following result, which establishes the equivariance of~$\Upsilon$, is a standard fact about convolution over Haar measure (see e.g., \cite{Folland2016}). We provide a short proof in Appendix~\ref{app: proof of upsilon equivarariance} to help keep our work self-contained.

\begin{prop}[Equivariance of $\Upsilon$]\label{prop: upsilon equivariance}
    Let $\upsilon \in \bL^1(\Lambda)$. Then $\Upsilon$ is $\Lambda$-equivariant from $\bC_b(\Lambda)$ to $\bC_b(\Lambda)$, i.e., for all $F\in\bC_b(\Lambda)$ and $\lambda\in\Lambda$, we have 
$$ \Upsilon(\lambda\cdot F)=\lambda\cdot \Upsilon(F),
$$
where the action of $\Lambda$ on  $\bC_b(\Lambda)$  is defined as in~\eqref{eqn: outer action Lambda}.
\end{prop}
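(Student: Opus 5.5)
The plan is a direct computation with the group convolution formula~\eqref{eqn: upsilon}, using only the left invariance of $\mu_\Lambda$. Fix $F\in\bC_b(\Lambda)$ and $\lambda,\widetilde\lambda\in\Lambda$. Unwinding the action~\eqref{eqn: outer action Lambda} gives
\[
\Upsilon(\lambda\cdot F)(\widetilde\lambda)=\int_\Lambda F(\lambda^{-1}\kappa)\,\upsilon(\widetilde\lambda^{-1}\kappa)\,d\mu_\Lambda(\kappa).
\]
Substitute $\kappa=\lambda\kappa'$. Left invariance gives $d\mu_\Lambda(\lambda\kappa')=d\mu_\Lambda(\kappa')$, so the integral becomes
\[
\int_\Lambda F(\kappa')\,\upsilon\big((\lambda^{-1}\widetilde\lambda)^{-1}\kappa'\big)\,d\mu_\Lambda(\kappa')=\Upsilon(F)(\lambda^{-1}\widetilde\lambda)=\big(\lambda\cdot\Upsilon(F)\big)(\widetilde\lambda),
\]
which is the claimed identity.

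Before this, one checks that $\Upsilon$ really maps $\bC_b(\Lambda)$ into $\bC_b(\Lambda)$, so that both sides are defined.

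\emph{Boundedness.} This is immediate from the estimate
\[
|\Upsilon(F)(\lambda)|\le\|F\|_{\bL^\infty(\Lambda)}\int_\Lambda|\upsilon(\lambda^{-1}\kappa)|\,d\mu_\Lambda(\kappa)=\|F\|_{\bL^\infty(\Lambda)}\|\upsilon\|_{\bL^1(\Lambda)},
\]
where the last equality is again left invariance.

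\emph{Continuity.} This is the step I expect to need the most care. The estimate
\[
|\Upsilon(F)(\lambda)-\Upsilon(F)(\lambda')|\le\|F\|_{\bL^\infty(\Lambda)}\,\big\|\lambda^{-1}\cdot\upsilon-\lambda'^{-1}\cdot\upsilon\big\|_{\bL^1(\Lambda)}
\]
reduces it to the continuity of left translation $\lambda\mapsto\lambda\cdot\upsilon$ in $\bL^1(\Lambda)$, where $(\lambda\cdot\upsilon)(\kappa)=\upsilon(\lambda^{-1}\kappa)$ as in~\eqref{eqn: outer action Lambda}. This is a standard fact for locally compact groups (see~\cite{Folland2016}). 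It is proved by approximating $\upsilon$ in $\bL^1$ by a compactly supported continuous function, which is uniformly continuous, and using left invariance of $\mu_\Lambda$ to control the approximation error. I would cite this fact rather than reprove it.

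Since the problem only asks for the equivariance identity, the main content is the change-of-variables computation above. The well-definedness points are routine and can be handled briefly with these references.
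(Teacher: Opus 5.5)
Your change-of-variables computation is the paper's own proof. The paper substitutes $\tilde\kappa=\lambda^{-1}\kappa$ and uses left invariance of $\mu_\Lambda$, exactly as you do; it does not check that $\Upsilon$ maps $\bC_b(\Lambda)$ into itself, so your boundedness and continuity checks are an addition. One small slip there: since $\upsilon(\lambda^{-1}\kappa)=(\lambda\cdot\upsilon)(\kappa)$, the continuity bound should read $\|F\|_{\bL^\infty(\Lambda)}\|\lambda\cdot\upsilon-\lambda'\cdot\upsilon\|_{\bL^1(\Lambda)}$ without the inverses, though the conclusion via continuity of left translation in $\bL^1(\Lambda)$ is unaffected.
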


\subsection{Activation Functions and Transversal Pooling Neural Networks}\label{sec: equivariant neural network architecturs}

In this section, we define a special class of neural network architectures by composing the operators $\Psi$, $\Xi$, and $\Upsilon$, along with pointwise activations.
To be clear, given an \emph{activation function} $\sigma:\mathbb{R}\rar\mathbb{R}$, we write
\[
\sigma(f)(x)=\sigma(f(x))
\]
for $f$ in $\mathbf{L}^2(\mathbb{R}^d)$ or in $\bC_b(\Gamma)$ for some $\Gamma \leq G$.
(That is, in a minor abuse of notation, we use $\sigma$ to denote both the scalar function and the associated operator.)
We take our activation functions to be
\begin{itemize}
\item
\emph{positively homogeneous}, meaning $\sigma(cx)=c\sigma(x)$ for all $x\in\bR$ and $c\geq 0$, and
\item
\emph{weakly contractive}, meaning $|\sigma(x)-\sigma(y)|\leq |x-y|$ for all $x,y\in\bR$.
\end{itemize}
We observe that the associated operator on functions is similarly positively homogeneous and weakly contractive in the relevant norm.
Many of the most popular activation functions, including ReLU and Leaky-ReLU~\cite{maas2013rectifier}, satisfy these requirements, as does the absolute value commonly used in scattering transforms and associated neural networks.

When $\sigma$ is positively homogeneous, the corresponding operator on $\mathbf{L}^2(\mathbb{R}^d)$ is equivariant to the quasiregular representation of $\operatorname{AGL}(\bR^d)$:
\[
(g\cdot \sigma(f))(x)
=\frac{1}{|\det(Jg)|^{1/2}}\sigma(f(g^{-1}(x)))
=\sigma\left(\frac{1}{|\det(Jg)|^{1/2}}f(g^{-1}(x))\right)
=\sigma(g\cdot f)(x).
\]
Here, $Jg$ denotes the Jacobian matrix of $g$, and the middle step applies the positive homogeneity of $\sigma$. 
By a similar argument, the corresponding operator on $\bC_b(\Gamma)$ is equivariant to the regular representation of~$\Gamma$, even when $\sigma$ is not positively homogeneous (cf.\ \cite[Section 6.2]{cohen2016group}).

We now define a \emph{transversal pooling neural network (\ournet{})} to be a composition
\begin{equation}\label{eqn: mathcal T definition}
\mathcal{T}
=\tau_m\circ\tau_{m-1}\circ\cdots\circ\tau_1\circ \Psi,
\end{equation}
where each $\tau_i$ is a function of the form $\Xi$, $\Upsilon$, or $\sigma$, and we allow the subgroups $\Lambda$, the transversals $H$, the filters $\upsilon$, and the activation functions $\sigma$ to vary from one layer to another.
In order for $\mathcal{T}$ to be well-defined, we require the codomain of each layer to equal the domain of the subsequent layer.
We also refer to any implementation of this structure in the digital domain, e.g., our experiments in Section~\ref{sec: experiments}, as a \ournet{}.
Since the composition of equivariant maps is equivariant, we may combine the equivariance of~$\sigma$ with Proposition~\ref{prop: Psi Equivariant}, Theorem~\ref{prop: xi equivariance}, and Proposition~\ref{prop: upsilon equivariance} to obtain the following.

\begin{cor}\label{cor: composition}
A transversal pooling neural network $\mathcal{T}: \bL^2(\mathbb{R}^d)\rar\bC_b(\Lambda)$  defined as in~\eqref{eqn: mathcal T definition} is $\Lambda$-equivariant, i.e., for  all $f\in\mathbf{L}^2(\mathbb{R}^d)$ and $\lambda\in\Lambda$, we have 
 $$ \mathcal{T}(\lambda\cdot f)=\lambda\cdot (\mathcal{T}f), $$
where the actions of $\lambda$ on $\bL^2(\mathbb{R}^d)$ and $\bC_b(\Lambda)$  are defined as in~\eqref{eqn:quasireg} and~\eqref{eqn: outer action Lambda}.
\end{cor}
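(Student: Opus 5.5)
The plan is an induction on the number of layers $m$. At each stage we track two things: the current codomain of the partial composition $\mathcal{T}_k=\tau_k\circ\cdots\circ\tau_1\circ\Psi$, and the group acting on it. The invariant to maintain is the following. Whenever $\mathcal{T}_k$ maps into $\bC_b(\Gamma_k)$, where $\Gamma_k$ is the group indexing the current layer, then $\mathcal{T}_k$ is equivariant with respect to the subgroup $\Lambda$ that indexes the final output. Here $\Lambda$ acts on $\bL^2(\bR^d)$ via~\eqref{eqn:quasireg} and on $\bC_b(\Gamma_k)$ via the regular representation~\eqref{eqn: outer action}.

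The key observation is that the pooling subgroups are nested, $G\geq\Lambda_1\geq\Lambda_2\geq\cdots\geq\Lambda$. This is forced by the well-definedness requirement that the codomain of each layer equal the domain of the next. Equivariance with respect to a group then restricts to equivariance with respect to any subgroup, because the actions on $\bL^2(\bR^d)$ and on $\bC_b(\Gamma_k)$ are restrictions of the corresponding actions of the larger group.

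The base case is Proposition~\ref{prop: Psi Equivariant}, which says $\Psi$ is $G$-equivariant, hence $\Lambda$-equivariant.

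For the inductive step, we check that each layer type is equivariant with respect to the group indexing its domain, and hence with respect to $\Lambda$.
\begin{itemize}
\item A pooling layer $\Xi:\bC_b(\Gamma)\to\bC_b(\Lambda')$ with $\Lambda'\leq\Gamma$ and $\Lambda\leq\Lambda'$ is $\Lambda'$-equivariant by Theorem~\ref{prop: xi equivariance}, applied with $G$ replaced by $\Gamma$. The theorem's proof uses only the group structure, so it applies verbatim.
\item A convolution layer $\Upsilon$ on $\bC_b(\Lambda')$ is $\Lambda'$-equivariant by Proposition~\ref{prop: upsilon equivariance}.
\item An activation $\sigma$ acting on $\bC_b(\Gamma)$ is $\Gamma$-equivariant, since $(\gamma\cdot\sigma(F))(\tilde\gamma)=\sigma(F(\gamma^{-1}\tilde\gamma))=\sigma(\gamma\cdot F)(\tilde\gamma)$.
\item If an activation were applied directly to $\bL^2(\bR^d)$ data, it would be equivariant by the positive homogeneity computation given before~\eqref{eqn: mathcal T definition}. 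In the definition~\eqref{eqn: mathcal T definition}, however, $\Psi$ always comes first.
\end{itemize}

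Composition then finishes the argument: if $\mathcal{T}_{k-1}(\lambda\cdot f)=\lambda\cdot\mathcal{T}_{k-1}(f)$ and $\tau_k(\lambda\cdot F)=\lambda\cdot\tau_k(F)$, then $\mathcal{T}_k(\lambda\cdot f)=\tau_k(\lambda\cdot\mathcal{T}_{k-1}f)=\lambda\cdot\mathcal{T}_k(f)$.

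The main obstacle is bookkeeping rather than mathematics. The statement writes the output as $\bC_b(\Lambda)$, but intermediate layers may be indexed by larger groups. We therefore need to say precisely that "$\Lambda$" in the corollary means the last pooling subgroup, or $G$ if no $\Xi$ appears. We also need to justify that this $\Lambda$ is contained in every earlier indexing group, which follows from the nesting above. With that stated explicitly, the remaining verifications are immediate.
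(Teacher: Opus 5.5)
Your proof is correct and follows the same argument as the paper. The paper's proof is the single remark that a composition of equivariant maps is equivariant, combined with Proposition~\ref{prop: Psi Equivariant}, Theorem~\ref{prop: xi equivariance}, Proposition~\ref{prop: upsilon equivariance} and the equivariance of $\sigma$; your induction over layers, with the observation that the index groups form a decreasing chain so each layer's equivariance restricts to the final $\Lambda$, spells out bookkeeping the paper leaves implicit.
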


\begin{remark}\label{rem: extend to complex}

Throughout, we have assumed that $f$ is real-valued, both for convenience and because this is typically the case in the setting of neural networks. 
However, our definitions can be adapted for complex-valued functions. 
Extending the definition of $\Psi$ is straightforward: we simply use the complex inner product $\langle f,g\rangle_{\mathbf{L}^2(\mathbb{R}^d)}=\int_{\mathbb{R}^d}f(x)\overline{g(x)}dx$. 
For the activation functions $\sigma$, we may apply $\sigma$ to the real and imaginary parts separately or choose $\sigma$ to be the complex modulus as done in \cite{mallat:scattering2012}. Alternatively, as in \cite{Zhang2021MagNet}, we could use the complex version of  ReLU defined by $\sigma(z)=z$, if $-\pi/2\leq \arg(z)<\pi/2$, and $\sigma(z)=0$, although this would require  some modification of our continuity theory introduced in Section \ref{sec:transtheory} since the complex ReLU is not continuous. It is straightforward to check that each of these choices are positively homogeneous. 
For the transversal pooling operator $\Xi$, there are multiple options.
For example, we could take the supremum of the real and imaginary parts separately and have the output of $\Xi$ be complex-valued, or we could take the supremum of $|F(\lambda h)|$ and have the output of $\Xi$ be a real-valued function. 
\end{remark}

\section{Group-Lipschitz Continuity of Transversal Pooling}\label{sec:transtheory}

\subsection{Group-Lipschitz Continuity and Wavelet Representation Convergence Rates}\label{sec:thyGL}

In this section, we establish that our transversal pooled filter bank  $\Xi\circ \Psi$ is group-Lipschitz with respect to~$G$, so that $\Xi(\Psi(g\cdot f))\approx \Xi(\Psi(f))$ when $g\in G$ is close to the identity; see Theorem~\ref{thm: continuity of xi}.
We start by proving the following prerequisite result, which gives quantitative bounds on the continuity of the quasiregular representation of $\AGL(\bR^d)$, and may be of independent interest. 
\begin{thm}\label{prop:contest}
Let $g=(A,b)\in \AGL(\mathbb{R}^d)$, and let $\psi \in \bW^{1,2}(\bR^d)$. Assume $\det(A) >0$ 
and that
\[
\big{\|}\norm{\cdot}_2 \nabla \psi\big{\|}_{\bL^2(\bR^d)}
:=\left(\int_{\mathbb{R}^d}\|x\|^2_2\|\nabla \psi(x)\|_2^2dx\right)^{1/2}< \infty.
\]
It holds that
\begin{equation}
\label{eq: affine bound}
\norm{g\cdot\psi - \psi}_{\bL^2(\bR^d)} \leq C_A \big{\|}\norm{\cdot}_2 \nabla \psi\big{\|}_{\bL^2(\bR^d)}+\frac{1}{2}\abs{\ln \det A}\norm{\psi}_{\bL^2(\bR^d)}  +\norm{b}_2 \norm{\nabla \psi}_{\bL^2(\bR^d)},
\end{equation}
where
\begin{equation}\label{eqn: CA}
C_A = \max\left\{\ln\norm{A}_{2 \rar 2}, \ln\norm{A^{-1}}_{2 \rar 2}\right\}+ 2\arcsin\left( \frac{1}{2}\norm{A (A^\top A)^{-1/2}-I}_{2 \rar 2}\right).
\end{equation}
If we further assume that $\psi$ is Lipschitz and compactly supported with Lipschitz constant $\norm{\psi}_{L}$ and \\$\supp(\psi) \subseteq B(0,R)$, then
\begin{equation}\label{eqn: prop compact part}
\norm{g\cdot\psi - \psi}_{\bL^2(\bR^d)}\leq \left(R C_A+\norm{b}_2 \right)R^{d/2}\omega_d^{1/2}\norm{\psi}_{L}+\frac{1}{2}\abs{\ln \det A}\norm{\psi}_{\bL^2(\bR^d)},
\end{equation}
where $\omega_d$ is the volume of the unit ball in $\bR^d$.
\end{thm}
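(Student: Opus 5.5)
We split $g=(A,b)$ into a translation and a linear part, and then split the linear part into a determinant-one piece, a positive-definite stretch, and a scalar dilation. Since the representation is unitary and $T_bD_A\psi-\psi = T_b(D_A\psi-\psi)+(T_b\psi-\psi)$, the triangle inequality gives
\[
\norm{g\cdot\psi-\psi}_{\bL^2}\le \norm{D_A\psi-\psi}_{\bL^2}+\norm{T_b\psi-\psi}_{\bL^2}.
\]
For the translation term, write $T_b\psi-\psi=-\int_0^1 \nabla\psi(\cdot-tb)\cdot b\,dt$ (valid for smooth $\psi$ and extended by density in $\bW^{1,2}$). Minkowski's integral inequality then gives $\norm{b}_2\norm{\nabla\psi}_{\bL^2}$.

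For the linear part, use the polar decomposition $A=UP$ with $U=A(A^\top A)^{-1/2}\in\SO(d)$ (since $\det A>0$) and $P=(A^\top A)^{1/2}$ positive definite. Again using unitarity,
\[
\norm{D_A\psi-\psi}\le\norm{D_U\psi-\psi}+\norm{D_P\psi-\psi}.
\]
The general tool is a path argument. If $A_t$ is a $C^1$ path in $\GL$ from $I$, then
\[
\frac{d}{dt}D_{A_t}\psi = D_{A_t}\Big(-\tfrac12\trace(M_t)\psi - \nabla\psi\cdot M_t x\Big),
\qquad M_t=A_t^{-1}\dot A_t .
\]
Integrating and using unitarity yields
\[
\norm{D_{A}\psi-\psi}\le\int_0^1\Big(\tfrac12|\trace M_t|\,\norm{\psi}+\norm{M_t}_{2\to2}\,\big\|\norm{\cdot}_2\nabla\psi\big\|\Big)dt.
\]
For $P$, take $A_t=P^t=e^{t\log P}$, so $M_t=\log P$ is constant. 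Then $\int_0^1\tfrac12|\trace M_t|\,dt=\tfrac12|\ln\det P|=\tfrac12|\ln\det A|$. This is where we should be careful to separate the trace part so that the $\norm{\psi}$ coefficient is exactly $\tfrac12|\ln\det A|$. The operator-norm factor is $\norm{\log P}_{2\to2}=\max\{\ln\norm{P},\ln\norm{P^{-1}}\}$, and $\norm{P}=\norm{A}$, $\norm{P^{-1}}=\norm{A^{-1}}$. Note that $|\ln\lambda|$ over eigenvalues is at most this max, which gives exactly the first term of $C_A$.

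For $U$, take $A_t=e^{tK}$ with $K$ skew-symmetric, $e^K=U$. Then $\trace M_t=0$ and $\norm{M_t}=\norm{K}_{2\to2}=\theta_{\max}$, the largest rotation angle of $U$. Each eigenvalue $e^{i\theta}$ satisfies $|e^{i\theta}-1|=2\sin(\theta/2)$, so choosing the principal logarithm with $\theta\in[0,\pi]$ gives $\theta_{\max}=2\arcsin(\tfrac12\norm{U-I}_{2\to2})$, since $U$ is normal and $\norm{U-I}$ is the maximum of $|e^{i\theta}-1|$. This matches the second term of $C_A$. A potential issue arises when $\theta=\pi$, where the logarithm is non-unique but still real and skew-symmetric (pairs of $-1$ eigenvalues in $\SO(d)$), so this case is fine.

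Summing the three estimates gives \eqref{eq: affine bound}. The main obstacle I expect is bookkeeping in the $P$-step: the full derivative of $D_{P^t}\psi$ mixes the trace term with the gradient term. Because $\trace$ and $\nabla\psi\cdot Mx$ are treated separately by the triangle inequality, the bound works as stated.

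For \eqref{eqn: prop compact part}, repeat the same path estimates, but bound the gradient terms pointwise. The function $D_{A_t}(\nabla\psi\cdot M x)$ is supported where the argument lies in $B(0,R)$, and there $|\nabla\psi\cdot Mx|\le\norm{\psi}_L\norm{M}R$. Hence its $\bL^2$ norm, computed before dilation by unitarity, is at most $\norm{\psi}_L\norm{M}R\,(\omega_dR^d)^{1/2}$. Similarly, $\norm{\nabla\psi}_{\bL^2}\le\norm{\psi}_L(\omega_dR^d)^{1/2}$, since $|\nabla\psi|\le\norm{\psi}_L$ a.e. and the support lies in $B(0,R)$. Substituting these bounds gives $(RC_A+\norm{b}_2)R^{d/2}\omega_d^{1/2}\norm{\psi}_L$ plus the unchanged determinant term.
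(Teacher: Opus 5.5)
Your proposal is correct and is essentially the paper's proof: the same factorization $g=(I,b)(U,0)(P,0)$ from the polar decomposition, the same exponential-path estimate (the paper's Lemma~\ref{lem: semigroup bound}, which you state for general $C^1$ paths but only use with $A_t=e^{tM}$), the same identification of $\norm{\log P}_{2\rar2}$ and $\norm{K}_{2\rar2}$ with the two terms of $C_A$, and the same compact-case substitutions $\norm{\nabla\psi}_{\bL^2}\le R^{d/2}\omega_d^{1/2}\norm{\psi}_L$ and $\big\|\norm{\cdot}_2\nabla\psi\big\|_{\bL^2}\le R^{1+d/2}\omega_d^{1/2}\norm{\psi}_L$. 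One point to make explicit: as in your translation step, the fundamental-theorem-of-calculus argument for $t\mapsto D_{e^{tM}}\psi$ should first be carried out for $\psi\in\bC_c^\infty(\bR^d)$ and then extended by density in the space normed by $\norm{\psi}_{\bL^2}+\norm{\nabla\psi}_{\bL^2}+\big\|\norm{\cdot}_2\nabla\psi\big\|_{\bL^2}$, which is how the paper justifies it.
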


To help parse Theorem~\ref{prop:contest}, first note that any $A$ that is close to $I$ has positive determinant. 
Next, $C_A$ quantifies how different $A$ is from $I$; for instance, $C_I = 0$. Therefore, Theorem~\ref{prop:contest} measures the local continuity of the map $g\rar g\cdot\psi$ in terms of how far $g$ is from the identity element in $\AGL(\mathbb{R}^d)$. 
Our proof of Theorem~\ref{prop:contest} makes use of the following lemma.

\begin{lem}\label{lem: semigroup bound}
Suppose $f\in \bW^{1,2}(\bR^d)$ and
$\big\|\norm{\cdot}_2\nabla f\big\|_{\bL^2(\bR^d)}<\infty$.
For every $M\in\mathbb{R}^{d\times d}$, it holds that
\[
\norm{e^M\cdot f-f}_{\bL^2(\bR^d)}
\leq
\norm{\frac{1}{2}\trace(M) f+\ip{\nabla f}{M\cdot}}_{\bL^2(\bR^d)}.
\]
\end{lem}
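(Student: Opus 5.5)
The plan is to embed $e^M$ in the one-parameter subgroup $t\mapsto e^{tM}$ of $\GL(\bR^d)$ and use the fundamental theorem of calculus along it, combined with the unitarity of the quasiregular representation. Write $e^{tM}\cdot f = D_{e^{tM}}f$. Since $\det e^{tM}=e^{t\trace(M)}>0$,
\[
F(t)(x) := (D_{e^{tM}}f)(x) = e^{-t\trace(M)/2} f(e^{-tM}x),
\]
with $F(0)=f$ and $F(1)=e^M\cdot f$. Also set
\[
h(y) := \tfrac12\trace(M) f(y) + \ip{\nabla f(y)}{My},
\]
which is exactly the function whose norm appears on the right-hand side. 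The hypotheses $f\in\bW^{1,2}(\bR^d)$ and $\norm{\norm{\cdot}_2\nabla f}_{\bL^2}<\infty$ guarantee $h\in\bL^2(\bR^d)$, since $\abs{\ip{\nabla f(y)}{My}}\le \norm{M}_{2\rar 2}\norm{y}_2\norm{\nabla f(y)}_2$.

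\textbf{Step 1 (smooth case).} First assume $f\in C_c^\infty(\bR^d)$. Differentiating in $t$ with the chain rule, and using $\frac{d}{dt}e^{-tM}x=-Me^{-tM}x$, gives
\[
\partial_t F(t)(x) = -e^{-t\trace(M)/2}\Big(\tfrac12\trace(M) f(y) + \ip{\nabla f(y)}{My}\Big)\Big|_{y=e^{-tM}x} = -(D_{e^{tM}}h)(x).
\]
The key observation is that $M$ commutes with $e^{-tM}$, so $Me^{-tM}x = My$ with $y=e^{-tM}x$. Integrating pointwise in $t$ over $[0,1]$ yields
\[
e^M\cdot f - f = -\int_0^1 D_{e^{tM}}h\,dt.
\]
Minkowski's integral inequality, together with $\norm{D_{e^{tM}}h}_{\bL^2}=\norm{h}_{\bL^2}$ (unitarity), then gives
\[
\norm{e^M\cdot f-f}_{\bL^2}\le \int_0^1 \norm{D_{e^{tM}}h}_{\bL^2}\,dt = \norm{h}_{\bL^2}.
\]
Here $h$ is smooth and compactly supported, so every integrand is jointly continuous with compact support and all interchanges are trivially justified.

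\textbf{Step 2 (density).} For general $f$, approximate by $f_n\in C_c^\infty$ such that $f_n\to f$ in $\bW^{1,2}$ and $\norm{\norm{\cdot}_2\nabla(f_n-f)}_{\bL^2}\to 0$. Then $h_n\to h$ in $\bL^2$ by the pointwise bound on $h$ above, and $e^M\cdot f_n\to e^M\cdot f$ because $D_{e^M}$ is unitary. Passing to the limit in the Step 1 inequality for $f_n$ finishes the proof.

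\textbf{Main obstacle.} The step needing care is building the approximating sequence in the weighted norm. I would first truncate with cutoffs $\chi(\cdot/n)$, where $\chi\in C_c^\infty$ equals $1$ on $B(0,1)$. The extra term $f\,\nabla\chi(\cdot/n)/n$ satisfies $\norm{x}_2\,\abs{\nabla\chi(x/n)}/n\le C\,\mathbbm{1}_{\{n\le \norm{x}_2\le 2n\}}$, so its weighted norm is at most $C\norm{f}_{\bL^2(\norm{x}_2\ge n)}\to0$; the main cutoff term converges by dominated convergence. Then I would mollify the compactly supported truncations. On a fixed compact set the weight $\norm{x}_2$ is bounded, so $\bW^{1,2}$ convergence of the mollifications implies weighted convergence.
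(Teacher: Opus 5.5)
Your proposal is correct and follows essentially the same route as the paper: reduce to $C_c^\infty$ by density (cutoff plus mollification, controlling $\|f\|_{\bL^2}+\|\nabla f\|_{\bL^2}+\|\norm{\cdot}_2\nabla f\|_{\bL^2}$), apply the fundamental theorem of calculus along $t\mapsto e^{tM}\cdot f$, identify $\partial_t F(t)=e^{tM}\cdot(-h)$, and conclude with Minkowski's integral inequality and unitarity. The only differences are cosmetic: your $h$ is the negative of the paper's $g$, and you write out the weighted density argument that the paper only sketches. The commutation of $M$ with $e^{-tM}$ that you flag is also not actually needed, since $Me^{-tM}x=M(e^{-tM}x)$.
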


\begin{proof}
The set of functions that satisfy the hypotheses forms a Banach space $X$ with norm
\[
\|f\|_X
=\|f\|_{\bL^2(\bR^d)}+\|\nabla f\|_{\bL^2(\bR^d)}+\big\|\norm{\cdot}_2\nabla f\big\|_{\bL^2(\bR^d)}.
\]
One may show that $\bC_c^\infty(\bR^d)$ is dense in $X$ by cutting off and mollifying.
Since the left- and right-hand sides of the desired inequality are continuous over $X$, we may assume $f\in\bC_c^\infty(\bR^d)$ without loss of generality.

Now define $F:[0,1]\to\bL^2(\bR^d)$ by $F(t):=e^{tM}\cdot f$, i.e.,
\[
F(t)(x)
=e^{-t\trace(M)/2}f(e^{-tM}x).
\]
By the fundamental theorem of calculus, it suffices to bound the norm of
\[
e^M\cdot f-f
=F(1)-F(0)
=\int_0^1 F'(t)\,dt,
\]
where $F'(t)(x):=\frac{\partial}{\partial t}(F(t)(x))$.
The product rule and chain rule together give
\[
F'(t)(x)
=e^{-t\trace(M)/2}
\left(
-\frac{1}{2}\trace(M)f(e^{-tM}x)
+\ip{\nabla f(e^{-tM}x)}
{-Me^{-tM}x}
\right)
=(e^{tM}\cdot g)(x),
\]
where
\[
g(x):=-\frac{1}{2}\trace(M)f(x)
+\ip{\nabla f(x)}{-Mx}.
\]
Then Minkowski's integral inequality gives
\[
\norm{e^M\cdot f-f}_{\bL^2(\bR^d)}
\leq\int_0^1\norm{F'(t)}_{\bL^2(\bR^d)}\,dt
=\int_0^1\norm{e^{tM}\cdot g}_{\bL^2(\bR^d)}\,dt
=\norm{g}_{\bL^2(\bR^d)},
\]
where the last step uses the fact that $e^{tM}$ acts isometrically on $\bL^2(\bR^d)$.
\end{proof}

\begin{proof}[Proof of Theorem~\ref{prop:contest}]
Put $P=(A^\top A)^{1/2}$ and $U=AP^{-1}$.
Since $\det A>0$, the matrix $P$ is positive definite and
$U\in\operatorname{SO}(d)$. 
Then we can decompose $g$ as
\[
g=(I,b)(U,0)(P,0),
\]
and so the triangle inequality and unitarity together give
\begin{equation}\label{eq: three components}
\norm{g\cdot \psi-\psi}_{\bL^2(\bR^d)}
\leq\norm{P\cdot\psi-\psi}_{\bL^2(\bR^d)}
+\norm{U\cdot\psi-\psi}_{\bL^2(\bR^d)}
+\norm{(I,b)\cdot\psi-\psi}_{\bL^2(\bR^d)}.
\end{equation}
Since $P$ is positive definite, its spectral decomposition gives
$S:=\log P$, which is real and symmetric.
Applying Lemma~\ref{lem: semigroup bound} and the Cauchy--Schwarz
inequality then gives
\[
\norm{P\cdot\psi-\psi}_{\bL^2(\bR^d)}
\leq\frac{1}{2}\abs{\trace(S)}\norm{\psi}_{\bL^2(\bR^d)}
+\norm{S}_{2\to2}\big\|\norm{\cdot}_2\nabla\psi\big\|_{\bL^2(\bR^d)}.
\]
Next, by the maximal torus theorem, there exists a logarithm
$K\in\mathfrak{so}(d)$ of $U\in\operatorname{SO}(d)$ with $\norm{K}_{2\to2}\leq\pi$.
The skew-symmetric matrix $K$ has trace zero, and so Lemma~\ref{lem: semigroup bound} and Cauchy--Schwarz
again give
\[
\norm{U\cdot\psi-\psi}_{\bL^2(\bR^d)}
\leq\norm{\ip{\nabla\psi}{K\cdot}}_{\bL^2(\bR^d)}
\leq\norm{K}_{2\to2}\big\|\norm{\cdot}_2\nabla\psi\big\|_{\bL^2(\bR^d)}.
\]
For the translation by $b\in\bR^d$, the standard Sobolev estimate
\cite[Proposition~9.3]{brezis2011functional} yields
\[
\norm{(I,b)\cdot\psi-\psi}_{\bL^2(\bR^d)}
\leq\norm{b}_2\norm{\nabla\psi}_{\bL^2(\bR^d)}.
\]
To identify the constants, the eigenvalues of $S$ and the rotation
angles of $U$ give
\[
\trace(S)=\ln\det A,
\qquad
\norm{S}_{2\to2}
=\max\big\{\ln\norm{A}_{2\to2},\,
\ln\norm{A^{-1}}_{2\to2}\big\},
\]
\[
\norm{K}_{2\to2}
=2\arcsin\left(\frac{1}{2}\norm{U-I}_{2\to2}\right).
\]
Thus $\norm{S}_{2\to2}+\norm{K}_{2\to2}=C_A$.
Substituting the three estimates into~\eqref{eq: three components} yields~\eqref{eq: affine bound}.
Finally, if $\psi$ is Lipschitz and supported in $B(0,R)$, then
\[
\norm{\nabla\psi}_{\bL^2(\bR^d)}
\leq R^{d/2}\omega_d^{1/2}\norm{\psi}_L,
\qquad
\big\|\norm{\cdot}_2\nabla\psi\big\|_{\bL^2(\bR^d)}
\leq R^{1+d/2}\omega_d^{1/2}\norm{\psi}_L,
\]
and~\eqref{eqn: prop compact part} follows.
\end{proof}

We now present the main result of this section. 
Here, it is convenient to denote $\psi_g:=g\cdot \psi$.

\begin{thm}[Group-Lipschitz Continuity of $\Xi\circ \Psi$]\label{thm: continuity of xi}
Let $g=(A,b)\in G$, and let $\psi \in \bW^{1,2}(\bR^d)$. Assume $\det(A) >0$ and that  $\big{\|}\norm{\cdot}_2 \nabla \psi\big{\|}_{\bL^2(\bR^d)}< \infty$. 
Then for every $f\in\bL^2(\mathbb{R}^d)$ and $\lambda\in\Lambda$, we have
\begin{align*}
\lefteqn{|\Xi(\Psi(g\cdot f))(\lambda)-\Xi(\Psi(f))(\lambda)| }\\
&\leq \sup_{h\in H}\left(C_{A} \big{\|}\norm{\cdot}_2 \nabla \psi_{\lambda h}\big{\|}_{\bL^2(\bR^d)}+\frac{1}{2}\abs{\ln \det A^{}} \norm{\psi_{\lambda h}}_{\bL^2(\bR^d)}  +\norm{A^{-1}b}_2 \norm{\nabla \psi_{\lambda h}}_{\bL^2(\bR^d)}\right)\cdot\|f\|_{\bL^2(\mathbb{R}^d)},
\end{align*}
where $C_A$ is defined as in~\eqref{eqn: CA}.
If we further assume that $\psi$ is Lipschitz and compactly supported with Lipschitz constant $\norm{\psi}_{L}$ and $\supp(\psi) \subseteq B(0,R)$, then for every $f\in\mathbf{L}^2(\mathbb{R}^d)$  and $\lambda\in\Lambda$, we have
\begin{align}
\lefteqn{|\Xi(\Psi(g\cdot f))(\lambda)-\Xi(\Psi(f))(\lambda)|}\nonumber\\
&\leq \left(\sup_{h \in H} \left(\tilde{R}_{\lambda h} C_{A}+\norm{A^{-1}b}_2 \right)\tilde{R}_{\lambda h}^{d/2}\omega_d^{1/2}\norm{\psi_{\lambda h}}_{L}+\frac{1}{2}\abs{\ln \det A} \norm{\psi}_{\bL^2(\bR^d)}\right)\cdot\|f\|_{\bL^2(\mathbb{R}^d)},\label{eqn: theorem result compact case}
\end{align} 
where $\omega_d$ is the volume of the unit ball in $\mathbb{R}^d$,  
$A_{\lambda h}$ and $b_{\lambda h}$ are the matrix and vector such that $(\lambda h)(x)=A_{\lambda h}x+b_{\lambda h}$, and $\tilde{R}_{\lambda h} = R\norm{A_{\lambda h}}_{2 \rar 2} + \norm{b_{\lambda h}}_2$.
\end{thm}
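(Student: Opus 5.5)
The plan is to reduce the pooled difference to a pointwise difference of wavelet coefficients, and then apply Theorem~\ref{prop:contest} to the transformed templates $\psi_{\lambda h}$.

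\textbf{Removing the supremum.} For bounded real functions $F_1,F_2$ on $H$ we have the elementary inequality
\[
\Big|\sup_{h\in H}F_1(h)-\sup_{h\in H}F_2(h)\Big|\le\sup_{h\in H}|F_1(h)-F_2(h)|.
\]
Apply it with $F_1(h)=\Psi(g\cdot f)(\lambda h)$ and $F_2(h)=\Psi(f)(\lambda h)$; both are bounded by Cauchy--Schwarz. This reduces the claim to bounding $|\Psi(g\cdot f)(\lambda h)-\Psi(f)(\lambda h)|$ uniformly in $h\in H$.

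\textbf{Moving $g$ onto the template.} By unitarity of the quasiregular representation,
\[
\Psi(g\cdot f)(\lambda h)=\ip{g\cdot f}{\psi_{\lambda h}}=\ip{f}{g^{-1}\cdot\psi_{\lambda h}}.
\]
Hence, by Cauchy--Schwarz,
\[
|\Psi(g\cdot f)(\lambda h)-\Psi(f)(\lambda h)|\le\|f\|_{\bL^2(\bR^d)}\,\big\|g^{-1}\cdot\psi_{\lambda h}-\psi_{\lambda h}\big\|_{\bL^2(\bR^d)}.
\]
We then apply Theorem~\ref{prop:contest} with template $\psi_{\lambda h}$ and group element $g^{-1}=(A^{-1},-A^{-1}b)$, which has $\det A^{-1}>0$.

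\textbf{Checking the constants.} Three quantities need to match the statement.
\begin{itemize}
\item The translation term becomes $\|A^{-1}b\|_2\,\|\nabla\psi_{\lambda h}\|_{\bL^2(\bR^d)}$.
\item The determinant term is unchanged, since $|\ln\det A^{-1}|=|\ln\det A|$.
\item For $C_{A^{-1}}=C_A$: the $\max\{\ln\|A\|,\ln\|A^{-1}\|\}$ part is symmetric under $A\mapsto A^{-1}$. For the rotation part, write $A=UP$; then $A^{-1}=P^{-1}U^\top=U^\top(UP^{-1}U^\top)$, where $UP^{-1}U^\top$ is positive definite. So the orthogonal polar factor of $A^{-1}$ is $U^\top$, and $\|U^\top-I\|_{2\to2}=\|U-I\|_{2\to2}$.
\end{itemize}

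\textbf{Verifying the hypotheses for $\psi_{\lambda h}$.} We need $\psi_{\lambda h}\in\bW^{1,2}(\bR^d)$ with $\big\|\norm{\cdot}_2\nabla\psi_{\lambda h}\big\|_{\bL^2(\bR^d)}<\infty$. Write $k=\lambda h=(A_k,b_k)$. Changing variables $x=A_ky+b_k$, we have $\nabla\psi_k(x)=|\det A_k|^{-1/2}A_k^{-\top}\nabla\psi(y)$ and $\|x\|_2\le\|A_k\|_{2\to2}\|y\|_2+\|b_k\|_2$. So the weighted norm is finite, controlled by $\|\nabla\psi\|$ and $\|\norm{\cdot}_2\nabla\psi\|$. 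This gives the first bound after taking $\sup_h$.

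\textbf{The compact case.} Here $\psi_{\lambda h}$ is Lipschitz with constant $\norm{\psi_{\lambda h}}_L$. Its support lies in $A_{\lambda h}B(0,R)+b_{\lambda h}\subseteq B(0,\tilde R_{\lambda h})$. Also $\|\psi_{\lambda h}\|_{\bL^2(\bR^d)}=\|\psi\|_{\bL^2(\bR^d)}$. Applying \eqref{eqn: prop compact part} with $R$ replaced by $\tilde R_{\lambda h}$ and $b$ replaced by $-A^{-1}b$ gives \eqref{eqn: theorem result compact case}. The $h$-independent term can be pulled out of the supremum.

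\textbf{Main obstacle.} I expect the only delicate point to be the identification $C_{A^{-1}}=C_A$ through the polar decomposition, together with confirming that the weighted-gradient hypothesis is preserved under the affine change of variables. Everything else is Cauchy--Schwarz and the sup inequality.
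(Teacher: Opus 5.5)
Your proposal is correct and follows the paper's proof essentially step for step: the paper packages your first two steps as Lemma~\ref{lem: continuity simple} (invoking the equivariance of $\Psi$ where you use unitarity directly, which amounts to the same computation), and then applies Theorem~\ref{prop:contest} to the template $\psi_{\lambda h}$ with $g^{-1}$ in place of $g$, exactly as you do. Your polar-decomposition check that $C_{A^{-1}}=C_A$, and your verification that $\psi_{\lambda h}$ inherits the $\bW^{1,2}$ and weighted-gradient hypotheses, are details the paper asserts without proof, so they add rigor without changing the route.
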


The proof of Theorem~\ref{thm: continuity of xi} relies on the following lemma.

\begin{lem}\label{lem: continuity simple}
Let $g=(A,b)\in G$, and assume that $\psi\in\bL^2(\mathbb{R}^d)$.  Then, for $f\in\bL^2(\mathbb{R}^d)$ and $\lambda\in\Lambda$, we have
\[
|\Xi(\Psi(g\cdot f))(\lambda)-\Xi(\Psi(f))(\lambda)|
\leq \|f\|_{\bL^2(\mathbb{R}^d)} \sup_{h\in H}\|\psi_{g^{-1}\lambda h}-\psi_{\lambda h}\|_{\bL^2(\mathbb{R}^d)}.
\]
\end{lem}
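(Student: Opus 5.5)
The plan is to reduce the difference of suprema to a supremum of differences, and then to compare the two wavelet coefficients via unitarity of the quasiregular representation.

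First, by Proposition~\ref{prop: Psi Equivariant}, $\Psi(g\cdot f)=g\cdot\Psi(f)$, so
\[
\Psi(g\cdot f)(\lambda h)=\Psi(f)(g^{-1}\lambda h)=\langle f,\psi_{g^{-1}\lambda h}\rangle_{\bL^2(\bR^d)}.
\]
Alternatively one can compute directly $\langle g\cdot f,\psi_{\lambda h}\rangle=\langle f, g^{-1}\cdot\psi_{\lambda h}\rangle$, using that the representation is unitary and that $g^{-1}\cdot(\lambda h\cdot\psi)=(g^{-1}\lambda h)\cdot\psi$. Note that $g^{-1}\lambda h$ need not lie in $\lambda H$, but this is irrelevant: we only evaluate $\Psi(f)$ on $G$, and $g^{-1}\lambda h\in G$.

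Next, I will use the elementary inequality
\[
\Bigl|\sup_{h\in H}a_h-\sup_{h\in H}b_h\Bigr|\le\sup_{h\in H}|a_h-b_h|
\]
for bounded real families. Both families here are bounded by $\|f\|\,\|\psi\|$ via Cauchy--Schwarz, so both suprema are finite. To verify the inequality: for each $h$, $a_h\le b_h+|a_h-b_h|\le\sup b+\sup|a-b|$; taking the supremum over $h$ and then symmetrizing gives the claim. Apply it with $a_h=\langle f,\psi_{g^{-1}\lambda h}\rangle$ and $b_h=\langle f,\psi_{\lambda h}\rangle$.

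Finally, by linearity and Cauchy--Schwarz,
\[
|a_h-b_h|=|\langle f,\psi_{g^{-1}\lambda h}-\psi_{\lambda h}\rangle|\le\|f\|_{\bL^2(\bR^d)}\|\psi_{g^{-1}\lambda h}-\psi_{\lambda h}\|_{\bL^2(\bR^d)}.
\]
Taking the supremum over $h\in H$ yields the lemma. No step is a real obstacle. The only point needing care is the sign and order convention in the equivariance, namely that the regular action gives $g^{-1}\lambda h$ rather than $\lambda h g^{-1}$. This follows from~\eqref{eqn: outer action} together with the fact that~\eqref{eqn:quasireg} is a left action.
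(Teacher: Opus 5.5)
Your proposal is correct and follows essentially the same route as the paper's proof. Both use the equivariance of $\Psi$ to rewrite $\Xi(\Psi(g\cdot f))(\lambda)$ as $\sup_{h\in H}\langle f,\psi_{g^{-1}\lambda h}\rangle$, bound the difference of suprema by the supremum of differences, and finish with Cauchy--Schwarz. Your explicit verification of the sup inequality and of the finiteness of both suprema is a small addition that the paper leaves implicit.
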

\begin{proof}
Applying Proposition~\ref{prop: Psi Equivariant} gives
\[
\Xi(\Psi(g\cdot f))(\lambda) =\Xi(g\cdot \Psi(f))(\lambda)
=\sup_{h\in H} \Psi(f)(g^{-1}\lambda h)
=\sup_{h\in H} \langle f,\psi_{g^{-1}\lambda h}\rangle_{\bL^2(\mathbb{R}^d)},
\]
and similarly, $\Xi(\Psi(f))(\lambda) =\sup_{h\in H} \langle f,\psi_{\lambda h}\rangle$.  
Therefore,
\begin{align*}
|\Xi(\Psi(g\cdot f))(\lambda)-\Xi(\Psi(f))(\lambda)|
&=\left|\sup_{h\in H} \langle f,\psi_{g^{-1}\lambda h}\rangle_{\bL^2(\mathbb{R}^d)}-\sup_{h\in H} \langle f,\psi_{\lambda h}\rangle_{\bL^2(\mathbb{R}^d)}\right|\nonumber \\
&\leq \sup_{h\in H}\left| \langle f,\psi_{g^{-1}\lambda h}\rangle_{\bL^2(\mathbb{R}^d)}- \langle f,\psi_{\lambda h}\rangle_{\bL^2(\mathbb{R}^d)}\right|\nonumber \\
&\leq \|f\|_{\bL^2(\mathbb{R}^d)} \sup_{h\in H}\|\psi_{g^{-1}\lambda h}-\psi_{\lambda h}\|_{\bL^2(\mathbb{R}^d)},
\end{align*}
where the last step applies the Cauchy--Schwarz inequality.
\end{proof}
\begin{remark}\label{rem: Proof of Lemma Continuity Simple}
The proof of Lemma~\ref{lem: continuity simple} above uses the $G$-equivariance of $\Psi$, but importantly, it does not use the $\Lambda$-equivariance of $\Xi$.
This will allow us to establish Proposition~\ref{prop: continuity of xiw} below, which shows that a bound analogous to Theorem~\ref{thm: continuity of xi} holds even in certain cases where $\Lambda$ is not a subgroup of $G$.
\end{remark}

\begin{proof}[Proof of Theorem~\ref{thm: continuity of xi}]
For $\lambda\in\Lambda$, $h\in H$, 
we apply Theorem~\ref{prop:contest} with $\psi_{\lambda h}$ in place of $\psi$ and $g^{-1}$ in place of $g$, and use the facts that $C_{A^{-1}}=C_A$ and $\ln \det A^{-1}=-\ln \det A$ to obtain
\begin{align*}
\|\psi_{g^{-1}\lambda h}-\psi_{\lambda h}\|_{\bL^2(\mathbb{R}^d)}
&\leq C_{A^{-1}} \big{\|}\norm{\cdot}_2 \nabla \psi_{\lambda h}\big{\|}_{\bL^2(\bR^d)}+\frac{1}{2}\abs{\ln \det A^{-1}} \norm{\psi_{\lambda h}}_{\bL^2(\bR^d)}  +\norm{A^{-1}b}_2 \norm{\nabla \psi_{\lambda h}}_{\bL^2(\bR^d)}
\\ &\leq C_{A} \big{\|}\norm{\cdot}_2 \nabla \psi_{\lambda h}\big{\|}_{\bL^2(\bR^d)}+\frac{1}{2}\abs{\ln \det A^{}} \norm{\psi_{\lambda h}}_{\bL^2(\bR^d)}  +\norm{A^{-1}b}_2 \norm{\nabla \psi_{\lambda h}}_{\bL^2(\bR^d)}.
\end{align*}
Taking the supremum over $h$ yields 
\begin{align}\label{eqn: thoerem not nice}
\lefteqn{\sup_{h\in H}\|\psi_{g^{-1}\lambda h}-\psi_{\lambda h}\|_{\bL^2(\mathbb{R}^d)}}\nonumber\\
&\leq \sup_{h\in H}\left(C_{A} \big{\|}\norm{\cdot}_2 \nabla \psi_{\lambda h}\big{\|}_{\bL^2(\bR^d)}+\frac{1}{2}\abs{\ln \det A^{}} \norm{\psi_{\lambda h}}_{\bL^2(\bR^d)}  +\norm{A^{-1}b}_2 \norm{\nabla \psi_{\lambda h}}_{\bL^2(\bR^d)}\right),\nonumber
\end{align}
and so  the first claim follows from Lemma~\ref{lem: continuity simple}.

For the second claim, note that for any $\lambda\in\Lambda$ and $h\in H$,
\[
\supp (\psi_{\lambda h}) \subset B(b_{\lambda h}, R\norm{A_{\lambda h}}_{2 \rar 2}) \subset B(0, R\norm{A_{\lambda h}}_{2 \rar 2}+\norm{b_{\lambda h}}_2) = B(0,\tilde{R}_{\lambda h}).
\]
Therefore, applying~\eqref{eqn: prop compact part} with $\psi_{\lambda h}$ in place of $\psi$, $\widetilde R_{\lambda h}$ in place of $R$, and $g^{-1}$ in place of $g$, we obtain
\begin{align}
\sup_{h\in H}\|\psi_{g^{-1}\lambda h}-\psi_{\lambda h}\|_{\bL^2(\mathbb{R}^d)}
&\leq \sup_{h \in H} \left(\frac{1}{2}\abs{\ln \det A^{-1}} \norm{\psi_{\lambda h}}_{\bL^2(\bR^d)} + \left(\tilde{R}_{\lambda h} C_{A^{-1}}+\norm{-A^{-1}b}_2 \right)\tilde{R}_{\lambda h}^{d/2}\omega_d^{1/2}\norm{\psi_{\lambda h}}_{L}\right)\nonumber\\
&= \sup_{h \in H} \left(\frac{1}{2}\abs{\ln \det A} \norm{\psi}_{\bL^2(\bR^d)} + \left(\tilde{R}_{\lambda h} C_{A}+\norm{A^{-1}b}_2 \right)\tilde{R}_{\lambda h}^{d/2}\omega_d^{1/2}\norm{\psi_{\lambda h}}_{L}\right)\label{eqn: switch the Cs}\nonumber\\
&=  \frac{1}{2}\abs{\ln \det A} \norm{\psi}_{\bL^2(\bR^d)} +\sup_{h \in H} \left(\tilde{R}_{\lambda h} C_{A}+\norm{A^{-1}b}_2 \right)\tilde{R}_{\lambda h}^{d/2}\omega_d^{1/2}\norm{\psi_{\lambda h}}_{L}\nonumber,
\end{align}
and we conclude by again applying Lemma~\ref{lem: continuity simple}.
\end{proof}

\subsection{Examples of Group-Lipschitz Continuity}\label{sec: Group lipschitz pooling}

\begin{table}[tb]
\centering
\small
\renewcommand{\arraystretch}{1.3}
\setlength{\tabcolsep}{10pt}
\begin{tabular}{@{}llllcl@{}}
\hline
Name
& $G$
& $\Lambda_D$
& $\Lambda_T$
& $\Lambda\leq G$?
& Location \\
\hline

Rotations
& $\SO(d)$
& $\Lambda_D\leq \SO(d)$
& $\{0\}$
& \ding{51}
& Example~\ref{ex: cont rotations} \\

Translations
& $\bR^d$
& $\{I\}$
& $c\bZ^d$
& \ding{51}
& Example~\ref{ex: cont trans} \\

Isotropic dilations
& $\{sI:s>0\}$
& $a^{\bZ}I$
& $\{0\}$
& \ding{51}
& Example~\ref{ex: cont dilations} \\

Rototranslations
& $\SO(2)\ltimes\bR^2$
& $C_4$
& $c\bZ^2$
& \ding{51}
& Example~\ref{ex:rot4} \\

Discrete rototranslations
& $C_4\ltimes\bZ^2$
& $\{I\}$
& $c\bZ^2$
& \ding{51}
& Example~\ref{ex:rot4to2} \\

Direct similitudes
& $(\bR_{>0}\times\SO(d))\ltimes\bR^d$
& $a^{\bZ}\Lambda_r$, $\Lambda_r \leq \SO(d)$
& $c\bZ^d$
& \ding{55}
& Example~\ref{ex:dirsim} \\
\hline
\end{tabular}
\caption{Ambient transformation groups $G$ and retained index sets
$\Lambda=\{(A,b):A\in\Lambda_D,\ b\in\Lambda_T\}$
in the examples of
Sections~\ref{sec: Group lipschitz pooling}
and~\ref{sec:affine4realz}.}
\label{tab:transversal-pooling-examples}
\end{table}

In this section, we apply Theorem~\ref{thm: continuity of xi} to several examples of interest; see Table~\ref{tab:transversal-pooling-examples} for a summary.
For each group $G$, we seek a right-invariant metric $\operatorname{dist}_G$ such that for every $\lambda\in\Lambda$, there is a constant $C>0$ (possibly depending on $f,\psi,\Lambda,H,\lambda$) such that for all $g, \widetilde{g} \in G$,
\[
|\Xi(\Psi(g \cdot f))(\lambda)-\Xi(\Psi(\widetilde{g} \cdot f))(\lambda)| \leq C \dist_G(g,\widetilde{g}).
\]
Note that 
\[
|\Xi(\Psi(g \cdot f))(\lambda)-\Xi(\Psi(\widetilde{g} \cdot f))(\lambda)| = |\Xi(\Psi(g\widetilde{g}^{-1} \cdot (\widetilde{g} \cdot f)))(\lambda)-\Xi(\Psi(\widetilde{g} \cdot f))(\lambda)|,
\]
and recall that $\norm{g \cdot f}_{\bL^2(\bR^d)} = \norm{f}_{\bL^2(\bR^d)}$ for all $g \in G$.
Furthermore, since $\dist_G$ is invariant, $\dist_G(g,\widetilde{g}) = \dist_G(g\widetilde{g}^{-1},e)$ for all $g, \widetilde{g} \in G$ (where $e$ denotes the identity element of $G$).
Thus, assuming that $\psi$ is Lipschitz (with respect to the Euclidean metric on its domain $\mathbb{R}^d$) and compactly supported, to prove that transversal pooling is group-Lipschitz, it suffices to show that the right-hand side of~\eqref{eqn: theorem result compact case} has an upper bound of the form $C \norm{f}_{\bL^2(\bR^d)} \dist_G(g,e)$ for some $C>0$ that holds for all $g \in G$ and $f \in \bL^2(\bR^d)$.

The following lemma will allow us to  simplify the right-hand side of~\eqref{eqn: theorem result compact case}.
\begin{lem}\label{lem:dillip}
    Let $g=(A,b) \in G$ and let $\psi$ be Lipschitz with Lipschitz bound $\norm{\psi}_L$. Then $\psi_g$ is Lipschitz with Lipschitz constant
    \[
    \norm{\psi_g}_L \leq \abs{\det{A}}^{-1/2}\norm{A^{-1}}_{2 \rar 2} \norm{\psi}_L.
    \]
\end{lem}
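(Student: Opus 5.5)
The plan is to compute directly from the definition of the quasiregular representation~\eqref{eqn:quasireg}. For $g=(A,b)$ we have $\psi_g(x)=\abs{\det A}^{-1/2}\psi(A^{-1}(x-b))$. So for any $x,y\in\bR^d$, the prefactor $\abs{\det A}^{-1/2}$ is common to both values and pulls out:
\[
\abs{\psi_g(x)-\psi_g(y)} = \abs{\det A}^{-1/2}\abs{\psi(A^{-1}(x-b))-\psi(A^{-1}(y-b))}.
\]

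Next, we apply the Lipschitz property of $\psi$ at the two points $A^{-1}(x-b)$ and $A^{-1}(y-b)$. Their difference is $A^{-1}(x-y)$, since the translation $b$ cancels. This gives the bound
\[
\abs{\det A}^{-1/2}\norm{\psi}_L\norm{A^{-1}(x-y)}_2.
\]
The definition of the operator norm then yields $\norm{A^{-1}(x-y)}_2\leq\norm{A^{-1}}_{2\rar 2}\norm{x-y}_2$.

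Dividing by $\norm{x-y}_2$ for $x\neq y$ and taking the supremum over all such pairs gives $\norm{\psi_g}_L\leq\abs{\det A}^{-1/2}\norm{A^{-1}}_{2\rar2}\norm{\psi}_L$. This shows both that $\psi_g$ is Lipschitz and that the stated bound holds.

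No step is a genuine obstacle. The only point needing care is that translation leaves the Lipschitz constant unchanged, and the difference computation handles this automatically. The bound is stated as an inequality rather than an equality because the operator norm estimate need not be attained on the directions where $\psi$ varies fastest.
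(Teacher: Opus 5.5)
Your proposal is correct and follows essentially the same approach as the paper: pull out the factor $\abs{\det A}^{-1/2}$, apply the Lipschitz bound for $\psi$ at $A^{-1}(x-b)$ and $A^{-1}(y-b)$ (the translation cancels), and finish with $\norm{A^{-1}(x-y)}_2\leq\norm{A^{-1}}_{2\rar 2}\norm{x-y}_2$. Your closing remark on why the result is only an inequality is not needed for the proof, but it is accurate.
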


\begin{proof}
For all $x,y\in\bR^d$, it holds that
\begin{align*}
\abs{\psi_g(x)-\psi_g(y)}
&=\abs{\det A}^{-1/2}\abs{\psi(A^{-1}(x-b))-\psi(A^{-1}(y-b))}\\
&\leq \abs{\det A}^{-1/2}\norm{\psi}_L
      \norm{A^{-1}(x-y)}_2
      \leq \abs{\det A}^{-1/2}\norm{\psi}_L
      \norm{A^{-1}}_{2\rar2}\norm{x-y}_2,
\end{align*}
which implies the claim.
\end{proof}

\begin{example}[Rotations]\label{ex: cont rotations}

One of our primary motivations is to develop machine learning methods for images or data cubes with local rotational invariance\footnote{Here, ``local rotational invariance'' means stability to global rotations that are close to the identity. This contrasts with other work, e.g.,~\cite{andrearczyk2020local}, where ``local rotational invariance'' refers to invariance to a full group of rotations that are applied to small patches of an image.}, in which case the relevant symmetry group is~$\SO(2)$.
We will analyze the Lipschitz behavior of $\Xi\circ\Psi$ on  $G=\SO(d)$ for arbitrary $d>1$.

We equip $\SO(d)$ with the following metric.
For $U,\tilde{U}\in\SO(d)$, the real Schur decomposition shows that
$U\tilde{U}^{-1}$ is orthogonally similar to a block diagonal matrix
with rotation blocks
\begin{equation}\label{eqn:schurrot}
\begin{pmatrix}
\cos(\theta_j) & -\sin(\theta_j) \\
\sin(\theta_j) & \cos(\theta_j)
\end{pmatrix},
\qquad
\theta_j\in(-\pi,\pi],
\quad j=1,\ldots,\lfloor d/2\rfloor,
\end{equation}
and one additional block $(1)$ when $d$ is odd.
We define
\begin{equation}\label{eqn:metSOd}
\dist_{\SO(d)}: \SO(d) \times \SO(d) \rar \bR, \quad \dist_{\SO(d)}(U, \tilde{U}) = \bigg(\sum_j |\theta_j|^2\bigg)^{1/2}.
\end{equation}
Up to a constant factor, this is the bi-invariant Riemannian
distance induced by the Frobenius inner product
(e.g.,~\cite{sinclair2008minimum}).

Now, we observe that for any $U \in \SO(d)$, we have $\det(U) = \norm{U}_{2 \rar 2} =1$ and so~\eqref{eqn: CA} gives
\begin{equation}
\label{eqn:Cpurerot}
C_{U} 
= 2\arcsin\left( \frac{1}{2}\norm{U-I}_{2 \rar 2}\right)
=\max_j \abs{\theta_j} 
\leq \bigg(\sum_j |\theta_j|^2\bigg)^{1/2}
=
\dist_{\SO(d)}(U, I).
\end{equation}
Next, we let $\Lambda$ be an arbitrary closed subgroup of $G$, let $H$ be an arbitrary precompact set of coset representatives, and observe that  we have $\tilde{R}_{\lambda h} = R$ for all $\lambda\in\Lambda, h\in H$. 
Also, Lemma~\ref{lem:dillip} implies $\|\psi_{\lambda h}\|_L=\|\psi\|_L$, and
so for $g=(U,0)$, the bound~\eqref{eqn: theorem result compact case} gives
\begin{equation}
|\Xi(\Psi(g \cdot f))(\lambda)-\Xi(\Psi(f))(\lambda)|
\leq \omega_d^{1/2} R^{(d+2)/2} \norm{\psi}_{L} \|f\|_{\bL^2(\mathbb{R}^d)}\cdot\dist_{\SO(d)}(U, I).\label{eqn: rotation bound examle from theorem}
\end{equation}
We see that this bound does not depend on the choice of $\Lambda$, which will not be the case in our next example (Example~\ref{ex: cont trans}).
While this is evidence that the bound may not be tight in general, it does show that for a particular application, one may choose a pooling domain that decreases computational complexity while preserving distinctions relevant to the task without changing this bound.
For example, in Section~\ref{sec: decreasing training set}, we want to distinguish between horizontally dominant and vertically dominant ellipses and thus do not pool rotations of $90^\circ$ together.
\end{example}

\begin{remark}\label{rem: rotation better bound}
In the case where $d=2$, we may further simplify~\eqref{eqn: rotation bound examle from theorem} by noting that we may write a generic element of $\SO(2)$ as 
\[
U = \begin{pmatrix}
    \cos(\theta) & -\sin(\theta) \\
    \sin(\theta) & \cos(\theta)
\end{pmatrix},
\]
with $\theta \in (-\pi,\pi]$.
So, since $\abs{\theta} = \dist_{\SO(2)}(U,I)$, the right-hand side of~\eqref{eqn: rotation bound examle from theorem} becomes
\begin{equation}
\pi^{1/2} R^2 \norm{\psi}_{L} \|f\|_{\bL^2(\mathbb{R}^2)}\abs{\theta}.\label{eqn: 2drotation bound examle from theorem}
\end{equation}
We also note that we may improve on this estimate by using techniques specific to $\SO(2)$ to show that for every $\lambda$ we have 
\begin{align}\label{eqn: rotation specific bound}
|\Xi(\Psi(g\cdot f))(\lambda)-\Xi(\Psi(f))(\lambda)|
\leq  2\pi^{1/2}R^2\|\psi\|_L \sin\left(\frac{|\theta|}{2}\right)\|f\|_{\bL^2(\mathbb{R}^2)}.
\end{align}
For a proof of~\eqref{eqn: rotation specific bound}, please see Appendix~\ref{sec: details on better rotation bound}, where it is stated as Proposition~\ref{prop: Rotation special case}.
Since $\sin(x)\sim x$ as $x\to0$,
the right-hand sides of~\eqref{eqn: 2drotation bound examle from theorem} and~\eqref{eqn: rotation specific bound} agree to leading order.
See Section~\ref{sec: continuity of pooling} for a numerical illustration in the context of a digital implementation of $\Xi\circ\Psi$.
\end{remark}

\begin{example}[Translations]\label{ex: cont trans}
For the translation group $G=\bR^d$, we use the translation-invariant Euclidean metric, i.e., $\text{dist}_{\text{Eucl}}(b,0)=\|b\|_2$.
Fix $c>0$, let $\Lambda$ be a lattice of the form $c\mathbb{Z}^d=\{(cm_1,\ldots,c m_d):m_i\in\mathbb{Z}\}$, and choose $H=[-\frac{c}{2},\frac{c}{2})^d$.
Recall that $C_I=0$.
We also have
\[
\tilde{R}_{\lambda h} = \norm{I}_{2 \rar 2} R + \norm{b_{\lambda h}}_2 \leq R + \norm{b_{\lambda}}_2 +\frac{c}{2}\sqrt{d},
\]
and $\|\psi_{\lambda h}\|_L=\|\psi\|_L$ by  Lemma~\ref{lem:dillip}. Thus, for every $\lambda$, the bound from~\eqref{eqn: theorem result compact case} gives 
\begin{equation}
|\Xi(\Psi((I,b)\cdot f))(\lambda)-\Xi(\Psi(f))(\lambda)|
\leq 
\omega_d^{1/2} \left(R+\|b_\lambda\|_2+\frac{c}{2}\sqrt{d}\right)^{d/2}\|\psi\|_{L}\|f\|_{\bL^2(\mathbb{R}^d)}\cdot\dist_{\operatorname{Eucl}}(b,0).\label{eqn:thm2trans}
\end{equation}
The displayed upper bound decreases as the lattice spacing $c$ tends to zero.
\end{example}

\begin{example}[Isotropic dilations]\label{ex: cont dilations}
The isotropic dilation group $G=\{s I : s>0\}\cong \bR_{>0} = (0,\infty)$ has a standard invariant distance:
\[
\dist_{\bR_{>0}} (s, \tilde{s}) = \abs{\ln(s/\tilde{s})}.
\]
Fix $a>1$,  let $\Lambda$ be the subgroup of $G$ corresponding to powers of $a$, i.e., $\Lambda=\{\lambda_j(x)=a^jx: j\in\mathbb{Z}\}$, and choose $H=[1,a)$.
To interpret the results of Theorem~\ref{thm: continuity of xi} in this setting, we note that  
\[
C_{sI}= \abs{\ln s}, 
\qquad
\tilde{R}_{\lambda_j h} = R a^j h, \qquad
\norm{\psi_{\lambda_j h}}_L \leq (a^j h)^{-(d+2)/2} \norm{\psi}_L,
\]
where $\lambda_j=a^jI$ and $\|\psi_{\lambda_j h}\|_L$ is bounded  via Lemma~\ref{lem:dillip}.  
Thus, \eqref{eqn: theorem result compact case} gives
\[
|\Xi(\Psi(g\cdot f))(\lambda)-\Xi(\Psi(f))(\lambda)|
\leq 
\left(\frac{d}{2} \norm{\psi}_{\bL^2(\bR^d)} + 
R^{(d+2)/2} \omega_d^{1/2} \norm{\psi}_L\right)\|f\|_{\bL^2(\mathbb{R}^d)}\cdot\dist_{\bR_{>0}}(s,1).
\]
As in Example~\ref{ex: cont rotations}, and unlike
Example~\ref{ex: cont trans}, our bound is independent of the
sampling subgroup $\Lambda$, here determined by the dilation factor $a$.
In Example~\ref{ex:dirsim} below, however, combining isotropic
dilations with rotations and translations yields a bound that
depends on $a$.
\end{example}

In the examples thus far, $G\leq\AGL(\bR^d)$ has either been a subgroup of $\GL(\bR^d)$ or of $\bR^d$.
The next two examples have nontrivial components in both $\GL(\bR^2)$ and $\bR^2$.

\begin{ex}[Rototranslations]\label{ex:rot4}
Let $G = \SE(2)=\SO(2) \ltimes \bR^2$.
For $\theta\in (-\pi,\pi]$, we define $U_\theta\in \SO(2)$ by 
\[
U_\theta = \begin{pmatrix} \cos(\theta) & -\sin(\theta) \\ \sin(\theta) & \cos(\theta) \end{pmatrix}
\]
and consider the following right-invariant metric on the special Euclidean group $G$
\begin{equation}\label{eqn:metse2}
\dist_{\SE(2)} ((U_\theta,b),(U_{\tilde{\theta}},\tilde{b})) = \min_{k\in \bZ} \abs{\theta-\tilde{\theta}+2\pi k} +\norm{b-U_{\theta-\tilde{\theta}}\tilde{b}}_2.
\end{equation}
This metric arises by computing $(V,v) = (U_\theta,b)(U_{\tilde{\theta}},\tilde{b})^{-1}$, applying the standard bi-invariant geodesic distance on $\SO(2)$ from $I$ to $V$ to attain the first term and the (bi-invariant) Euclidean distance between $v$ and $0$ to attain the second term.
Let $L = \left\{U_{k \pi/2} \, : \, k \in \{0,1,2,3\} \right\}$ be the subgroup of $\SO(2)$ with four elements, 
and let $\Lambda = L \ltimes c\bZ^2$ for some fixed $c>0$.
Note that $c\bZ^2$ is invariant under multiplication by elements of $L$, i.e., $L(c\bZ^2)= c\bZ^2$,
yielding that $\Lambda\leq G\leq \AGL(\bR^2)$ are nested subgroups.
Let $H=\left\{U_\theta \, : \, \theta \in [0,\pi/2)\right\} \times [-\frac{c}{2},\frac{c}{2})^2$. 

We recall from~\eqref{eqn:Cpurerot} that
$C_{U_\theta} =  \abs{\theta}$ and
similar to Example~\ref{ex: cont trans} note that
\[
\tilde{R}_{\lambda h} = R + \norm{b_{\lambda h}}_2 \leq R+ \norm{b_\lambda}_2 + \frac{c}{\sqrt{2}},
\qquad 
\norm{\psi_{\lambda h}}_L = \norm{\psi}_L.
\]
Plugging this into~\eqref{eqn: theorem result compact case}, we obtain for every $\lambda$ that
\begin{align}
\lefteqn{|\Xi(\Psi((U_\theta,b)\cdot f))(\lambda)-\Xi(\Psi(f))(\lambda)|}\nonumber\\
&\leq \left(\left(R+ \norm{b_\lambda}_2 + \frac{c}{\sqrt{2}} \right) \abs{\theta}+\norm{b}_2 \right)\left(R+ \norm{b_\lambda}_2 + \frac{c}{\sqrt{2}} \right)\pi^{1/2}\norm{\psi}_{L}\|f\|_{\bL^2(\mathbb{R}^2)}\nonumber\\
&\leq\max\left\{R+ \norm{b_\lambda}_2 + \frac{c}{\sqrt{2}},1 \right\} \left(R+ \norm{b_\lambda}_2 + \frac{c}{\sqrt{2}} \right)\pi^{1/2}\norm{\psi}_{L}\|f\|_{\bL^2(\mathbb{R}^2)}
\cdot\dist_{\SE(2)}((U_\theta,b),(I,0)).\label{eqn:bdrot4}
\end{align}
The only part of~\eqref{eqn:bdrot4} that depends on $\Lambda$ is $R+\|b_\lambda\|_2+\frac{c}{\sqrt{2}}$, which decreases as the size of the lattice $c$ tends to $0$.
Interestingly, we obtain the same group-Lipschitz bound on $\Psi$ instead of $\Xi\circ\Psi$ by equivariance and the Cauchy--Schwarz inequality:
\[
\abs{\Psi((U_\theta,b)\cdot f)(\lambda)-\Psi(f)(\lambda)}
\leq \norm{\psi_{(U_\theta,b)^{-1}\lambda}-\psi_{\lambda}}_{\bL^2(\bR^2)} \norm{f}_{\bL^2(\bR^2)},
\]
from which an application of Theorem~\ref{prop:contest} and Lemma~\ref{lem: continuity simple} reproduces the bound~\eqref{eqn:bdrot4}.
Thus, pooling with $\Xi$ has a benign impact on sensitivity to rototranslations while reducing the size of the representation.
\end{ex}

Many invariant neural network architectures following the geometric deep learning blueprint (e.g., \cite{cohen2016group, hong2022multiscale, marcos2016learning,laptev2016ti}) combine pooling over $L\leq\GL(\bR^d)$ with spatial max pooling.
Our next example considers the case where $L$ is the group of rotations of the plane by multiples of $\pi/2$. 
\begin{ex}[Discrete rototranslations]\label{ex:rot4to2}
Let $G = \left\{U_{k \pi/2} \, : \, k \in \{0,1,2,3\} \right\} \ltimes \bZ^2$ (i.e., a discrete subgroup of $\SE(2)$), $\Lambda = \{I\}  \ltimes c\bZ^2$ for $c\in\mathbb{N}$ with $c\geq 2$, and
\[
H = \left\{U_{k \pi/2} \, : \, k \in \{0,1,2,3\}\right\}\times \left\{-\lfloor\tfrac{c}{2}\rfloor,\ldots,\lfloor\tfrac{c-1}{2} \rfloor\right\}^2.
\]
The group $G$, which is also known as $p4$, was studied in the foundational
work on group-equivariant neural networks~\cite{cohen2016group}.
Since $G$ and $\Lambda$ are subgroups of $\SE(2)$, we may reuse calculations from Example~\ref{ex:rot4}:
\begin{align}
\lefteqn{|\Xi(\Psi((U_\theta,b)\cdot f))(\lambda)-\Xi(\Psi(f))(\lambda)|}\nonumber\\
&\leq \left(\left(R+ \norm{b_\lambda}_2 + \frac{c}{\sqrt{2}} \right) \abs{\theta}+\norm{b}_2 \right)\left(R+ \norm{b_\lambda}_2 + \frac{c}{\sqrt{2}} \right)\pi^{1/2}\norm{\psi}_{L}\|f\|_{\bL^2(\mathbb{R}^2)}\nonumber\\
&\leq\max\left\{R+ \norm{b_\lambda}_2 + \frac{c}{\sqrt{2}},1 \right\} \left(R+ \norm{b_\lambda}_2 + \frac{c}{\sqrt{2}} \right)\pi^{1/2}\norm{\psi}_{L}\|f\|_{\bL^2(\mathbb{R}^d)}
\cdot \dist_{\SE(2)}((U_\theta,b),(I,0)).\nonumber
\end{align}
This bound is not tight in all situations, since, e.g., for $c$ odd and $\lambda = (I,0)$, we have
\[
|\Xi(\Psi((U_{\pi},0)\cdot f))(\lambda)-\Xi(\Psi(f))(\lambda)| = 0,
\]
while $\dist_{\SE(2)}((U_{\pi},0),(I,0)) = \pi$.
\end{ex}

\subsection{Group-Lipschitz Continuity of Affine Actions Which Do Not Form a Group}\label{sec:affine4realz}

In Section~\ref{sec: Group lipschitz pooling}, the retained index set $\Lambda$ takes the form $\Lambda_D\ltimes\Lambda_T$, meaning each element of $\Lambda_D\leq\GL(\bR^d)$ fixes $\Lambda_T\leq\bR^d$ as a set.
This compatibility condition ensures that $\Lambda_D\times\Lambda_T$ is a subgroup under affine composition, but this is overly restrictive.
For example, the dyadic dilation group $\{2^jI:j\in\bZ\}$ does not preserve the integer lattice $\bZ^d$, even though existing wavelets and neural networks simultaneously accommodate both groups (as well as other combinations of dilation and translation groups)~\cite{cohen2016group,romero2020wavelet,
dieleman2016exploiting,fasel2006rotation,sifre2013rotation,
ver2023using}.
This motivates an extension of our theory that allows one to choose the sampling subgroups $\Lambda_D\leq\GL(\bR^d)$ and $\Lambda_T\leq\bR^d$ independently of each other.

In this section, we assume $G = L \ltimes \bR^d$ for some closed $L \leq \GL(\bR^d)$.
Instead of choosing a single subgroup $\Lambda$ of $G$, we choose a closed dilation subgroup $\Lambda_D \leq L$ and a closed translation subgroup $\Lambda_T \leq \bR^d$.
Importantly, we do not insist that $\Lambda_D \times \Lambda_T$ forms a subgroup of $\AGL(\bR^d)$.
We further assume that $L/\Lambda_D$ and $\bR^d/\Lambda_T$ are compact and choose $H_D$ as a precompact set of right coset representatives of $\Lambda_D$ in $L$ and similarly select a precompact set $H_T$ of right coset representatives of $\Lambda_T$ in $\bR^d$.
Since $\Lambda_D \times \Lambda_T$ is not necessarily a subgroup, $H_D \times H_T$ need not be a set of coset representatives, but it is a \emph{transversal}, meaning every element of $G$ equals $(\lambda_Dh_D,\lambda_T+h_T)$ for a unique choice of $(\lambda_D,\lambda_T)\in\Lambda_D\times\Lambda_T$ and $(h_D,h_T)\in H_D\times H_T$.

Now that we have a choice of transversal, we define the corresponding transversal pooling operator $\Xi_w: \mathbf{C}_b(G)\rar\bC_b(\Lambda_D \times \Lambda_T)$ by
 \begin{equation}
     \label{eqn: pooling2}
     \Xi_w(F)(\lambda_D,\lambda_T) = \sup_{(h_D,h_T)\in H_D \times H_T} F(\lambda_D h_D,\lambda_T+ h_T).
 \end{equation}
Next, we observe that $\Xi_w$ exhibits some level of group equivariance.
Let $\Lambda_T$ act on $\bC_b(\Lambda_D \times \Lambda_T)$ by
 \begin{equation*}
({\lambda}_T \cdot F)(\tilde\lambda_D,\tilde\lambda_T) 
= F(\tilde\lambda_D, \tilde\lambda_T-{\lambda}_T)
\end{equation*} 
for every $(\tilde\lambda_D,\tilde\lambda_T)\in \Lambda_D \times \Lambda_T$ and ${\lambda}_T \in \Lambda_T$.
Then the following result may be proved by repeating the proof of Theorem~\ref{prop: xi equivariance}
mutatis mutandis.

\begin{prop}\label{prop: xi equivariance2}
$\Xi_w$ as defined in~\eqref{eqn: pooling2} is $\Lambda_T$-equivariant from $\mathbf{C}_b(G)$ to $\bC_b(\Lambda_D \times \Lambda_T)$, i.e., for all $F\in\bC_b(G)$ and $\lambda_T\in\Lambda_T$, we have
\[
\Xi_w(\lambda_T\cdot F)={\lambda_T}\cdot \Xi_w(F).
\]    
\end{prop}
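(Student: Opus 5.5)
The plan is to repeat the one-line computation from the proof of Theorem~\ref{prop: xi equivariance}, with one preliminary step. First I would fix the action of $\Lambda_T$ on $\bC_b(G)$. The natural choice is the restriction of the regular representation~\eqref{eqn: outer action} to pure translations: identify $\lambda_T\in\Lambda_T$ with $(I,\lambda_T)\in G$, which is legitimate since $G=L\ltimes\bR^d$ contains all translations. With the composition law $(A,b)\cdot(\tilde A,\tilde b)=(A\tilde A,b+A\tilde b)$ and $(I,\lambda_T)^{-1}=(I,-\lambda_T)$, we get
\[
(\lambda_T\cdot F)(A,b)=F\big((I,-\lambda_T)(A,b)\big)=F(A,\,b-\lambda_T).
\]
The point to record is that left multiplication by a pure translation only shifts the translation coordinate and leaves the linear part $A$ untouched. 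This is what makes the argument go through even though $\Lambda_D\times\Lambda_T$ need not be a group.

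Next, for $F\in\bC_b(G)$, $\lambda_T\in\Lambda_T$ and $(\tilde\lambda_D,\tilde\lambda_T)\in\Lambda_D\times\Lambda_T$, I would unfold~\eqref{eqn: pooling2}:
\[
\Xi_w(\lambda_T\cdot F)(\tilde\lambda_D,\tilde\lambda_T)
=\sup_{(h_D,h_T)\in H_D\times H_T}F\big(\tilde\lambda_D h_D,\,(\tilde\lambda_T-\lambda_T)+h_T\big)
=\Xi_w(F)(\tilde\lambda_D,\tilde\lambda_T-\lambda_T).
\]
The last expression is $(\lambda_T\cdot\Xi_w(F))(\tilde\lambda_D,\tilde\lambda_T)$ by the definition of the $\Lambda_T$-action on $\bC_b(\Lambda_D\times\Lambda_T)$. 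The regrouping in the middle step uses only commutativity and associativity of addition in $\bR^d$. The identification with $\Xi_w(F)$ evaluated at a point requires $\tilde\lambda_T-\lambda_T\in\Lambda_T$, which holds because $\Lambda_T$ is a subgroup of $\bR^d$.

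There is no real obstacle. The only care needed is bookkeeping. One must make sure that $\lambda_T$ acts through $G$ on the left, so that the dilation component $\tilde\lambda_D h_D$ is not conjugated; a right action or an action by a general element of $\Lambda_D\times\Lambda_T$ would introduce factors like $\lambda_D\lambda_T$ that break the argument. Well-definedness of $\Xi_w$ as a map into $\bC_b(\Lambda_D\times\Lambda_T)$ is assumed in the statement, exactly as it is for $\Xi$ in Theorem~\ref{prop: xi equivariance}, so it needs no further comment.
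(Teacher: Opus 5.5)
Your proof is correct and is exactly the computation the paper has in mind when it says the proof of Theorem~\ref{prop: xi equivariance} can be repeated mutatis mutandis. Your one addition is to make explicit that $\lambda_T$ acts on $\bC_b(G)$ through $(I,\lambda_T)$ via~\eqref{eqn: outer action}, so that $(\lambda_T\cdot F)(A,b)=F(A,b-\lambda_T)$. The paper leaves this implicit, and it is the right choice.
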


Recall from Remark~\ref{rem: Proof of Lemma Continuity Simple} that the proof of Lemma~\ref{lem: continuity simple} does not use the $\Lambda$-equivariance of~$\Xi$. 
Therefore, a result analogous to Lemma~\ref{lem: continuity simple} holds with $\Xi_w$ in place of $\Xi$. 
Furthermore, the subgroup $\Lambda$ appears in neither the proof nor the statement of Theorem~\ref{prop:contest}. 
Thus, the proof of Theorem~\ref{thm: continuity of xi} also gives the following result.

\begin{prop}[Group-Lipschitz Continuity of $\Xi_w\circ \Psi$]\label{prop: continuity of xiw}
Let $g=(A,b)\in G$, and let $\psi \in \bW^{1,2}(\bR^d)$. Assume $\det(A) >0$ and that  $\big{\|}\norm{\cdot}_2 \nabla \psi\big{\|}_{\bL^2(\bR^d)}< \infty$. 
Then for every $f\in\bL^2(\mathbb{R}^d)$ and $(\lambda_D,\lambda_T)\in\Lambda_D \times \Lambda_T$, we have 
\begin{align*}
\lefteqn{|\Xi_w(\Psi(g\cdot f))(\lambda_D,\lambda_T)-\Xi_w(\Psi(f))(\lambda_D,\lambda_T)|}\\
&\leq \sup_{(h_D,h_T)\in H_D \times H_T}
\bigg( C_{A} \big{\|}\norm{\cdot}_2 \nabla \psi_{(\lambda_D h_D,\lambda_T+h_T)}\big{\|}_{\bL^2(\bR^d)}
+\frac{1}{2}\abs{\ln \det A^{}} \norm{\psi_{(\lambda_D h_D,\lambda_T+h_T)}}_{\bL^2(\bR^d)} \\
&\hspace{3.2in}  
+\norm{A^{-1}b}_2 \norm{\nabla \psi_{(\lambda_D h_D,\lambda_T+h_T)}}_{\bL^2(\bR^d)}\bigg)\cdot\|f\|_{\bL^2(\mathbb{R}^d)},
\end{align*}
where $C_A$ is defined as in~\eqref{eqn: CA}.
If we further assume that $\psi$ is Lipschitz and compactly supported with Lipschitz constant $\norm{\psi}_{L}$ and $\supp(\psi) \subseteq B(0,R)$, then for every $f\in\mathbf{L}^2(\mathbb{R}^d)$  and $(\lambda_D,\lambda_T)\in\Lambda_D \times \Lambda_T$, we have
\begin{align}
\lefteqn{|\Xi_w(\Psi(g\cdot f))(\lambda_D,\lambda_T)-\Xi_w(\Psi(f))(\lambda_D,\lambda_T)|}\nonumber\\
&\leq \bigg( \sup_{(h_D,h_T) \in H_D \times H_T} \left( \tilde{R}_{(\lambda_D h_D,\lambda_T+h_T)} C_{A}+\norm{A^{-1}b}_2\right) \tilde{R}_{(\lambda_D h_D,\lambda_T+h_T)}^{d/2}\omega_d^{1/2}\norm{\psi_{(\lambda_D h_D,\lambda_T+h_T)}}_{L}\nonumber \\
&\hspace{3.8in} + \frac{1}{2}\abs{\ln \det A}\norm{\psi}_{\bL^2(\bR^d)}\bigg)\cdot\|f\|_{\bL^2(\mathbb{R}^d)},
\label{eqn: theorem result compact case w}
\end{align}
where $\omega_d$ is the volume of the unit ball in $\mathbb{R}^d$ and
$\tilde{R}_{(\lambda_D h_D,\lambda_T+h_T)} = R\norm{\lambda_D h_D}_{2 \rar 2} + \norm{\lambda_T+h_T}_2$.
\end{prop}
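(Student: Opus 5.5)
The proof will follow that of Theorem~\ref{thm: continuity of xi} verbatim, with the single index $\lambda h$ replaced by the transversal element $k:=(\lambda_D h_D,\lambda_T+h_T)\in G$. The first step is the analogue of Lemma~\ref{lem: continuity simple} for $\Xi_w$. By Proposition~\ref{prop: Psi Equivariant}, $\Psi(g\cdot f)=g\cdot\Psi(f)$. Hence
\[
\Xi_w(\Psi(g\cdot f))(\lambda_D,\lambda_T)=\sup_{(h_D,h_T)\in H_D\times H_T}\Psi(f)(g^{-1}k)=\sup_{(h_D,h_T)}\langle f,\psi_{g^{-1}k}\rangle_{\bL^2(\bR^d)},
\]
and $\Xi_w(\Psi(f))(\lambda_D,\lambda_T)=\sup_{(h_D,h_T)}\langle f,\psi_k\rangle$. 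Both suprema are finite because $\Psi(f)$ is bounded. Using $|\sup a-\sup b|\le\sup|a-b|$ and Cauchy--Schwarz, the difference is bounded by $\|f\|_{\bL^2(\bR^d)}\sup_{(h_D,h_T)}\|\psi_{g^{-1}k}-\psi_k\|_{\bL^2(\bR^d)}$. As Remark~\ref{rem: Proof of Lemma Continuity Simple} notes, this step uses only $G$-equivariance of $\Psi$ and that $k\in G$. It never uses any group structure of $\Lambda_D\times\Lambda_T$.

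Next, for each fixed $k$, I apply Theorem~\ref{prop:contest} with $\psi_k$ in place of $\psi$ and $g^{-1}=(A^{-1},-A^{-1}b)$ in place of $g$. This is legitimate because $\psi_{g^{-1}k}=g^{-1}\cdot\psi_k$ and $\det A^{-1}>0$. The hypotheses transfer because $\psi_k$ is an affine change of variables of $\psi$: $\psi_k\in\bW^{1,2}$ and $\|\|\cdot\|_2\nabla\psi_k\|_{\bL^2}<\infty$ by substitution. I then use three facts: $C_{A^{-1}}=C_A$, which holds because the max of logs is symmetric and the orthogonal factor of $A^{-1}$ is $U^{-1}$, whose angles are negated; $|\ln\det A^{-1}|=|\ln\det A|$; and $\|-A^{-1}b\|_2=\|A^{-1}b\|_2$. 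These give the bracketed term of the first claim. Taking the supremum over $H_D\times H_T$ gives the first claim.

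For the compact case, I first check the support: $\supp\psi_k\subseteq kB(0,R)\subseteq B(0,R\|\lambda_Dh_D\|_{2\to2}+\|\lambda_T+h_T\|_2)=B(0,\tilde R_k)$, since $k$ has linear part $\lambda_Dh_D$ and translation $\lambda_T+h_T$. The function $\psi_k$ is Lipschitz by Lemma~\ref{lem:dillip}, and $\|\psi_k\|_{\bL^2}=\|\psi\|_{\bL^2}$ by unitarity. Applying \eqref{eqn: prop compact part} with radius $\tilde R_k$ and group element $g^{-1}$, and pulling the $k$-independent term $\tfrac12|\ln\det A|\,\|\psi\|_{\bL^2}$ out of the supremum, yields \eqref{eqn: theorem result compact case w}. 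The only point needing care is the bookkeeping of the affine parameters of the transversal element $k$. In particular, the bound must use the translation $\lambda_T+h_T$ and not a group product, because $\Lambda_D\times\Lambda_T$ need not be closed under composition. The remaining steps carry over from the proof of Theorem~\ref{thm: continuity of xi} without change.
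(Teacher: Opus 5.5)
Your proposal is correct and takes the same route as the paper. The paper obtains this proposition by observing that the proof of Lemma~\ref{lem: continuity simple} never uses $\Lambda$-equivariance, so it transfers to $\Xi_w$ with evaluation points $(\lambda_D h_D,\lambda_T+h_T)\in G$, and then rerunning the proof of Theorem~\ref{thm: continuity of xi} via Theorem~\ref{prop:contest} applied to $g^{-1}$ and $\psi_k$. Your explicit checks fill in details the paper leaves implicit: that $C_{A^{-1}}=C_A$, that the hypotheses pass to $\psi_k$, and that the support radius uses the translation $\lambda_T+h_T$ rather than a group product.
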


We now turn our attention to the  direct similitude group $(\bR_{>0} \times \SO(d)) \ltimes \bR^d$ and  would like to show that $\Xi_w\circ\Psi$ is group-Lipschitz. 
To retain the explicit rotation, translation, and logarithmic scaling terms in our estimates, we use the following right-invariant semimetric.\footnote{A \emph{semimetric} is a non-negatively valued, positive definite, symmetric, bivariate map (see, e.g., \cite{gottlieb2017nearly}).
In other words, a semimetric is a metric, except it possibly violates the triangle inequality.}

\begin{lem}\label{lem:similmet}
Let $G$ be the direct similitude group $(\bR_{>0} \times \SO(d)) \ltimes \bR^d$.
Define $N: G \rar \bR_{\geq 0}$ as
\[
N((s,U,b)) = \abs{\ln s} + \dist_{\SO(d)}(U,I) + s^{-1}\norm{b}_2,
\]
where $(s,U,b)$ stands for the element of $G$ mapping $x \mapsto s U x + b$ and $\dist_{\SO(d)}$ is the bi-invariant Riemannian distance~\eqref{eqn:metSOd}.
Then, 
\[\disim_{(\bR_{>0} \times \SO(d)) \ltimes \bR^d} : \left((\bR_{>0} \times \SO(d)) \ltimes \bR^d\right) \times \left((\bR_{>0} \times \SO(d)) \ltimes \bR^d\right) \rar \bR_{\geq 0}\]
defined as
\begin{align}
\disim_{(\bR_{>0} \times \SO(d)) \ltimes \bR^d}\left( (s,U,b), (\tilde{s},\tilde{U},\tilde{b})\right)
&= \min\{ N((s,U,b)(\tilde{s},\tilde{U},\tilde{b})^{-1}),\, N((\tilde{s},\tilde{U},\tilde{b})(s,U,b)^{-1})\}\label{eqn:simdisim}
\end{align}
is positive definite, symmetric, and right-invariant.
\end{lem}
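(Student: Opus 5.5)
We verify the three properties directly from the definition~\eqref{eqn:simdisim}, writing $g=(s,U,b)$ and $\tilde g=(\tilde s,\tilde U,\tilde b)$ and using the group law $(A,b)(\tilde A,\tilde b)=(A\tilde A,b+A\tilde b)$ together with $(A,b)^{-1}=(A^{-1},-A^{-1}b)$.

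\emph{Symmetry} is immediate. Swapping $g$ and $\tilde g$ interchanges the two arguments of the minimum in~\eqref{eqn:simdisim}, and the minimum is symmetric in its arguments.

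\emph{Right-invariance} uses only the definition as well. For any $k\in G$ we have $(gk)(\tilde g k)^{-1}=gkk^{-1}\tilde g^{-1}=g\tilde g^{-1}$, and likewise $(\tilde g k)(gk)^{-1}=\tilde g g^{-1}$. Hence both arguments of the minimum are unchanged under right multiplication, and so is $\disim$. Notably, no property of $N$ is needed for either symmetry or invariance. In particular we never need $N(k^{-1})=N(k)$, which is false in general because of the factor $s^{-1}$ in the translation term; this asymmetry is why the minimum is taken in the first place.

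\emph{Nonnegativity and positive definiteness} reduce to a statement about $N$. Nonnegativity is clear because each of the three terms of $N$ is nonnegative, using that $\dist_{\SO(d)}$ is a metric by~\eqref{eqn:metSOd}. For definiteness, first check $N(e)=0$. With $e=(1,I,0)$ we get $|\ln 1|+0+0=0$, so $\disim(g,g)=N(e)=0$.

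Conversely, suppose $\disim(g,\tilde g)=0$. Then $N(k)=0$ for $k=g\tilde g^{-1}$ or for $k=\tilde g g^{-1}$. Writing $k=(s',U',b')$, the vanishing of the three nonnegative summands forces the following:
\begin{itemize}
\item $|\ln s'|=0$, so $s'=1$;
\item $\dist_{\SO(d)}(U',I)=0$, so every Schur angle $\theta_j$ vanishes and $U'=I$;
\item $s'^{-1}\|b'\|_2=0$, so $b'=0$ (note $s'^{-1}>0$).
\end{itemize}
Thus $k=e$, which gives $g=\tilde g$ in either case.

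The only mildly delicate point is this definiteness step. One must confirm that $\dist_{\SO(d)}(U',I)=0$ really forces $U'=I$. This holds because $U'$ is orthogonally similar to the block-diagonal matrix with blocks~\eqref{eqn:schurrot}, and all angles are zero. It is also worth checking that the parametrization $(s,U,b)\leftrightarrow (sU,b)$ is well defined, which it is since $s>0$ and $U\in\SO(d)$ are uniquely recovered from $sU$ via $s=\det(sU)^{1/d}$. With these points checked, the lemma follows. We make no claim about the triangle inequality; that is exactly why $\disim$ is only a semimetric.
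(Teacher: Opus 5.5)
Your proof is correct and follows the paper's argument: symmetry and right-invariance are read off directly from the definition (via $(gk)(\tilde g k)^{-1}=g\tilde g^{-1}$), and positive definiteness reduces to the fact that $N$ vanishes only at the identity $(1,I,0)$. Your extra checks are accurate elaborations of steps the paper leaves implicit, namely that $\dist_{\SO(d)}(U',I)=0$ forces $U'=I$ and that $N(k^{-1})\neq N(k)$ in general, which is why the minimum is needed.
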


\begin{proof}
    The definition of $\disim_{(\bR_{>0} \times \SO(d)) \ltimes \bR^d}$ immediately yields that it is non-negatively valued, symmetric, and right-invariant.
    Next, $N((s,U,b))=0$ precisely when all three constituent terms are zero,
    meaning $(s,U,b)=(1,I,0)$.
    It follows that $\disim_{(\bR_{>0} \times \SO(d)) \ltimes \bR^d}$ is positive definite.
\end{proof}

\begin{ex}[Direct similitudes]
\label{ex:dirsim} Let $G$ be the direct similitude group $(\bR_{>0} \times \SO(d)) \ltimes \bR^d$ that acts on $\bR^d$ as $x \mapsto s U x + b$.
In order to specify $\Xi_w$, fix $a>1$, $c >0$, and a closed subgroup $\Lambda_r \leq \SO(d)$, where $H_r$ is a precompact set of right coset representatives of $\Lambda_r$ in $\SO(d)$.
Then define $\Lambda$ to be the set
\[
\Lambda=\{(a^\ell, W, ck) \,:\,\ell \in \bZ,\, W \in \Lambda_r, \,k\in\mathbb{Z}^d\}
\]
and $H$ the transversal $H = [1,a) \times H_r \times \left[-\frac{c}{2},\frac{c}{2} \right)^d$.
We will use Proposition~\ref{prop: continuity of xiw} to show that $\Xi_w\circ\Psi$ is group-Lipschitz with respect to the semimetric defined in Lemma~\ref{lem:similmet}.
To that end, \eqref{eqn:Cpurerot} gives
\[
C_{s U} \leq \abs{\ln s} + \dist_{\SO(d)}(U,I)
\]
for every $s > 0$ and $U \in \SO(d)$.
Next, for $(a^\ell,W,ck)\in\Lambda$, we write $\lambda_D:=a^\ell W$ and $\lambda_T:=ck$, and for $h=(h_s,h_r,h_T)\in H$, we write $h_D:=h_s h_r$.
Then
\[
\tilde{R}_{(\lambda_D h_D,\lambda_T+h_T)} 
= R\norm{\lambda_D h_D}_{2 \rar 2} + \norm{\lambda_T+h_T}_2
= a^\ell h_s R + \norm{ck +h_T}_2
\leq a^\ell h_s R +c\norm{k}_2 + \frac{c}{2} \sqrt{d},
\]
while Lemma~\ref{lem:dillip} gives
\[
\norm{\psi_{(\lambda_D h_D,\lambda_T+h_T)}}_{L} 
\leq (a^\ell h_s)^{-d/2} (a^\ell h_s)^{-1} \norm{\psi}_L 
=(a^\ell h_s)^{-(d+2)/2}  \norm{\psi}_L.
\]
Finally, for each $(s,U,b)\in G$, we have $\norm{(sU)^{-1}b}_2 = s^{-1} \norm{b}_2$ and $\abs{\ln \det sU} = d \abs{\ln s}$.
Putting this all together, one may obtain from~\eqref{eqn: theorem result compact case w} that
\begin{equation}
\label{eqn:onesidesimil}
|\Xi_w(\Psi((sU, b)\cdot f))(\lambda_D,\lambda_T)-\Xi_w(\Psi(f))(\lambda_D,\lambda_T)|
\leq C \cdot N(s,U,b) \cdot \|f\|_{\bL^2(\mathbb{R}^d)}
\end{equation}
for every $f\in \bL^2(\mathbb{R}^d)$ and $(s,U,b)\in G$,
where
\begin{align}
C &:= \max \bigg\{ \omega_d^{1/2} \norm{\psi}_L \left( R + \frac{c}{a^\ell}\left(\norm{k}_2 + \sqrt{d}/2\right)\right)^{(d+2)/2} + \frac{d}{2}\norm{\psi}_{\bL^2(\bR^d)},\nonumber\\
&\hspace{2.5in}\omega_d^{1/2} \norm{\psi}_L(a^\ell)^{-1}\left( R + \frac{c}{a^\ell }\left(\norm{k}_2 + \sqrt{d}/2\right)\right)^{d/2} \bigg\}.\label{eqn:Cbdsimil}
\end{align}
Finally, we upgrade the bound~\eqref{eqn:onesidesimil} using the facts that 
\[
|\Xi_w(\Psi(g\cdot f))(\lambda_D,\lambda_T)-\Xi_w(\Psi(f))(\lambda_D,\lambda_T)|
= 
|\Xi_w(\Psi(g^{-1} \cdot (g\cdot f)))(\lambda_D,\lambda_T)-\Xi_w(\Psi(g \cdot f))(\lambda_D,\lambda_T)|
\]
and $\|g \cdot f\|_{\bL^2(\mathbb{R}^d)}
=\|f\|_{\bL^2(\mathbb{R}^d)}$:
\[
|\Xi_w(\Psi(g\cdot f))(\lambda_D,\lambda_T)-\Xi_w(\Psi(f))(\lambda_D,\lambda_T)|
\leq C \cdot \|f\|_{\bL^2(\mathbb{R}^d)}\cdot \disim_{(\bR_{>0} \times \SO(d)) \ltimes \bR^d}\left( (s,U,b), (1,I,0)\right).
\]
That is, $\Xi_w\circ\Psi$ is group-Lipschitz, as desired.
Analyzing~\eqref{eqn:Cbdsimil}, we see that $C$ is smaller when $c$ is smaller, i.e., when we pool over a smaller region of translations.  
This is in line with what we saw in Example~\ref{ex: cont trans}.
Meanwhile, unlike Example~\ref{ex: cont dilations}, the bound varies with $a$ in a way that depends on the sign of $\ell$.
\end{ex}

\section{Numerical Results}\label{sec: experiments}

In this section, we illustrate our stability estimates and evaluate \ournet{}s on real and synthetic data.\footnote{The code used to conduct all the experiments in this section can be found at \url{https://github.com/zyjux/trapnets}.}
Specifically, Section~\ref{sec: continuity of pooling} examines how the output of a digital implementation of $\Xi\circ\Psi$ changes under rotation, providing a qualitative illustration of our stability estimates.
Next, we demonstrate the performance of our method in data-limited environments in Section~\ref{sec: decreasing training set}.
Finally, we apply \ournet{}s to tropical cyclone data in Section~\ref{sec: experiments real world}. 
See Appendix~\ref{app: implementation} for implementation details.

\subsection{Group-Lipschitz Continuity of Transversal Pooling over a Digital Domain}
\label{sec: continuity of pooling}

\begin{figure}
    \centering
    \includegraphics[width=0.45\linewidth]{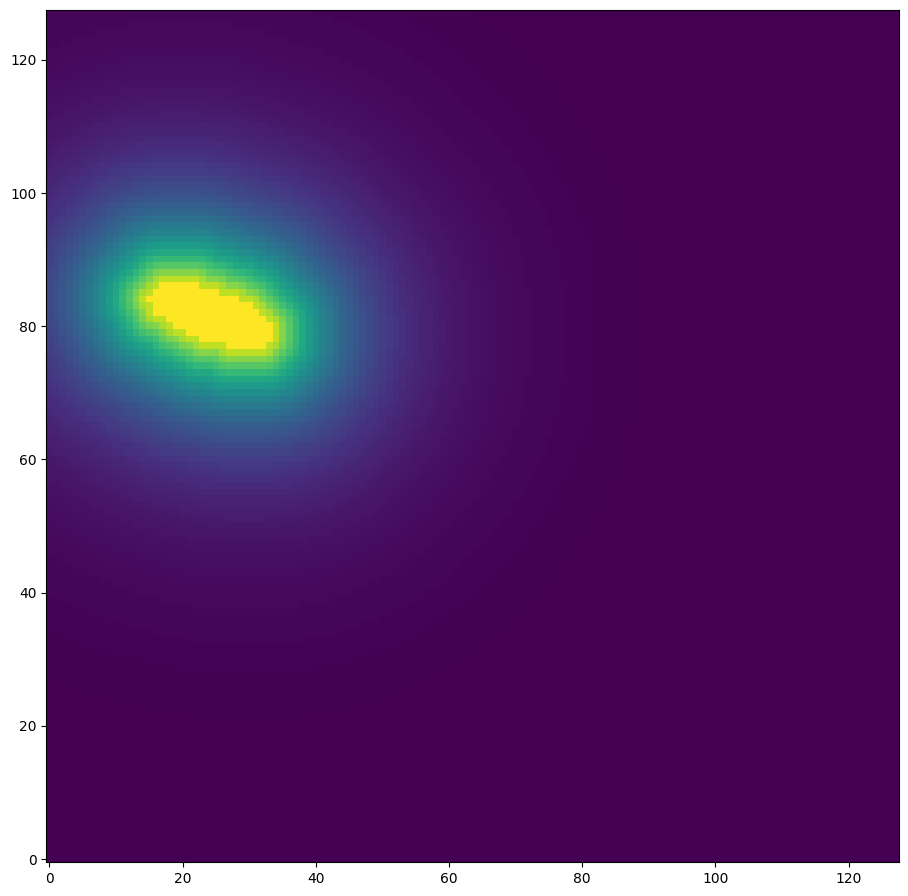}
    \qquad\quad
    \includegraphics[width=0.45\textwidth]{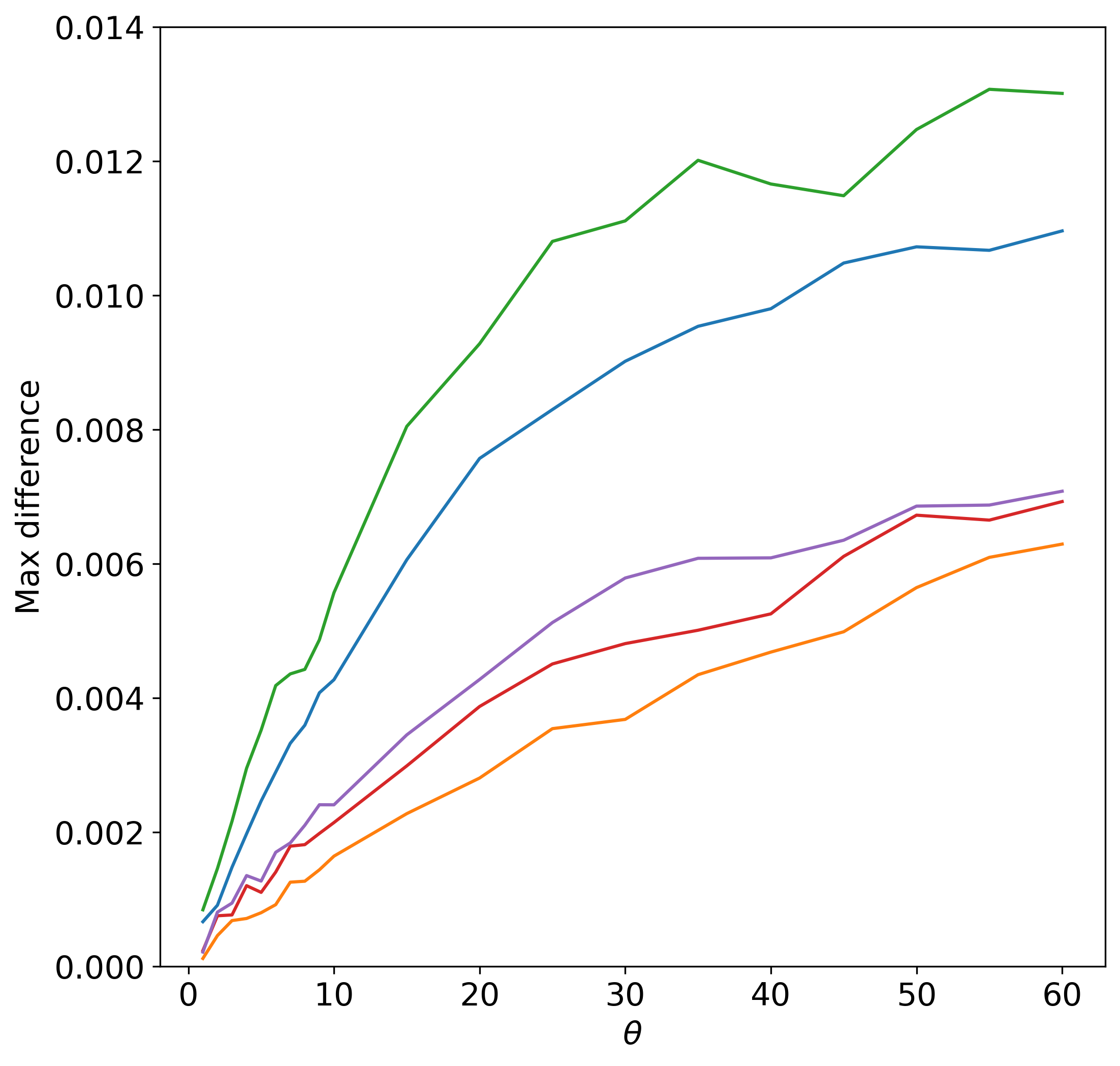}
    \caption{Left: A randomly generated  ellipse image used for our experiments in Section~\ref{sec: continuity of pooling}. Right: The maximum over $\lambda\in\Lambda$ of the absolute difference between the pooled wavelet coefficients of an image and its rotated version, divided by the $L^2$ norm of the unrotated image. The rotation angle $\theta$ is measured in degrees, and each colored curve corresponds to a different random trial.}
    \label{fig: example synthetic ellipse}
    \label{fig: continuity pooling vs not}
\end{figure}

Motivated by the stability estimate in Remark~\ref{rem: rotation better bound}, we first examine how a digital implementation of $\Xi\circ\Psi$ responds to two-dimensional rotations.
For $f\ne0$, dividing~\eqref{eqn: rotation specific bound} by $\|f\|_{\bL^2(\mathbb R^2)}$ gives
\begin{equation}
\label{eqn: Rotation continuity bound fraction}
\frac{|\Xi(\Psi((U_\theta,0)\cdot f))(\lambda)-\Xi(\Psi(f))(\lambda)|}{\|f\|_{\bL^2(\mathbb{R}^2)}}
\leq
2\sqrt{\pi}R^2\|\psi\|_{L}
\sin\left(\frac{|\theta|}{2}\right).
\end{equation}
For each $\theta$, we compute a digital analogue of the left-hand side of~\eqref{eqn: Rotation continuity bound fraction} and maximize over $\lambda\in\Lambda$.
Since digital rotations require resampling, this experiment provides a qualitative illustration of local stability.

Of course, we can only compute finitely many rotation angles.
Thankfully, the proof of~\eqref{eqn: rotation specific bound} also applies to finite subgroups of $\operatorname{SO}(2)$.
We choose $G\cong C_{360}$ to be the subgroup generated by the counterclockwise rotation by $1^\circ$.
For the transversal pooling operator $\Xi$, we choose the subgroup $\Lambda\cong C_{72}$ generated by a $5^\circ$ rotation, and we take $H$ to consist of rotations by $0^\circ,1^\circ,2^\circ,3^\circ,$ and $4^\circ$.
The filters are drawn at random and not trained; see Appendix~\ref{subsection.section 4.1} for implementation details.

For the input signals $f$, we use synthetic $128\times128$ images containing an ellipse $E$ of random size, orientation, and location, as illustrated in Figure~\ref{fig: example synthetic ellipse}.
The major-axis length is drawn uniformly from $[10,40)$ pixels, and the minor-axis length is drawn uniformly from the interval between 5 pixels and two-thirds of the major-axis length.
The major-axis orientation is drawn uniformly from $[0,\pi)$, and the center is drawn uniformly from $[20,108)^2$.
We set pixel intensities to 1 inside the ellipse and let them decay exponentially outside it, assigning each exterior pixel $x$ the intensity $\exp(-\operatorname{dist}_2(x,E)/10)$, where $\operatorname{dist}_2(x,E)$ denotes the Euclidean distance from $x$ to $E$.

We test counterclockwise rotations by $1^\circ,\ldots,10^\circ$ and $15^\circ,20^\circ,\ldots,60^\circ$.
Figure~\ref{fig: continuity pooling vs not} plots the maximum normalized difference as a function of $\theta$, with each curve corresponding to a different random trial.
For small rotation angles, the observed differences grow approximately linearly, consistent with the local behavior of the bound in~\eqref{eqn: Rotation continuity bound fraction}.

\subsection{Effects of Decreasing Training Set Size}
\label{sec: decreasing training set}

\begin{table}[t]
    \centering
    \begin{tabular}{c|cccccccc}
         Training Set Size & 8000 & 4000 & 2000 & 800 & 400 & 80 & 40 & 16 \\
         \hline
         CNN Acc. & 98.4 & 97.5 & 98.6 & 97.6 & 48.9 & 51.1 & 51.1 & 49.7 \\
         Aug. CNN Acc. & 98.5 & \textbf{98.9} & 98.1 & 96.6 & 48.9 & 51.1 & 59.5 & 51.2 \\
         \ournet{} Acc. & \textbf{98.7} & 98.5 & \textbf{98.9} & \textbf{98.4} & \textbf{97.2} & \textbf{75.5} & \textbf{65.7} & \textbf{51.7}
    \end{tabular}
    \caption{The accuracy of \ournet{} models compared to baseline CNNs and augmented CNNs when trained on limited data for the orientation task described in Section~\ref{sec: decreasing training set}. 
    The best results appear in bold. 
    Since this is a binary task, $50\%$ accuracy is as good as random guessing.}
    \label{tab: dataset reduction accuracies}
\end{table}

Next, we examine how classification accuracy changes with limited training data, a setting in which equivariant architectures can offer advantages~\cite{bekkers2019b,finzi2020generalizing,weiler2019general}.
We compile 10,000 random ellipse images described in Section~\ref{sec: continuity of pooling}, and our task is to classify each ellipse as horizontally or vertically dominant.
An ellipse is horizontally dominant if the orientation of its major axis relative to the $x$-axis lies in $[-45^\circ,45^\circ)$ modulo $180^\circ$, and vertically dominant otherwise.

A rotation by $90^\circ$ exchanges the two classes, so the classifier must retain orientation information.
We use the \ournet{} described in Section~\ref{sec: equivariant neural network architecturs}, with the filtering operator $\Psi$ defined using the group $G\cong C_{12}\ltimes\mathbb R^2$ generated by $30^\circ$ rotations and all translations.
In the continuous model, the feature extractor that precedes the flattening layer is equivariant to the retained subgroup.\footnote{The digital implementation approximates these operations with resampling and cropping that introduce equivariance errors.}
The subsequent flattening layer and dense classifier allow the output to depend on orientation.

We compare three models: the \ournet{}, a conventional CNN, and the same CNN trained with rotational data augmentation.
All three have eight main convolutional layers, use Leaky-ReLU activations, and employ a dense network with one hidden layer as the final classifier.
To keep the total number of trainable parameters comparable, the \ournet{} includes an additional block of two $1\times1$ convolutional layers before the final rotational pooling and flattening operations.
See Appendix~\ref{subsec.C2} for implementation details and Figure~\ref{fig: synthetic model architectures} for an illustration of the architectures. 

For the augmented CNN, we add two rotated copies of each training image, with rotations of at most $60^\circ$ in either direction.
Each copy retains the original image's class label.
Since these rotations can change the orientation class, this procedure introduces label noise (cf.~\cite{shorten2019survey}).

We divide the dataset into 8000 training images, 1000 validation images, and 1000 test images.
For each model, we train on subsets of the training set containing 8000, 4000, 2000, 800, 400, 80, 40, and 16 images, keeping the validation and test sets fixed.
We use early stopping with a patience of 20 epochs and retain the checkpoint with the best validation performance.

Table~\ref{tab: dataset reduction accuracies} reports test accuracy for each model and training-set size.
When there are at least 2000 training images, all three models exhibit similarly good accuracy.
For 400 training images, the \ournet{} achieves 97.2\% accuracy, while both CNN baselines are comparable to random guessing.
For even smaller training sets, the accuracy of the \ournet{} degrades gracefully.

\bigskip

\subsection{Real-World Tropical Cyclone Data}\label{sec: experiments real world}

Finally, we evaluate how well a \ournet{} predicts rapid intensification (RI) in a given tropical cyclone (TC).
RI occurs when the maximum sustained wind speed of a TC increases by at least 30~knots within a 24-hour period.
For this task, we use the dataset and preprocessing procedures described in~\cite{lagerquist2025identifying}.
Each input is a $380\times540$ infrared satellite image centered at the TC and rotated so that its motion points along the positive $x$-axis.
The classes are strongly imbalanced, with RI occurring in only 5\% of the test set scenes (and in 6\% of the validation set scenes).

As discussed in the introduction, storm motion introduces asymmetries that make orientation informative.
This suggests using a \ournet{} that is stable to small rotations and sensitive to larger ones.
As in Section~\ref{sec: decreasing training set}, we use a \ournet{} with the group $G\cong C_{12}\ltimes\mathbb R^2$ generated by $30^\circ$~rotations and all translations.
We introduce an additional $1\times1$ convolutional block so that the total number of trainable parameters is comparable to that of our baseline, namely, an augmented CNN based on the architecture in~\cite{lagerquist2025identifying}.
See Appendix~\ref{subsec.real world experiment} for implementation details and Figure~\ref{fig: real world architectures} for an illustration of the architectures. 

To address the class imbalance, we train both models using weighted random sampling that targets an RI sampling proportion of~20\%.
We also apply weighted sampling to the validation set used for early stopping.
Both models receive translation and noise augmentation, while the CNN additionally receives rotation augmentation.
The augmentation parameters and remaining training details are given in Appendix~\ref{subsec.real world experiment}.

See Table~\ref{tab: real world results} for the results.
Due to severe class imbalance, with the vast majority of TCs belonging to the non-RI class, a classifier could achieve high-accuracy using the trivial model that always predicts non-RI.
Indeed, such a trivial model would achieve $95$\% accuracy, superior to both of the trained neural networks.
Therefore, in addition to accuracy, we also report the $F_1$ score, precision, and recall for each model, which are likely more informative performance metrics in this context. 
We see that while the augmented CNN achieves slightly higher accuracy, \ournet{} exhibits superior performance in terms of the $F_1$ score, precision, and recall, though both models still miss most RI events.

\begin{table}[]
    \centering
    \begin{tabular}{c|cccc}
         & Accuracy & $F_1$ score & Precision & Recall \\
         \hline
         Aug. CNN & \textbf{0.929} 
         & 0.264 & 0.278 & 0.252 \\
         \ournet{} & 0.926
         & \textbf{0.328} & \textbf{0.305} & \textbf{0.354}
    \end{tabular}
    \caption{Test set results for predicting tropical cyclone rapid intensification. The best results appear in bold.}
    \label{tab: real world results}
\end{table}

\section{Conclusion}

In this paper, we introduced \ournet{}s, defined with respect to $\Lambda \leq G \leq \AGL(\bR^d)$.
Such neural networks have a number of desirable properties: 
\begin{itemize}
\item
they are equivariant to $\Lambda$, 
\item
they are group-Lipschitz with respect to $G$, 
\item
they generalize well in data-limited environments, and
\item
they perform well in real-world settings.
\end{itemize}
The theory used to establish group-Lipschitz continuity also applies to existing architectures in the literature, such as the standard max pooling in CNNs and the full group-invariant pooling in group-equivariant networks.
Along the way, we proved a new bound on the stability of the quasiregular representation of affine subgroups on sufficiently nice functions in $\bL^2(\bR^d)$, which may be of independent interest to the wavelet community.

In closing, we highlight a few opportunities for future work.
For example, it would be interesting to tighten the bound in Theorem~\ref{thm: continuity of xi} (and its generalization in Proposition~\ref{prop: continuity of xiw}).
In particular, the current bound is provably not tight in Example~\ref{ex:rot4to2}.
One might also expand the theory and applications to other group representations, such as the Schr\"odinger representation of the Heisenberg group.
Additionally, the stability of scattering transforms to small
deformations~\cite{sifre2013rotation} motivates extending the results
of Section~\ref{sec:transtheory} to non-affine deformations and establishing corresponding
stability bounds for complete \ournet{}s.
Finally, we believe that there are further interesting applications which only require local approximate invariance and thus would be appropriate candidates for transversal pooling.
For example, perhaps \ournet{}s could be hybridized with a long short-term memory network to forecast hurricanes based on video.

\section*{Acknowledgments, Data Statement, and Conflict of Interest Statement}
EJK and LV were supported in part by NSF CAIG grant RISE-2425923. 
DGM was supported in part by NSF grant DMS 2220304. 
MP was supported in part by NSF grant OIA-2242769.
DGM thanks OpenAI for complimentary access to ChatGPT-6 Pro.
The authors used ChatGPT to find relevant literature, to help generate the figures, and to give feedback on preliminary versions of this manuscript. 
While using ChatGPT to search for literature on stability to affine deformations, EJK discovered the key ideas of the proofs of Lemma~\ref{lem: semigroup bound} and Theorem~\ref{prop:contest} and incorporated them into the paper.
All mathematical claims, citations, and computations were checked by the authors, who take full responsibility for the content of this paper.
All code used for experiments in this paper can be found at \url{https://github.com/zyjux/trapnets}. 
For more information on the data availability of the TC data, see \cite{lagerquist2025identifying}.
On behalf of all authors, the corresponding author states that there is no conflict of interest.

%
%\bibliography{main}{}
%\bibliographystyle{amsalpha}

\newcommand{\etalchar}[1]{$^{#1}$}
\providecommand{\bysame}{\leavevmode\hbox to3em{\hrulefill}\thinspace}
\providecommand{\MR}{\relax\ifhmode\unskip\space\fi MR }
% \MRhref is called by the amsart/book/proc definition of \MR.
\providecommand{\MRhref}[2]{%
  \href{http://www.ams.org/mathscinet-getitem?mr=#1}{#2}
}
\providecommand{\href}[2]{#2}

\appendix
\section{Lipschitz Continuity under $\SO(2)$ Rotations}
\label{sec: details on better rotation bound}
As alluded to in Remark~\ref{rem: rotation better bound}, the  bounds obtained in Example~\ref{ex: cont rotations} can be improved by a direct estimate in the case where $G$ is the 2D rotation group $\SO(2)$. 
Specifically, we have the 
following proposition.

\begin{prop}\label{prop: Rotation special case}
Let $\psi\in\mathbf{L}^2(\mathbb{R}^2)$ be a Lipschitz function with compact support such that $\supp(\psi)\subseteq B(0,R).$ Let $G=\SO(2)$, let $g\in G$, and as in Example~\ref{ex: cont rotations}, note that there exists $\theta \in (-\pi,\pi]$ such that $g=(U,0)$, where 
$$
U= \begin{pmatrix}
    \cos(\theta) & -\sin(\theta) \\
    \sin(\theta) & \cos(\theta)
\end{pmatrix}.
$$
 Then for every $f\in \bL^2(\mathbb R^2)$ and $\lambda\in\Lambda$,
\begin{align*}
|\Xi(\Psi(g\cdot f))(\lambda)-\Xi(\Psi(f))(\lambda)|
\leq  2\pi^{1/2}R^2\|\psi\|_L \sin\left(\frac{|\theta|}{2}\right)\|f\|_{\bL^2(\mathbb{R}^2)}.
\end{align*}
\end{prop}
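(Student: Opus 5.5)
Start from Lemma~\ref{lem: continuity simple}, which reduces the claim to a bound on $\|\psi_{g^{-1}\lambda h}-\psi_{\lambda h}\|_{\bL^2(\mathbb{R}^2)}$ that is uniform in $h\in H$. Since $G=\SO(2)$ is abelian and acts unitarily, every element $\lambda h$ is a rotation $(V,0)$, and $g^{-1}\lambda h=\lambda h g^{-1}$. Therefore
\[
\|\psi_{g^{-1}\lambda h}-\psi_{\lambda h}\|_{\bL^2(\mathbb{R}^2)}
=\|(\lambda h)\cdot(\psi_{g^{-1}}-\psi)\|_{\bL^2(\mathbb{R}^2)}
=\|\psi_{g^{-1}}-\psi\|_{\bL^2(\mathbb{R}^2)},
\]
so the supremum over $h$ disappears and it suffices to bound $\|U^{-1}\cdot\psi-\psi\|_{\bL^2(\mathbb{R}^2)}$ by $2\pi^{1/2}R^2\|\psi\|_L\sin(|\theta|/2)$.

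For this bound, write $(U^{-1}\cdot\psi)(x)=\psi(Ux)$ using $\det U=1$. The Lipschitz property then gives the pointwise estimate
\[
|\psi(Ux)-\psi(x)|\le\|\psi\|_L\|Ux-x\|_2=\|\psi\|_L\,\|U-I\|_{2\to2}\,\|x\|_2 .
\]
For a planar rotation, $U-I$ is $2\sin(\theta/2)$ times a rotation, so $\|Ux-x\|_2=2|\sin(\theta/2)|\,\|x\|_2$ exactly.

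The integrand vanishes unless $x$ or $Ux$ lies in $\supp\psi$. Since $U$ preserves $B(0,R)$, both conditions force $x\in B(0,R)$. Squaring and integrating gives
\[
\|U^{-1}\cdot\psi-\psi\|_{\bL^2(\mathbb{R}^2)}^2\le 4\sin^2(\theta/2)\|\psi\|_L^2\int_{B(0,R)}\|x\|_2^2\,dx .
\]
The last integral equals $\pi R^4/2$, which is at most $\pi R^4$. Taking square roots yields $2\pi^{1/2}R^2\|\psi\|_L\,|\sin(\theta/2)|$, and combining this with Lemma~\ref{lem: continuity simple} gives the proposition. Note that $|\sin(\theta/2)|=\sin(|\theta|/2)$ for $\theta\in(-\pi,\pi]$.

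I expect the only delicate point to be the commutation step that removes the dependence on $\lambda h$. It uses that $\Lambda\le\SO(2)$ and $H\subseteq\SO(2)$ consist purely of rotations about the origin, so the elements commute and preserve the support ball. The same argument applies verbatim to finite subgroups such as $C_{360}$.
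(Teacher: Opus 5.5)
Your proof is correct and is essentially the paper's argument: reduce via Lemma~\ref{lem: continuity simple}, then bound the $\bL^2$ difference pointwise by $\|\psi\|_L\|(U-I)x\|_2$ on $B(0,R)$, using $\|U-I\|_{2\to2}=2|\sin(\theta/2)|$. The differences are minor. The paper removes $\lambda h$ through orthogonal invariance of the operator norm ($\|U_1^{-1}-U_2^{-1}\|_{2\to2}=\|I-U_1U_2^{-1}\|_{2\to2}$) rather than commutativity, and it uses the cruder bound $\|x\|_2\le R$ together with $|B(0,R)|^{1/2}$, whereas your exact integral $\int_{B(0,R)}\|x\|_2^2\,dx=\pi R^4/2$ actually sharpens the constant by a factor of $\sqrt{2}$.
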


The proof of Proposition~\ref{prop: Rotation special case} relies on the following lemma.

\begin{lem}\label{lem: ell infinity diff for different group members}
Assume that $\psi\in\mathbf{L}^2(\mathbb{R}^2)$ is Lipschitz, with Lipschitz constant $\|\psi\|_{L}$, and is compactly supported with $\operatorname{supp}(\psi)\subseteq B(0,R)$.
Let $G=\SO(2)$, and let $g_{1},g_{2}\in G$ with $g_{j}(x)=U_jx$,  
$$
U_j= \begin{pmatrix}
    \cos(\theta_j) & -\sin(\theta_j) \\
    \sin(\theta_j) & \cos(\theta_j)
\end{pmatrix},
$$
for $j=1,2$. 
Then, 
\begin{align*}
 \|\psi_{g_{1}}-\psi_{g_{2}}\|_{\bL^2(\mathbb{R}^2)}\leq 2\pi^{1/2}R^2\|\psi\|_L \abs{\sin\left(\frac{\theta_1-\theta_2}{2}\right)}.
 \end{align*}
\end{lem}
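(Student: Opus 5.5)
We reduce to a pointwise estimate and then integrate over the support. Since rotations act unitarily and $U_2^{-1}U_1$ is the rotation by $\theta:=\theta_1-\theta_2$, we have
\[
\|\psi_{g_1}-\psi_{g_2}\|_{\bL^2(\mathbb{R}^2)}=\|\psi_{g_2^{-1}g_1}-\psi\|_{\bL^2(\mathbb{R}^2)}.
\]
So it suffices to treat $g_2=I$ and $g_1=U_\theta$. Here $\det U_\theta=1$, so
\[
(\psi_{U_\theta}-\psi)(x)=\psi(U_{-\theta}x)-\psi(x).
\]

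\textbf{Pointwise bound.} By the Lipschitz property,
\[
|\psi(U_{-\theta}x)-\psi(x)|\le \|\psi\|_L\,\|U_{-\theta}x-x\|_2 .
\]
A rotation by angle $\theta$ moves a point at radius $\|x\|_2$ along a chord of length $2\|x\|_2|\sin(\theta/2)|$. Equivalently, $\|U_{-\theta}-I\|_{2\to2}=2|\sin(\theta/2)|$, since the eigenvalues $e^{\pm i\theta}-1$ have modulus $2|\sin(\theta/2)|$ and $U_{-\theta}-I$ is normal. This gives
\[
|\psi_{U_\theta}(x)-\psi(x)|\le 2\|\psi\|_L\,\|x\|_2\,|\sin(\theta/2)|.
\]

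\textbf{Support.} Both $\psi$ and $\psi_{U_\theta}$ vanish outside $B(0,R)$, because rotations preserve the ball. Hence the difference is supported in $B(0,R)$, and on that ball $\|x\|_2\le R$.

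\textbf{Integration.} Squaring and integrating over $B(0,R)$ gives
\[
\|\psi_{U_\theta}-\psi\|_{\bL^2(\mathbb{R}^2)}^2\le 4\|\psi\|_L^2\sin^2(\theta/2)\,R^2\cdot \pi R^2 .
\]
Taking square roots yields
\[
\|\psi_{U_\theta}-\psi\|_{\bL^2(\mathbb{R}^2)}\le 2\pi^{1/2}R^2\|\psi\|_L\,|\sin(\theta/2)|,
\]
which is the claim. Using $\int_{B(0,R)}\|x\|_2^2\,dx=\pi R^4/2$ instead of the crude bound $\|x\|_2\le R$ would improve the constant by $\sqrt2$, but the stated bound only needs the crude version.

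\textbf{Remaining care.} The only delicate points are the chord-length identity and checking that the Lipschitz estimate holds everywhere, including across the boundary of the support. The latter holds because $\psi$ is globally Lipschitz on $\mathbb{R}^2$ and is zero outside $B(0,R)$. No obstacle is expected beyond this routine verification.
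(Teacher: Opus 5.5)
Your proof is correct and follows essentially the same route as the paper. Both arguments use the pointwise Lipschitz bound $|\psi(U_1^{-1}x)-\psi(U_2^{-1}x)|\le \|\psi\|_L\|U_1^{-1}-U_2^{-1}\|_{2\to2}\|x\|_2$, the crude estimate $\|x\|_2\le R$ on the common support $B(0,R)$, and the identity $\|I-U_1U_2^{-1}\|_{2\to2}=2|\sin((\theta_1-\theta_2)/2)|$. The only difference is cosmetic: you first reduce to $g_2=I$ by unitarity, while the paper keeps both rotations and uses $\|U_1^{-1}-U_2^{-1}\|_{2\to2}=\|I-U_1U_2^{-1}\|_{2\to2}$.
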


\begin{proof}

 Since $\det(U_j)=1$, we have $\psi_{g_j}(x)=\psi(U_j^{-1}x)$ for $j=1,2$.
Therefore, since  $\operatorname{supp}(\psi_{g_1}-\psi_{g_2})\subseteq B(0,R)$, we may use the fact that $U_1$ is an orthogonal matrix to obtain
\begin{align}
    {\|\psi_{g_{1}}-\psi_{g_{2}}\|_{\mathbf{L}^2(\mathbb{R}^2)}} &=\left(\int_{B(0,R)}{|\psi(U_1^{-1}x)-\psi(U_2^{-1}x)|^2}\,dx\right)^{1/2}\nonumber\\
    &\leq |B(0,R)|^{1/2}\|\psi\|_L\|U_1^{-1}-U_2^{-1}\|_{2\rar 2} R\nonumber\\
    &=\pi^{1/2}R^2\|\psi\|_L \|U_1(U_1^{-1}-U_2^{-1})\|_{2\rar 2}\nonumber\\
              &=\pi^{1/2}R^2\|\psi\|_L \|I-U_1U_2^{-1}\|_{2\rar 2}.\nonumber
    \end{align}
The proof is now completed by observing
that 
$$
U_1U_2^{-1}=\begin{pmatrix}
\cos(\theta_1-\theta_2)&&-\sin(\theta_1-\theta_2)\\
\sin(\theta_1-\theta_2)&&\cos(\theta_1-\theta_2)
\end{pmatrix},
$$
the matrix which corresponds to a rotation by $\theta_1-\theta_2$. Thus,
we have  
$$
\|I-U_1U_2^{-1}\|_{2\rar 2}=|1-e^{i(\theta_1-\theta_2)}|=2\abs{\sin\left(\frac{\theta_1-\theta_2}{2}\right)}
$$
which completes the proof.
\end{proof}

\begin{proof}[Proof of Proposition~\ref{prop: Rotation special case}]
For $g\in G$, $\lambda\in\Lambda$, $h\in H$, choose $\theta$ and $\theta_{\lambda h}$ so that $g=(U ,0)$, $\lambda h =(U_{\theta_{\lambda h}},0)$ where 
\begin{align*}
U=\begin{pmatrix}
\cos(\theta)&&-\sin(\theta)\\
\sin(\theta)&&\cos(\theta)
\end{pmatrix}, \quad
U_{\theta_{\lambda h}}=
\begin{pmatrix}
\cos(\theta_{\lambda h})&&-\sin(\theta_{\lambda h})\\
\sin(\theta_{\lambda h})&&\cos(\theta_{\lambda h})
\end{pmatrix},
\end{align*}
and note that $(g^{-1}\lambda h)(x)=U_{\theta_{g^{-1}\lambda h}}x$ where 
$$
U_{\theta_{g^{-1}\lambda h}}=\begin{pmatrix}
\cos(\theta_{\lambda h}-\theta)&&-\sin(\theta_{\lambda h}-\theta)\\
\sin(\theta_{\lambda h}-\theta)&&\cos(\theta_{\lambda h}-\theta)
\end{pmatrix}.
$$
Thus, Lemma~\ref{lem: ell infinity diff for different group members}, applied with $g_1=g^{-1}\lambda h$, $g_2=\lambda h$, implies
$$
\sup_{h\in H}\|\psi_{g^{-1}\lambda h}-\psi_{\lambda h}\|_{\mathbf{L}^2(\mathbb{R}^2)}\leq 2 \pi^{1/2}R^2\|\psi\|_L \sin\left(\frac{|\theta|}{2}\right),
$$
and so the result follows from Lemma~\ref{lem: continuity simple}.
\end{proof}

\section{Proofs of Propositions~\ref{prop: Psi Equivariant} and~\ref{prop: upsilon equivariance}}\label{app: proofs of standard props}

Propositions~\ref{prop: Psi Equivariant} and~\ref{prop: upsilon equivariance} are well known in abstract harmonic analysis, but may be new to readers who are primarily interested in neural networks. Therefore, we provide proofs in order to keep our work self-contained.

\subsection{Proof of Proposition~\ref{prop: Psi Equivariant}}
\label{app: proof of equivariance of Xi}
\begin{proof}
Let $g,\widetilde{g}\in G$ be arbitrary. It suffices to prove
\begin{align}
    \langle (g\cdot f), \psi_{\widetilde{g}}\rangle_{\bL^2(\mathbb{R}^d)} 
&=\int_{\mathbb{R}^d}f(y)\psi_{g^{-1}\widetilde{g}}(y) \,dy. \label{eqn: equiv lemma key inequality}
\end{align}
 For then, by the definitions in~\eqref{eqn:quasireg}, \eqref{eqn: outer action}, and~\eqref{eqn:Psi}, we will have 
\[
(g\cdot \Psi(f))(\widetilde{g}) = \Psi(f)(g^{-1}\widetilde{g})
=\int_{\mathbb{R}^d}f(y)\psi_{g^{-1}\widetilde{g}}(y) \,dy
= \langle (g\cdot f), \psi_{\widetilde{g}}\rangle_{\bL^2(\mathbb{R}^d)}=\Psi(g\cdot f)(\widetilde{g})
\]
as desired.
Now, to prove~\eqref{eqn: equiv lemma key inequality}, we make the change of variables $y=g^{-1}(x)$ to obtain
\begin{align}
    \langle (g\cdot f), \psi_{\widetilde{g}}\rangle_{\bL^2(\mathbb{R}^d)} &= \int_{\mathbb{R}^d} (g\cdot f)(x)(\widetilde{g}\cdot \psi)(x)\,dx \nonumber\\
&=\int_{\mathbb{R}^d} \frac{1}{|\det(Jg)|^{1/2}} f(g^{-1}(x))\frac{1}{|\det(J\widetilde{g})|^{1/2}}\psi(\widetilde{g}^{-1}(x))\,dx\nonumber\\
&=\frac{|\det(Jg)|^{1/2}}{|\det(J\widetilde{g})|^{1/2}}\int_{\mathbb{R}^d}f(y)\psi(\widetilde{g}^{-1}(
g(y)
))\,dy\nonumber\\
&=\frac{|\det(Jg)|^{1/2}}{|\det(J\widetilde{g})|^{1/2}}\int_{\mathbb{R}^d}f(y)\psi((g^{-1}\circ \widetilde{g})^{-1}(y))\,dy\nonumber\\
&=\int_{\mathbb{R}^d}f(y)\psi_{g^{-1}\widetilde{g}}(y) \,dy, \nonumber 
\end{align}
where, in the final two equalities, we use the facts that  
\[
\widetilde{g}^{-1}\circ g=(g^{-1}\circ \widetilde{g})^{-1}, 
\qquad
\frac{|\det(Jg)|^{1/2}}{|\det(J\widetilde{g})|^{1/2}}=\frac{1}{|\det(J(g^{-1}\circ \widetilde{g}))|^{1/2}}.
\qedhere
\]
\end{proof}

\subsection{Proof of Proposition~\ref{prop: upsilon equivariance}}
\label{app: proof of upsilon equivarariance} 

\begin{proof}
Let $\upsilon\in \bL^1(\Lambda)$, $\lambda \in \Lambda$, and $F\in\bC_b(\Lambda)$. 
Then for  $\widetilde{\lambda}\in\Lambda$ and $\mu_\Lambda$ the Haar measure of $\Lambda$, we have
\begin{align}
\Upsilon({\lambda}\cdot F)(\widetilde\lambda)&=\int_\Lambda ({\lambda}\cdot F)(\kappa)\upsilon(\widetilde\lambda^{-1}\kappa)\,d\mu_{\Lambda}(\kappa)\nonumber\\
&=\int_\Lambda F(\lambda^{-1}\kappa)\upsilon(\widetilde\lambda^{-1}\kappa)\,d\mu_{\Lambda}(\kappa)\nonumber\\
&=\int_\Lambda F(\tilde{\kappa})\upsilon(\widetilde\lambda^{-1}\lambda\tilde{\kappa} )\,d\mu_{\Lambda}(\tilde{\kappa})\label{eqn: h tilde sub}\\
&=\int_\Lambda F(\tilde{\kappa})\upsilon(({\lambda}^{-1}\widetilde\lambda)^{-1}\tilde{\kappa} )\,d\mu_{\Lambda}(\tilde{\kappa})\nonumber\\
&=\Upsilon(F)(\lambda^{-1}\widetilde\lambda)\nonumber\\
&=(\lambda \cdot \Upsilon(F))(\widetilde\lambda)\nonumber
\end{align}
where in~\eqref{eqn: h tilde sub} we substituted $\tilde{\kappa}=\lambda^{-1}\kappa$.
\end{proof}

\section{Implementation Details}\label{app: implementation}

We implement the \ournet{} architecture in PyTorch \cite{pytorch} via a combination of custom layers and non-standard uses of standard PyTorch operations. All training and inference were performed on a single NVIDIA GH200 Grace Hopper Superchip.

To implement the operator $\Psi$, we use two separate layers. To justify this, we observe that for $g=(A,b)\in G$, we can factor $g=g_1\circ g_2$, where $g_1=A $ and $g_2
=(I,A^{-1}b)$. Thus, we can rewrite Equation~\ref{eqn:Psi} as
\[
\Psi(f)(g) = \langle f, g \cdot \psi \rangle_{\bL^2(\bR^d)} = \langle g_1^{-1} \cdot f, g_2 \cdot \psi \rangle_{\bL^2(\bR^d)}.
\]
We then compute a digital approximation of $g_1^{-1} \cdot f$ (using resampling and cropping) for all $g_1$ as the first layer, then apply the inner product as a convolution operation. The first layer is a custom layer making use of PyTorch's deterministic transform capabilities. For the cases in this paper, which all utilize rotation matrices for $A$, this first layer amounts to creating rotated copies of the input $f$ and ``stacking'' them along a new dimension. 
The second layer implements the inner product and the translation component $g_2$ using a 3D convolution with a $k\times k\times1$ kernel.
This has the effect of convolving (in the 2D sense) a $k\times k$ kernel across each rotated image in the stack while preserving the stack structure.

We implement $\Upsilon$ using a $k\times k\times1$ convolutional kernel, which applies the same spatial convolution independently to each orientation slice. 
In the corresponding continuous model with a finite rotation group, this is a special case of~\eqref{eqn: upsilon} in which the kernel is supported at the identity orientation. 
This restriction avoids mixing orientation slices and permits implementation using a standard 3D convolution layer.
 
To implement $\Xi$, we first note that when the only nontrivial component of $H$ is in the $\bR^d$ component, $\Xi$ can be implemented as standard spatial pooling, which in PyTorch means using a 3D max pooling operation with a $k \times k \times 1$ filter. On the other hand, when the only nontrivial component of $H$ is in the $\GL(\mathbb{R}^d)$ component, the foundation of this layer once again utilizes the standard PyTorch 3D max pooling operation (this time with a $1 \times 1 \times |H|$ filter), but before we can apply that pooling operation, we must ensure that the pixels we are pooling together are aligned.  

To understand the alignment, recall from the caption of Figure~\ref{fig:\ournet{} example} that the
counter-rotated parameterization of $F \in \bC_b(G)$ is given by
$\widetilde F(A,b)=F(A,Ab)$.
For rotational pooling, write
$H=\{(B,0):B\in H_D\}$, where $H_D$ is the set of rotation
representatives. Since
$(A,Ab)(B,0)=(AB,Ab)$, the definition of $\Xi$ gives
\[
(\Xi F)(A,Ab)
=
\sup_{B\in H_D} F(AB,Ab)
=
\sup_{B\in H_D} \widetilde F(AB,B^{-1}b).
\]
For a rotation $B\in H_D$, the function
$b\mapsto\widetilde F(AB,B^{-1}b)$ is obtained by rotating the
slice $\widetilde F(AB,\cdot)$ by $B$.
Thus, within each pooling batch, we apply the relative rotation
$B$ to the slice indexed by $AB$, aligning it with the coordinate
frame of the retained orientation $A$.
We then apply standard 3D max pooling across the aligned slices.
See Figure~\ref{fig:\ournet{} example} for an illustration.
A similar procedure applies to more general (not strictly rotational) $\GL(\mathbb{R}^d)$ transformations as well.

While in theory, we could use a $k_1 \times k_1 \times k_2$ pooling filter to pool both in the $\GL(\mathbb{R}^d)$ and $\bR^{d}$ components simultaneously, in practice we apply two sequential pooling operations of the types described above, one with a $k_1 \times k_1 \times 1$ filter and one with a $1 \times 1 \times k_2$ filter. This aligns with standards in the literature (e.g.,~\cite{cohen2016group,romero2020wavelet,dieleman2016exploiting,fasel2006rotation,sifre2013rotation,ver2023using}).

\subsection{Implementation Details for Section~\ref{sec: continuity of pooling}}
\label{subsection.section 4.1}

The size of the input image is $128 \times 128$, and the 3D convolution layer (see above) uses a $128 \times 128 \times 1$ filter with ``valid only'' padding style so that there is no translational component to $\Psi$, only rotational. 
Our filters are not trained, and as such have no optimizer or loss function. Instead, the filter bank for the 3D convolution layer is randomly initialized using PyTorch's default weight initialization scheme with a manual seed to ensure replicability.

\subsection{Implementation Details for Section~\ref{sec: decreasing training set}}
\label{subsec.C2}
We create and train three networks, a \ournet{},  a conventional CNN without data augmentation, and a conventional CNN with data augmentation (architectures  are shown in Figure~\ref{fig: synthetic model architectures}). The augmented CNN uses rotational data augmentation via PyTorch's random rotation module, with each raw input being augmented by 2 additional rotated copies, with random rotations of up to $60^\circ$ clockwise or counterclockwise.
 
The following hyperparameters are shared across all three models: leaky ReLU activation functions with a slope of $0.2$, a batch size of $256$ (of unaugmented initial samples, in the case of the augmented CNN), an Adam optimizer with an initial learning rate of $0.001$, early stopping with a patience of $20$ epochs and a required improvement of $0.0001$, a maximum number of training epochs of 1000 epochs (which is never achieved due to early stopping occurring well before that point). Due to the larger memory footprint of its feature activations, the \ournet{} uses gradient accumulation to achieve the same effective batch size as the CNNs --- each batch is only 64 samples, but gradients are accumulated across 4 batches before each optimizer step is taken. There is one difference to note when using gradient accumulation: the batch normalization layers in the model apply their normalization per smaller sub-batch rather than per optimization step. These hyperparameters are heuristically chosen based on performance on the validation set, but no significant hyperparameter search is undertaken. Both CNN models have 1,123,792 trainable parameters, while the \ournet{} model has 1,075,672 trainable parameters.

Additionally, all models use a continuous ranked probability score (CRPS) loss \cite{matheson1976scoring, haynes2023creating}, and we choose a 100-member ensemble representation to implement the loss.
For evaluation, we average the 100~ensemble outputs and predict class~1 when this mean exceeds $0.5$, and class~0 otherwise.
The reported classification metrics are computed from these predicted labels.
While the CRPS loss is somewhat unusual for binary classification, it is used to maintain consistency with the real-world tropical cyclone example presented in Section~\ref{sec: experiments real world}, which in turn uses CRPS to match the hyperparameter choices made in \cite{lagerquist2025identifying}.

\begin{figure}[htp]
    \centering
    \begin{subfigure}[b]{1.0\textwidth}
        \centering
        \includegraphics[width=1.0\linewidth]{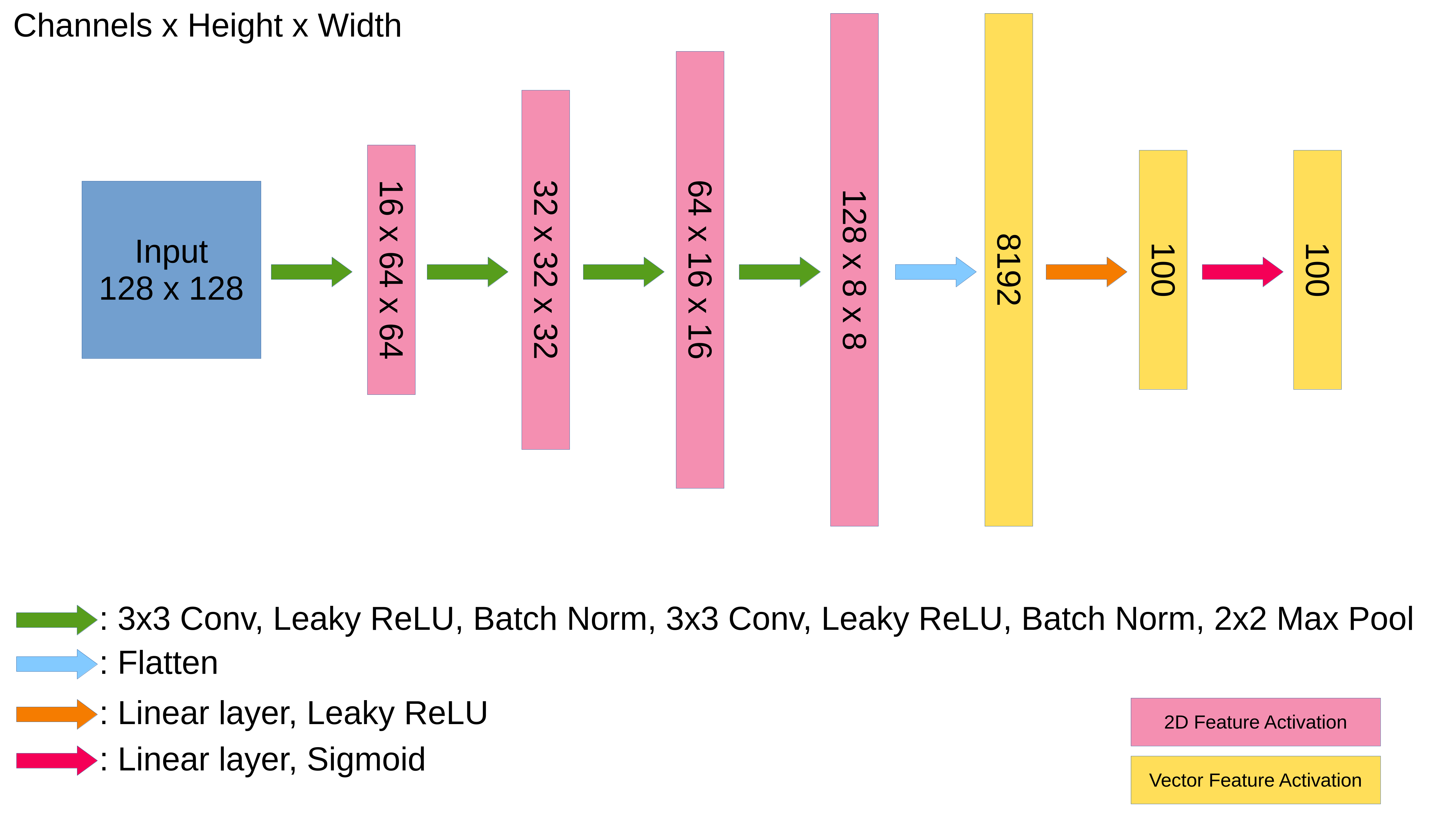}
        \caption{CNN}
    \end{subfigure}
    \hfill
    \vspace{5mm}
    \begin{subfigure}[b]{1.0\textwidth}
        \centering
        \includegraphics[width=1.0\linewidth]{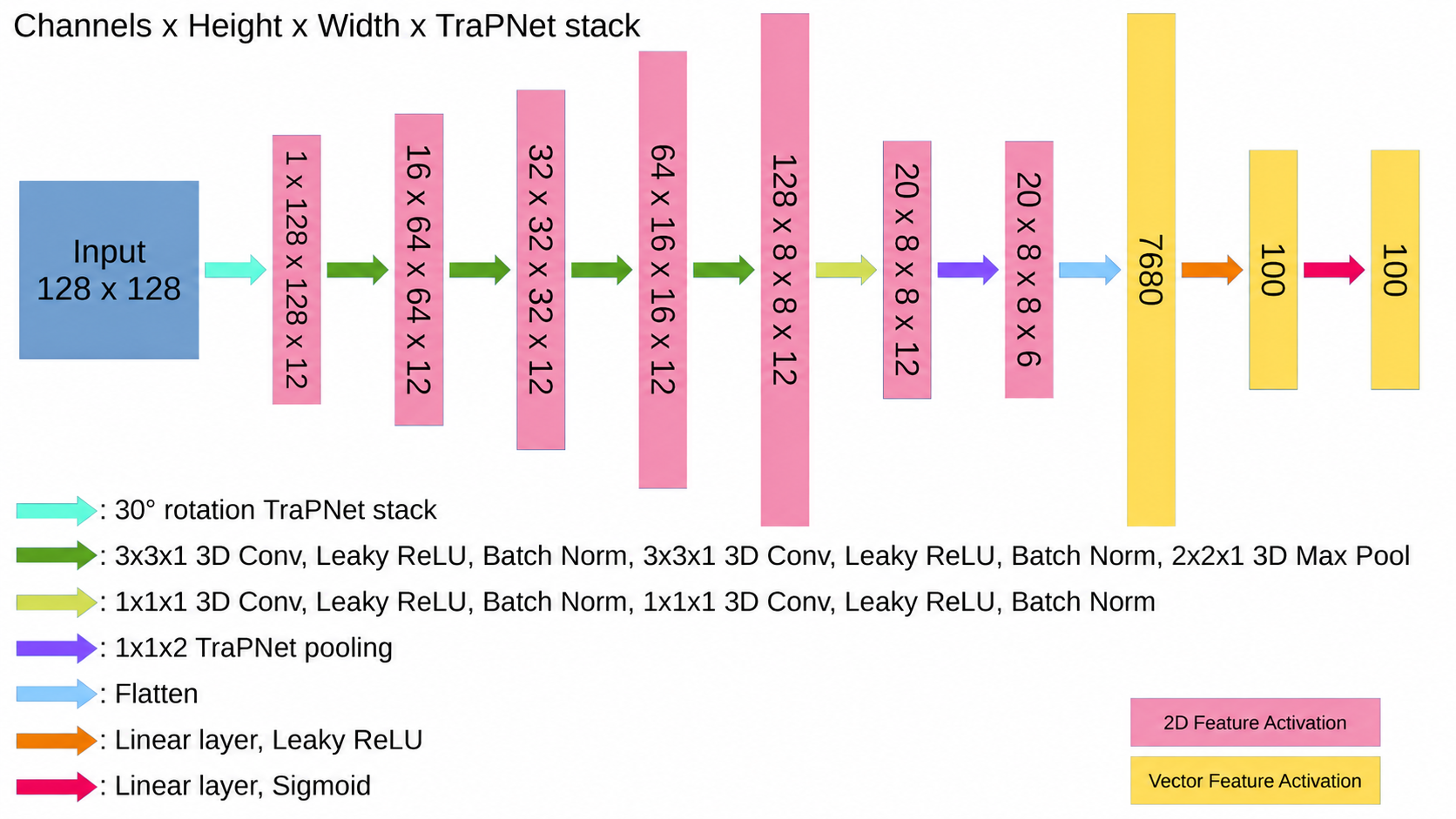}
        \caption{\ournet{}}
    \end{subfigure}
    \caption{Model architectures for the baseline CNN and the \ournet{} used in Section~\ref{sec: decreasing training set}.}
    \label{fig: synthetic model architectures}
\end{figure}

\subsection{Implementation Details for the Real-World Tropical Cyclone Example}
\label{subsec.real world experiment}

The hyperparameter choices used in Section~\ref{sec: experiments real world} are mostly the same as in Section~\ref{sec: decreasing training set}. Once again, for both models we use the CRPS loss function, leaky ReLU activations with a slope of 0.2, the Adam optimizer with a learning rate of $0.001$, early stopping with a patience of $20$ epochs and a required improvement of $0.0001$, and a maximum of 1000 epochs (which is again never achieved). The main hyperparameter differences come from the different dataset. Because we use the weighted random sampler, we must choose the number of batches per epoch, which is set to 64, and due to the larger images and larger networks, we use a smaller batch size of $96$ images per batch (original images, prior to any data augmentation). Once again, we use gradient accumulation for the \ournet{}, so the true batch size is set to 12 with gradients being accumulated across 8 of those batches to achieve an effective batch size of 96. Because we use early stopping, we apply the same weighted sampler to the validation set at the end of each epoch so that the early stopping model selection is not overly influenced by the negative class. The CRPS loss function, Adam optimizer, learning rate, and early stopping are all chosen to match with the hyperparameters used in \cite{lagerquist2025identifying}.

For this task, we also follow \cite{lagerquist2025identifying} in applying two forms of data augmentation to both models, including the \ournet{}. Each input image is augmented by three randomly translated copies with a maximum shift of $5\%$ of the image width/height in the appropriate direction, and with two randomly noised copies with Gaussian white noise independently sampled for each pixel with a standard deviation of 0.5 (which is applied to the standardized and normalized input images). The ``augmented'' CNN also includes 2 randomly rotated copies, with a maximum rotation of $50^\circ$ clockwise or counterclockwise.

We again use a 100-member ensemble representation to implement the CRPS loss. 
Classification metrics use the decision rule described in Appendix~\ref{subsec.C2}: class~1 is predicted when the ensemble mean exceeds $0.5$, and class~0 otherwise.
We use CRPS both to match the choices in~\cite{lagerquist2025identifying} and because, in our experiments, it produces better and more reliable performance than binary cross-entropy.

Architecture diagrams for both the CNN and \ournet{} are shown in Figure~\ref{fig: real world architectures}. The baseline augmented CNN has 22,840,918 trainable parameters, while the \ournet{} has 22,575,286 trainable parameters.

\begin{figure}[htp]
    \centering
    \begin{subfigure}[b]{1.0\textwidth}
        \centering
        \includegraphics[width=1.0\linewidth]{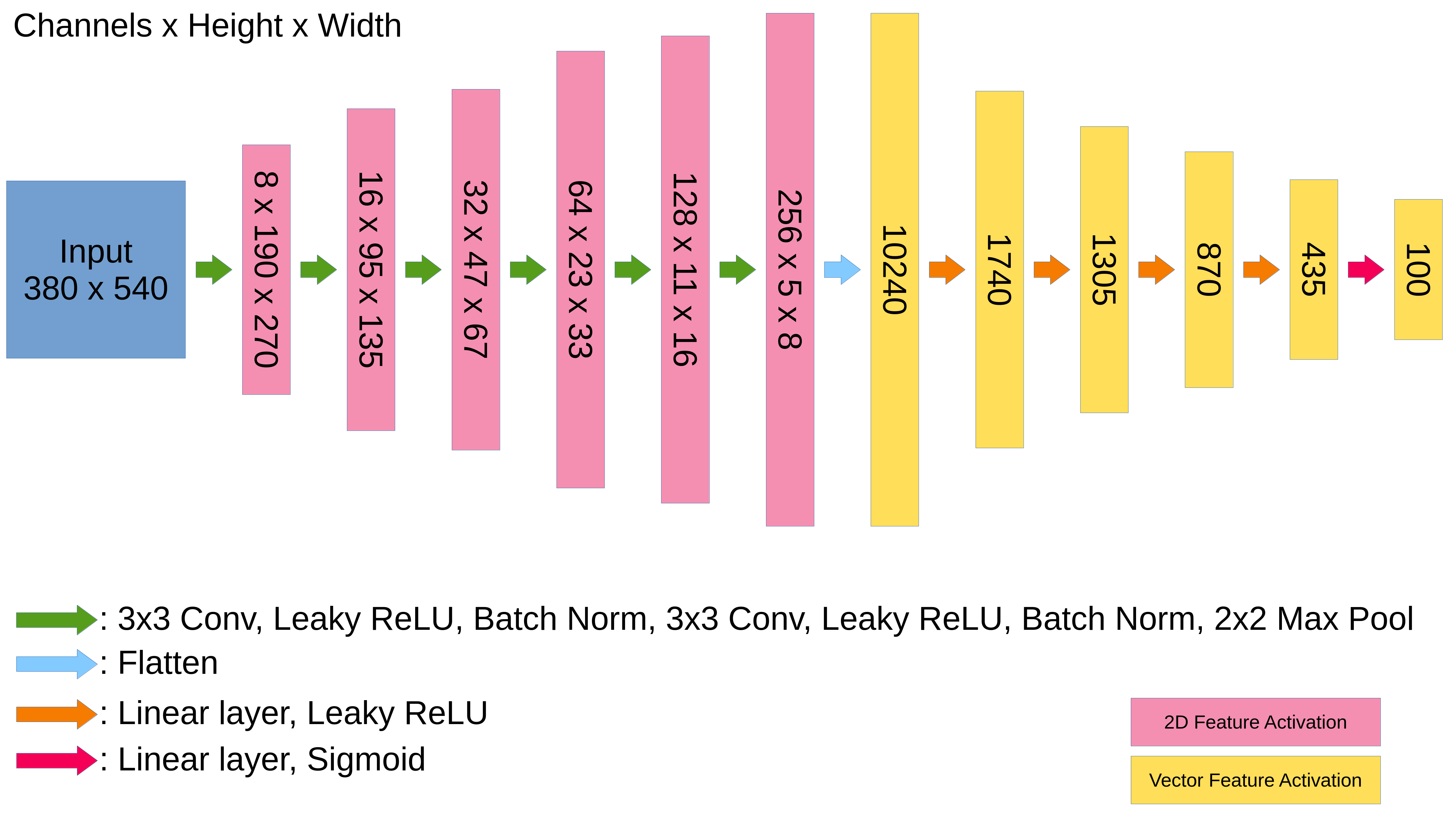}
        \caption{CNN}
    \end{subfigure}
    \hfill
    \vspace{5mm}
    \begin{subfigure}[b]{1.0\textwidth}
        \centering
        \includegraphics[width=1.0\linewidth]{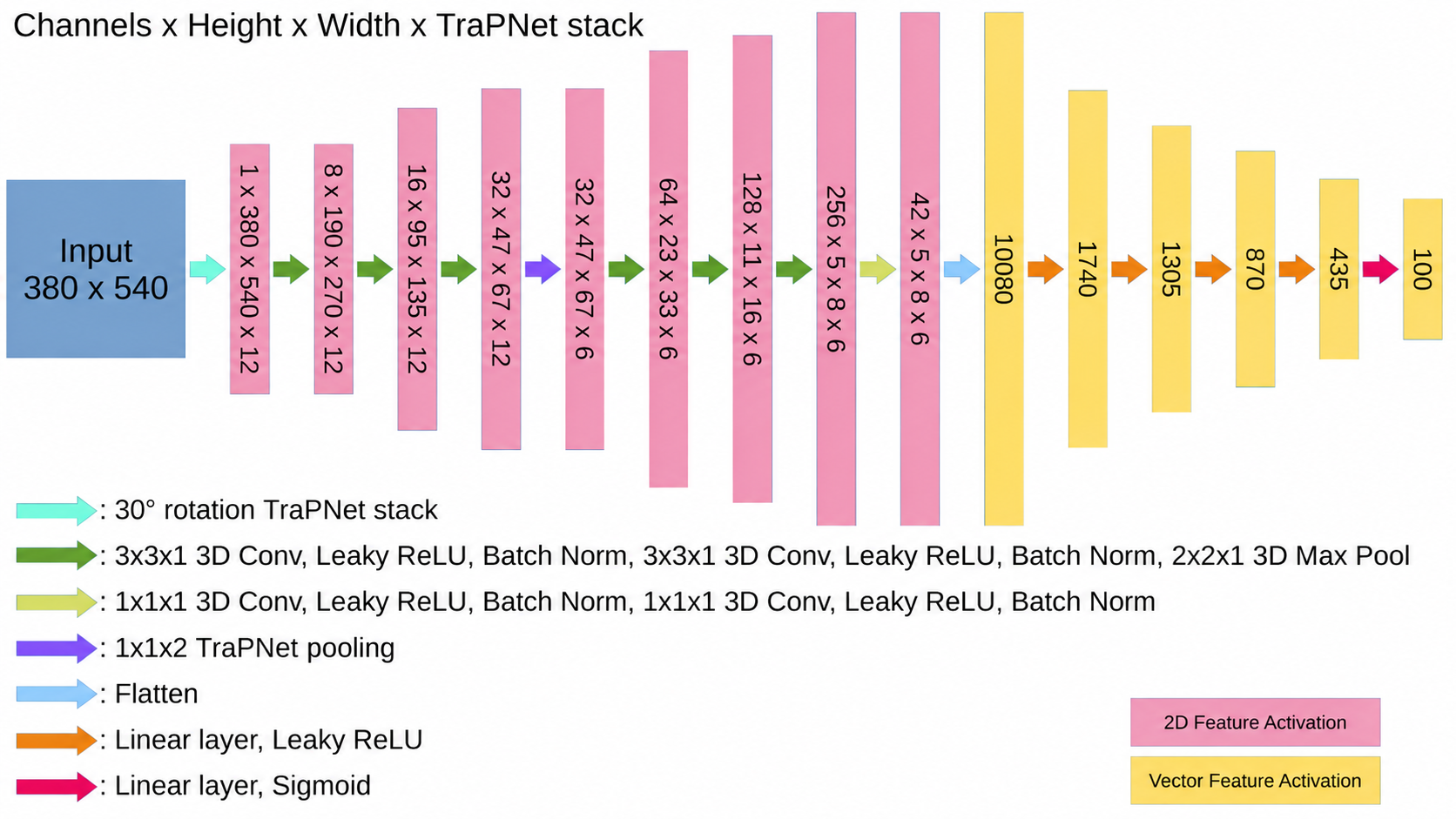}
        \caption{\ournet{}}
    \end{subfigure}
    \caption{The architectures for the baseline CNN and the \ournet{} used in Section~\ref{sec: experiments real world}.}
    \label{fig: real world architectures}
\end{figure}

\end{document}